\newtheorem{remark}[theorem]{Remark}
\newtheorem{eg}[theorem]{Example}
\DeclareMathOperator*{\rank}{rank}
\DeclareMathOperator*{\card}{card}
\DeclareMathOperator*{\Span}{Span}
\DeclareMathOperator*{\codim}{codim}
\newcommand{\ra}[1]{\renewcommand{\arraystretch}{#1}}
\def\A{\mathcal{A}}
\def\B{\boldsymbol{B}}
\def\C{\boldsymbol{C}}
\def\H{\mathcal{H}}
\def\Q{\boldsymbol{Q}}
\def\S{\mathcal{S}}
\def\V{\mathcal{V}}
\def\I{\mathcal{I}}
\def\J{\mathcal{J}}
\def\X{\mathcal{X}}
\def\Y{\mathcal{Y}}
\def\Z{\mathcal{Z}}
\def\b{\boldsymbol{b}}
\def\c{\boldsymbol{c}}
\def\e{\boldsymbol{e}}
\def\f{\boldsymbol{f}}
\def\h{\boldsymbol{h}}
\def\v{\boldsymbol{v}}
\def\y{\boldsymbol{y}}
\def\x{\boldsymbol{x}}
\def\bzeta{\boldsymbol{\zeta}}
\def\Re{\mathbb{R}}
\def\transpose{\top}
\def\0{\boldsymbol{0}}
\title{Filtrated Algebraic Subspace Clustering\thanks{This work was partially supported by grants NSF 1447822 and 1218709, and ONR N000141310116.} }
\author{Manolis C. Tsakiris \and Ren\'{e} Vidal	
	\thanks{The authors are with the Center for Imaging Science of The Johns Hopkins University. 
		(\email{m.tsakiris,rvidal}@jhu.edu).}}
\begin{document}

\maketitle

\begin{abstract}
Subspace clustering is the problem of clustering data that lie close to a union of linear subspaces. Existing algebraic subspace clustering methods are based on fitting the data with an algebraic variety and decomposing this variety into its constituent subspaces. Such methods are well suited to the case of a known number of subspaces of known and equal dimensions, where a single polynomial vanishing in the variety is sufficient to identify the subspaces. While subspaces of unknown and arbitrary dimensions can be handled using multiple vanishing polynomials, current approaches are not robust to corrupted data due to the difficulty of estimating the number of polynomials. As a consequence, the current practice is to use a single polynomial to fit the data with a union of hyperplanes containing the union of subspaces, an approach that works well only when the dimensions of the subspaces are high enough. In this paper, we propose a new algebraic subspace clustering algorithm, which can identify the subspace $\S$ passing through a point $\x$ by constructing a descending filtration of subspaces containing $\S$. First, a single polynomial vanishing in the variety is identified and used to find a hyperplane containing $\S$. After intersecting this hyperplane with the variety to obtain a sub-variety, a new polynomial vanishing in the sub-variety is found and so on until no non-trivial vanishing polynomial exists. In this case, our algorithm identifies $\S$ as the intersection of the hyperplanes identified thus far. By repeating this procedure for other points, our algorithm eventually identifies all the subspaces. Alternatively, by constructing a filtration at each data point and comparing any two filtrations using a suitable affinity, we propose a spectral version of our algebraic procedure based on spectral clustering, which is suitable for computations with noisy data. We show by experiments on synthetic and real data that the proposed algorithm outperforms state-of-the-art methods on several occasions, thus demonstrating the merit of the idea of filtrations\footnote{Partial results from this paper have been published in the conference paper \cite{Tsakiris:Asilomar14} and in the workshop paper \cite{Tsakiris:FSASCICCV15}.}.
\end{abstract}

\begin{keywords} Generalized Principal Component Analysis, Subspace Clustering, Algebraic Subspace Clustering, Subspace Arrangements, Transversal Subspaces, Spectral Clustering \end{keywords}

\begin{AMS}
	13P05, 13P25, 14M99, 68T10
\end{AMS}

\pagestyle{myheadings}
\thispagestyle{plain}

%%%%%%%%%%%%%%%%%%%%%%%%%%%%%%%
\section{Introduction}
Given a set of points lying close to a union of linear subspaces, subspace clustering refers to the problem of identifying the number of subspaces, their dimensions, a basis for each subspace, and the clustering of the data points according to their subspace membership. This is an important problem with widespread applications in computer vision \cite{Vidal:CVPR04-multiaffine}, systems theory \cite{Ma-Vidal:HSCC05} and genomics  \cite{Jiang:CAG14}. 
%In this section we briefly review the subspace clustering literature (\ref{subsection:Literature}) and
%we provide the motivation for this work (\ref{subsection:Motivation}), as well as the highlights
%of our contribution (\ref{subsection:Contribution}). We establish our notation (\ref{subsection:Notation}) 
%and conclude with the organization of the rest of the paper (\ref{subsection:Organization}).

%%%%%%%%%%%%%%%%%%%%%%

\subsection{Existing work} \label{subsection:Literature}
Over the past $15$ years, various subspace clustering methods have appeared in the literature \cite{Vidal:SPM11-SC}. Early techniques, such as \emph{K-subspaces} \cite{Bradley:JGO00,Tseng:JOTA00} or \emph{Mixtures of Probabilistic PCA} \cite{Tipping-mixtures:99,Gruber-Weiss:CVPR04}, rely on solving a non-convex optimization problem by alternating between assigning points to subspaces and re-estimating a subspace for each group of points. As such, these methods are sensitive to initialization. Moreover, these methods require a-priori knowledge of the number of subspaces and their dimensions. This motivated the development of a family of purely algebraic methods, such as \emph{Generalized Principal Component Analysis} or GPCA \cite{Vidal:PAMI05}, which feature closed form solutions for various subspace configurations, such as hyperplanes \cite{Vidal:CVPR03-gpca,Vidal:CVPR04-gpca}. A little later, ideas from spectral clustering \cite{vonLuxburg:StatComp2007} led to a family of algorithms based on constructing an affinity between pairs of points. Some methods utilize local geometric information to construct the affinities \cite{Yan:ECCV06}. Such methods can estimate the dimension of the subspaces, but cannot handle data near the intersections. Other methods use global geometric information to construct the affinities, such as the \emph{spectral curvature} \cite{Chen:IJCV09}. Such methods can handle intersecting subspaces, but require the subspaces to be low-dimensional and of equal dimensions. In the last five years, methods from sparse representation theory, such as \emph{Sparse Subspace Clustering} \cite{Elhamifar:CVPR09,Elhamifar:ICASSP10,Elhamifar:TPAMI13}, low-rank representation, such as \emph{Low-Rank Subspace Clustering} \cite{Liu:ICML10,Favaro:CVPR11,Liu:TPAMI13,Vidal:PRL14}, and least-squares, such as \emph{Least-Squares-Regression Subspace Clustering} \cite{Lu:ECCV12}, have provided new ways for constructing affinity matrices using convex optimization techniques. Among them, sparse-representation based methods have become extremely attractive because they have been shown to provide affinities with guarantees of correctness as long as the subspaces are sufficiently separated and the data are well distributed inside the subspaces \cite{Elhamifar:TPAMI13,Soltanolkotabi:AS13}. Moreover, they have also been shown to handle noise \cite{Wang-Xu:ICML13} and outliers \cite{Soltanolkotabi:AS14}. However, existing results require the subspace dimensions to be small compared to the dimension of the ambient space. This is in sharp contrast with algebraic methods, which can handle the case of hyperplanes. 

%%%%%%%%
\subsection{Motivation} \label{subsection:Motivation}
This paper is motivated by the highly complementary properties of Sparse Subspace Clustering (SSC) and Algebraic Subspace Clustering (ASC), priorly known as GPCA:\footnote{Following the convention introduced in \cite{Vidal:GPCAbook}, we have taken the liberty to change the name from GPCA to ASC for two reasons. First, to have a consistent naming convention across many subspace clustering algorithms, such as ASC, SSC, LRSC, which is indicative of the its type (algebraic, sparse, low-rank). Second, we believe that GPCA is a more general name that is best suited for the entire family of subspace clustering algorithms, which are all generalizations of PCA.} On the one hand, theoretical results for SSC assume that the subspace dimensions are small compared to the dimension of the ambient space. Furthermore, SSC is known to be very robust in the presence of noise in the data. On the other hand, theoretical results for ASC are valid for subspaces of arbitrary dimensions, with the easiest case being that of hyperplanes, provided that an upper bound on the number of subspaces is known. However, all known implementations of ASC for subspaces of different dimensions, including the recursive algorithm proposed in \cite{Huang:CVPR04-ED}, are very sensitive to noise and are thus considered impractical. As a consequence, our motivation for this work is to develop an algorithm that enjoys the strong theoretical guarantees associated to ASC, but it is also robust to noise.

%%%%%%%%%%%%%%%%%%%%%%

\subsection{Paper contributions} 
\label{subsection:Contribution}

This paper features two main contributions. 

As a first contribution, we propose a new ASC algorithm, called \emph{Filtrated Algebraic Subspace Clustering} (FASC), which can handle an unknown number of subspaces of possibly high and different dimensions, and give a rigorous proof of its correctness.\footnote{Partial results from the present paper have been presented without proofs in \cite{Tsakiris:Asilomar14}.} Our algorithm solves the following problem:

\begin{definition}[Algebraic subspace clustering problem] \label{dfn:AbstractSC}
Given a finite set of points $\X=\left\{\x_1,\dots,\x_N\right\}$ lying in general position\footnote{We will define formally the notion of points in general position in Definition \ref{dfn:GeneralPosition}.} inside a transversal subspace arrangement\footnote{We will define formally the notion of a transversal subspace arrangement in Definition \ref{dfn:transversal}.} $\A=\bigcup_{i=1}^n \S_i$, decompose $\A$ into its irreducible components, i.e., find the number of subspaces $n$ and a basis for each subspace $\S_i,i=1,\dots,n$. 		
\end{definition}
 
Our algorithm approaches this problem by selecting a suitable polynomial vanishing on the subspace arrangement $\A$. The gradient of this polynomial at a point $\x\in\A$ gives the normal vector to a hyperplane $\V_1$ containing the subspace $\S$ passing through the point. By intersecting the subspace arrangement with the hyperplane, we obtain a  subspace sub-arrangement $\A_1 \subset \A$, which lives in an ambient space $\V_1$ of dimension one less than the original ambient dimension and still contains $\S$. By choosing another suitable polynomial that vanishes on $\A_1$, computing the gradient of this new polynomial at the same point, intersecting again with the new hyperplane $\V_2$, and so on, we obtain a \emph{descending filtration} $\V_1 \supset \V_2 \supset \cdots \supset \S$ of subspace arrangements, which eventually gives us the subspace $\S$ containing the point. This happens precisely after $c$ steps, where $c$ is the codimension of $\S$, when no non-trivial vanishing polynomial exists, and the ambient space $\V_c$, which is the orthogonal complement of the span of all the gradients used in the filtration, can be identified with $\S$. By repeating this procedure at another point not in the first subspace, we can identify the second subspace and so on, until all subspaces have been identified. Using results from algebraic geometry, we rigorously prove that this algorithm correctly identifies the number of subspaces, their dimensions and a basis for each subspace.

As a second contribution, we extend the ideas behind the purely abstract FASC algorithm to a working algorithm called \emph{Filtrated Spectral Algebraic Subspace Clustering} (FSASC), which is suitable for computations with noisy data.\footnote{A preliminary description of this method appeared in a workshop paper \cite{Tsakiris:FSASCICCV15}.} The first modification is that intersections with hyperplanes are replaced by projections onto them. In this way, points in the subspace contained by the hyperplane are preserved by the projection, while other points are generally shrank. The second modification is that we compute a filtration at each data point and use the norm of point $x_j$ at the end of the filtration associated to point $x_i$ to define an affinity between these two points. The intuition is that the filtration associated to point $x_i$ will in theory preserve the norms of all points lying in the same subspace as $x_i$. This process leads to an affinity matrix of high intra-class and low cross-class connectivity, upon which spectral clustering is applied to obtain the clustering of the data. By experiments on real and synthetic data we demonstrate that the idea of filtrations leads to affinity matrices of superior quality, i.e., affinities with high intra- and low inter-cluster connectivity, and as a result to better clustering accuracy. In particular, FSASC is shown to be superior to state-of-the-art methods in the problem of motion segmentation using the Hopkins155 dataset \cite{Tron:CVPR07}.

Finally, we have taken the liberty of presenting in an appendix the foundations of the algebraic geometric theory of subspace arrangements relevant to Algebraic Subspace Clustering, in a manner that is both rigorous and accessible to the interested audience outside the algebraic geometry community, thus complementing existing reviews such as \cite{Ma:SIAM08}.

%%%%%%%%%%%
\subsection{Notation} \label{subsection:Notation}

For any positive integer $n$, we define $[n]:=\left\{1,2,\hdots,n\right\}$. We denote by $\Re$ the real numbers. The right null space of a matrix $\B$ is denoted by $\mathcal{N}(\B)$. If $\S$ is a subspace of $\Re^D$, then $\dim(\S)$ denotes the dimension of $\S$ and $\pi_{\S}: \Re^D \rightarrow \S$ is the orthogonal projection of $\Re^D$ onto $\S$. The symbol $\oplus$ denotes direct sum of subspaces. We denote the 
orthogonal complement of a subspace $\S$ in $\Re^D$ by $\S^\perp$. If $\y_1, \dots, \y_s$ are elements of $\Re^D$, we denote by $\Span(\y_1,\dots,\y_s)$ the subspace of $\Re^D$ spanned by these elements. For two vectors $\x,\y \in \Re^D$, the notation $\x \cong \y$ means that $\x$ and $\y$ are colinear.
We let $\Re[x]=\Re[x_1,\hdots,x_D]$ be the polynomial ring over the real numbers in $D$ indeterminates. We use $x$ to denote the vector of indeterminates $x=(x_1,\dots,x_D)$, while we reserve $\x$ to denote a data point $\x=(\chi_1,\dots,\chi_D)$ of $\Re^D$. We denote by $\Re[x]_\ell$ the set of all homogeneous\footnote{A polynomial in many variables is called homogeneous if all monomials appearing in the polynomial have the same degree.} polynomials of degree $\ell$ and similarly $\Re[x]_{\le \ell}$ the set of all homogeneous polynomials of degree less than or equal to $\ell$. $\Re[x]$ is an infinite dimensional real vector space, while $\Re[x]_\ell$\ and $\Re[x]_{\le \ell}$ are finite dimensional subspaces of $\Re[x]$
of dimensions $\mathcal{M}_{\ell}(D):={ \ell+D-1 \choose \ell}$ and ${\ell + D\choose \ell}$, respectively. We denote by $\Re(x)$ the field of all rational functions over $\Re$ and indeterminates $x_1,\hdots,x_D$.  If $\left\{p_1,\dots,p_s\right\}$ is a subset of $\Re[x]$, we denote by $\langle p_1,\dots,p_s \rangle$ the ideal generated by $p_1,\dots,p_s$ (see Definition \ref{dfn:ideal}). If $\A$ is a subset of $\Re^D$, we denote by $\I_{\A}$ the vanishing ideal of $\A$, i.e., the set of all elements of $\Re[x]$ that vanish on $\A$ and similarly $\I_{\A,\ell} :=\I_{\A} \cap \Re[x]_\ell$ and $\I_{\A,\le \ell} :=\I_{\A} \cap \Re[x]_{\le \ell}$. Finally, for a point $\x \in \Re^D$, and a set $\I \subset \Re[x]$ of polynomials, $\nabla \I|_{\x}$ is the set of gradients of all the elements of $\I$ evaluated at $\x$.

%%%%%%%%%%%%

\subsection{Paper organization}
\label{subsection:Organization}
The remainder of the paper is organized as follows: section \ref{section:ASC} provides a careful, yet concise review of the state-of-the-art in algebraic subspace clustering. In section \ref{section:geometricAASC} we discuss the FASC algorithm from a geometric viewpoint with as few technicalities as possible. Throughout Sections \ref{section:ASC} and \ref{section:geometricAASC}, we use a running example of two lines and a plane in $\Re^3$ to illustrate various ideas; the reader is encouraged to follow these illustrations. We save the rigorous treatment of FASC for section \ref{section:mfAASC}, which consists of the technical heart of the paper. In particular, the listing of the FASC algorithm can be found in Algorithm \ref{alg:AASC} and the theorem establishing its correctness is Theorem \ref{thm:AASC}. In section \ref{section:FSASC} we describe FSASC, which is the numerical adaptation of FASC, and compare it to other state-of-the-art subspace clustering algorithms using both synthetic and real data. Finally, appendices \ref{appendix:CA}, \ref{appendix:AG} and \ref{appendix:SA} cover basic notions and results from commutative algebra, algebraic geometry and subspace arrangements respectively, mainly used throughout section \ref{section:mfAASC}.

%%%%%%%%%%%%%%%%%%%%%%%%%%%%%%%%%%%%%%%%%%%%%%%%%%%%%%%%%
\section{Review of Algebraic Subspace Clustering (ASC)} \label{section:ASC}

This section reviews the main ideas behind ASC. For the sake of simplicity, we first discuss ASC in the case of hyperplanes (section \ref{subsection:Hyperplanes}) and subspaces of equal dimension (section \ref{subsection:Equidimensional}), for which a closed form solution can be found using a single polynomial. In the case of subspaces of arbitrary dimensions, the picture becomes more involved, but a closed form solution from multiple polynomials is still available when the number of subspaces $n$ is known (section \ref{subsection:Known}) or an  upper bound $m$ for $n$ is known (section \ref{subsection:GeneralCase}). In section \ref{subsection:RASC} we discuss one limitation of ASC due to computational complexity and a partial solution based on a recursive ASC algorithm. In section \ref{subsection:SASC-A} we discuss another limitation of ASC due to sensitivity to noise and a practical solution based on spectral clustering. We conclude in section \ref{subsection:challenges} with the main challenge that this paper aims to address.

%%%%%%%%%%%%%%%%%%%%
\subsection{Subspaces of codimension $1$} \label{subsection:Hyperplanes}

The basic principles of ASC can be introduced more smoothly by considering the case where the union of subspaces is the union of $n$ hyperplanes $\A = \bigcup_{i=1}^n \mathcal{H}_i$ in $\Re^D$. Each hyperplane $\mathcal{H}_i$ is uniquely defined by its unit length normal vector $\b_i \in \Re^D$ as $\H_i = \{ \x\in\Re^D : \b_i^\transpose \x = 0\}$. In the language of algebraic geometry this is equivalent to saying that $\mathcal{H}_i$ is the zero set of the polynomial $\b_i^{\transpose} x$ or equivalently $\mathcal{H}_i$ is the algebraic variety defined by the polynomial equation $\b_i^{\transpose} x=0$, where $\b_i^{\transpose} x= b_{i,1}x_1 + \cdots b_{i,D} x_D$ with $\b_i:=(b_{i,1},\hdots,b_{i,D})^{\transpose}, x := (x_1,\hdots,x_D)^{\transpose}$. We write this more succinctly as $\mathcal{H}_i = \mathcal{Z}(\b_i^{\transpose} x)$. We then observe that a point $\x$ of $\Re^D$ belongs to $\bigcup_{i=1}^n \mathcal{H}_i$ if and only if $\x$ is a root of the polynomial $p(x)=(\b_1^{\transpose}x)\cdots (\b_n^{\transpose}x)$, i.e., the union of hyperplanes $\A$ is the \emph{algebraic variety} $\A=\mathcal{Z}(p)$ (the zero set of $p$). Notice the important fact that $p$ is homogeneous of degree equal to the number $n$ of distinct hyperplanes and moreover it is the product of linear homogeneous polynomials $\b_i^{\transpose}x$, i.e., a product of \emph{linear forms}, each of which defines a distinct hyperplane $\mathcal{H}_i$ via the corresponding normal vector $\b_i$. 

Given a set of points $\mathcal{X}=\left\{\x_j\right\}_{j=1}^N \subset \A$ in general position in the union of hyperplanes, the classic 
%ASC 
\emph{polynomial differentiation} algorithm proposed in \cite{Vidal:CVPR04-gpca,Vidal:PAMI05} recovers the correct number of hyperplanes as well as their normal vectors by
\begin{enumerate}	
\item embedding the data into a higher-dimensional space via a polynomial map, 
\item finding the number of subspaces by analyzing the rank of the embedded data matrix, 
\item finding the polynomial $p$ from the null space of the embedded data matrix, 
\item finding the hyperplane normal vectors from the derivatives of $p$ at a nonsingular point $\x$ of $\A$.\footnote{A nonsingular point of a subspace arrangement is a point that lies in one and only one of the subspaces that constitute the arrangement.}
\end{enumerate}
More specifically, observe that the polynomial $p(x)=(\b_1^{\transpose}x)\cdots (\b_n^{\transpose}x)$ can be written as a linear combination of the set of all monomials of degree $n$ in $D$ variables, $\{x_1^{n}, x_1^{n-1} x_2, x_1^{n-1} x_3 \hdots, x_1 x_D^{n-1},\hdots,x_D^n\}$ as: 
\begin{align}
p(x) = \sum_{n_1+n_2+\cdots n_D = n} c_{n_1,n_2,\dots,n_D}x_1^{n_1}x_2^{n_2}\cdots x_D^{n_D} = \c^\top \nu_n(x).
\end{align}
In the above expression, $\c\in\Re^{M_n(D)}$ is the vector of all coefficients $c_{n_1,n_2,\dots,n_D}$, and $\nu_{n}$ is the \emph{Veronese} or \emph{Polynomial embedding} of degree $n$, as it is known in the algebraic geometry and machine learning literature, respectively. It is defined by taking a point of $\Re^D$ to a point of $\Re^{\mathcal{M}_{n}(D)}$ under the rule 
\begin{align}
(x_1,\hdots,x_D)^\transpose \stackrel{\nu_{n}}{\longmapsto} \left(x_1^{n}, x_1^{n-1} x_2, x_1^{n-1} x_3 \hdots, x_1 x_D^{n-1},\hdots,x_D^{n}\right)^{\transpose} ,
\end{align} 
where $\mathcal{M}_{n}(D)$ is the dimension of the space of homogeneous polynomials of degree $n$ in $D$ indeterminates. 
%Notice that each element of  $\Re^{\mathcal{M}_{\ell}(D)}$ uniquely defines a homogeneous polynomial in $D$ variables of degree $\ell$ and vice versa. 
The image of the data set $\X$ under the Veronese embedding is used to form the so-called \emph{embedded data matrix} 
\begin{align}
\nu_{\ell}(\mathcal{X}):=\begin{bmatrix} \nu_{\ell}(\x_1) & \cdots& \nu_{\ell}(\x_N) \end{bmatrix}^{\transpose}. 
\end{align}
It is shown in \cite{Vidal:PAMI05} that when there are sufficiently many data points that are sufficiently well distributed in the subspaces, the correct number of hyperplanes is the smallest degree $\ell$ for which $\nu_{\ell}(\mathcal{X})$ drops rank by 1: $n = \min_{\ell \geq 1} \{ \ell : \rank ( \nu_{\ell} (\X)) = M_{\ell}(D) - 1\}$. Moreover, it is shown in \cite{Vidal:PAMI05} that the polynomial vector of coefficients $\c$ is the unique up to scale vector in the one-dimensional null space of $\nu_n(\mathcal{X})$.

It follows that the task of identifying the normals to the hyperplanes from $p$ is equivalent to extracting the linear factors of $p$.
This is achieved\footnote{A direct factorization has been shown to be possible as well \cite{Vidal:CVPR03-gpca}; however this approach has not been generalized yet to the case of subspaces of different dimensions.}  by observing that if we have a point $\x \in \mathcal{H}_i - \cup_{i' \neq i} \mathcal{H}_{i'}$, then the gradient $\nabla p|_{\x}$ of $p$ evaluated at $\x$ 
\begin{align}
\nabla p |_{\x} = \sum_{j=1}^n \b_j \prod_{j'\neq j} (\b_{j'}^\transpose \x)
\end{align}
is equal to $\b_i$ up to a scale factor because $\b_i^\top \x = 0$ and hence all the terms in the sum vanish except for the $i^{th}$ (see Proposition \ref{prp:Grd} for a more general statement). Having identified the normal vectors, the task of clustering the points in $\mathcal{X}$ is straightforward.

%%%%%%%
\subsection{Subspaces of equal dimension} \label{subsection:Equidimensional}
Let us now consider a more general case, where we know that the subspaces are of equal and known dimension $d$. Such a case can be reduced to the case of hyperplanes, by noticing that a union of $n$ subspaces of dimension $d$ of $\Re^{D}$ becomes a union of hyperplanes of $\Re^{d+1}$ after a \emph{generic} projection $\pi_{d}:\Re^{D} \rightarrow \Re^{d+1}$. We note that any random orthogonal projection will almost surely preserve the number of subspaces and their dimensions, as the set of projections $\pi_{d}$ that do not have this preserving property is a zero measure subset of the set of orthogonal projections $\left\{\pi_{d} \in \Re^{(d+1) \times D}: \pi_{d} \pi_{d}^\transpose  = I_{{(d+1)}\times {(d+1)}} \right\}$.
	
When the common dimension $d$ is unknown, it can be estimated exactly by analyzing the right null space of the embedded data matrix, after projecting the data generically onto
subspaces of dimension $d'+1$, with $d'= D-1, D-2, \dots$ \cite{Vidal:PhD03}. More specifically, when $d'>d$, we have
that $\dim \mathcal{N}(\nu_{n}(\pi_{d'}(\X)))>1$, while when $d' < d$ we have $\dim \mathcal{N}(\nu_{n}(\pi_{d'}(\X)))=0$. On the other hand, the case $d'=d$ is the only case for which the null space is one-dimensional, and so $d=\left\{d': \dim \mathcal{N}(\nu_{n}(\pi_{d'}(\X)))=1\right\}$. 

Finally, when both $n$ and $d$ are unknown, one can first recover $d$ as the smallest $d'$ such that there exists an $\ell$ for which $\dim \mathcal{N}(\nu_{\ell}(\pi_{d'}(\X)))>0$, and subsequently recover $n$ as the smallest $\ell$ such that $\dim \mathcal{N}(\nu_{\ell}(\pi_{d}(\X)))>0$; see \cite{Vidal:PhD03} for further details.

%%%%%%%%
\subsection{Known number of subspaces of arbitrary dimensions} \label{subsection:Known}

When the dimensions of the subspaces are unknown and arbitrary, the problem becomes much more complicated, even if the number $n$ of subspaces is known, which is the case examined in this subsection. In such a case, a union of subspaces $\A=\S_1 \cup \cdots \cup \S_n$ of $\Re^D$, henceforth called a \emph{subspace arrangement}, is still an algebraic variety.
The main difference with the case of hyperplanes is that, in general, multiple polynomials of degree $n$ are needed to define $\A$, i.e., $\A$ is the zero set of a finite collection of homogeneous polynomials of degree $n$ in $D$ indeterminates.

\begin{eg}\label{eg:setup}
Consider the union $\A$ of a plane $\S_1$ and two lines $\S_2,\S_3$ in general position in $\Re^3$ (Fig. \ref{fig:TwoLinesOnePlane}).
\begin{figure}[!h]
	\centering
	\begin{tikzpicture}
	% two lines and a plane
	\filldraw[fill=lightgray] (-2,0,-2) -- (2,0,-2) -- (2,0,2) node[anchor=west]{$\S_1$} -- (-2,0,2) -- (-2,0,-2);
	\draw[->] (-1.5,0,1.5) -- (-1.5,1,1.5) node[anchor=east]{$\b_1$};
	\draw[->] (0,0,0)  -- (1,0,0);
	\draw[densely dotted] (0,0,0) -- (-1,0,0);
	%\draw[->] (0,0,0) -- (0,1,0) ;
	\draw[->]  (0,0,0) -- (0,0,1) ;
	\draw (0,0,0) -- (0,1.5,-1.5) node[anchor=east]{$\S_2$};
	\draw[densely dotted] (0,0,-0.5) -- (0,0.5,-0.5);
	\draw[densely dotted] (0,0,0) -- (0,0,-1);
	\draw (0,0,0) -- (-1.6,1.6,-0.8) node[anchor=west]{$\S_3$};
	\draw[densely dotted] (0,0,0) -- (-1.6,0,-0.8);
	\draw[densely dotted] (-1,0,-0.5) -- (-1,1,-0.5);
	\end{tikzpicture}
	\caption{A union of two lines and one plane in general position in $\mathbb{R}^3$.}
	\label{fig:TwoLinesOnePlane}
\end{figure}
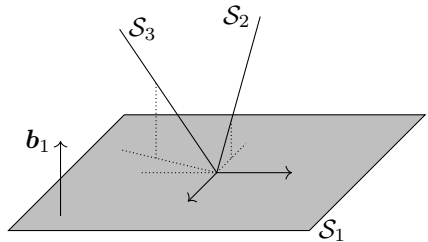
Then 
$\A=\S_1 \cup \S_2 \cup \S_3$ is the zero set of the degree-$3$ homogeneous polynomials 
\begin{align}
p_1 &:= (\b_1^\transpose x) (\b_{2,1}^\transpose x)(\b_{3,1}^\transpose x), &
p_2 &:= (\b_1^\transpose x) (\b_{2,1}^\transpose x)(\b_{3,2}^\transpose x), \\
p_3 &:= (\b_1^\transpose x) (\b_{2,2}^\transpose x)(\b_{3,1}^\transpose x), &
p_4 &:= (\b_1^\transpose x) (\b_{2,2}^\transpose x)(\b_{3,2}^\transpose x),
\end{align} 
where $\b_1$ is the normal vector to the plane $\S_1$ and
$\b_{i,j}, \, j=1,2$, are two linearly independent vectors that are orthogonal
to the line $\S_i, i=2,3$. These polynomials are linearly independent and form a basis
for the vector space $\I_{\A,3}$ of the degree-$3$ homogeneous polynomials that 
vanish on $\A$.\footnote{The interested reader is encouraged to prove this claim.}
\end{eg}

In contrast to the case of hyperplanes, when the subspace dimensions are different, 
there may exist vanishing polynomials of degree strictly less than the number of
subspaces. 

\begin{eg}\label{eg:degree2}
	Consider the setting of Example \ref{eg:setup}. Then there exists a unique up to scale vanishing polynomial of degree $2$, which is the product of two linear forms: one form is $\b_1^\transpose x$, where $\b_1$ is the normal to the plane $\S_1$, and the other linear form is $\f^\transpose x$,
	where $\f$ is the normal to the plane defined by the lines $\S_2$ and $\S_3$ (Fig. \ref{fig:normals-b-f}). 
	\begin{figure}[!h]
		\centering
		\begin{tikzpicture}
		% two lines and a plane
		\filldraw[fill=lightgray] (-2,0,-2) -- (2,0,-2) -- (2,0,2) node[anchor=west]{$\S_1$} -- (-2,0,2) -- (-2,0,-2);
		\filldraw[fill=gray] (-1.2,1.2,-0.6) -- (0,1.2,-1.2) -- (0,0,0) -- (-1.2,1.2,-0.6);
		\draw (0.25,1.425,0.5)node[anchor=east]{$\f$}; 	
		\draw (-0.3,1.4,-1)node[anchor=east]{$\H_{23}$}; 	
		\draw[->] (-0.25,0.625,-0.5) -- (0.25,1.625,0.5);
		\draw[->] (0,0,0)  -- (1,0,0);
		\draw[densely dotted] (0,0,0) -- (-1,0,0);
		\draw[->] (-1.5,0,1.5) -- (-1.5,1,1.5) node[anchor=east]{$\b_1$};
		\draw[->]  (0,0,0) -- (0,0,1) ;
		\draw (0,0,0) -- (0,1.5,-1.5) node[anchor=east]{$\S_2$};
		\draw[densely dotted] (0,0,-0.5) -- (0,0.5,-0.5);
		\draw[densely dotted] (0,0,0) -- (0,0,-1);
		\draw (0,0,0) -- (-1.6,1.6,-0.8) node[anchor=west]{$\S_3$};
		\draw[densely dotted] (0,0,0) -- (-1.6,0,-0.8);
		\draw[densely dotted] (-1,0,-0.5) -- (-1,1,-0.5);
		%\fill (3,1.2) circle[radius=2pt] node[anchor=south] {P'};
		%\draw ((2.5,-2.5)node[anchor=west, font=\footnotesize] {image plane};
		\end{tikzpicture}
		\caption{The geometry of the unique degree-$2$ polynomial 
			$p(x)=(\b_1^\transpose x)(\f^\transpose x)$ that vanishes on 
			$\S_1\cup\S_2\cup \S_3$. $\b_1$ is the normal vector to plane $\S_1$ and $\f$ is the normal vector to the plane $\H_{23}$ spanned by lines $\S_2$ and $\S_3$.}
		\label{fig:normals-b-f}
	\end{figure}
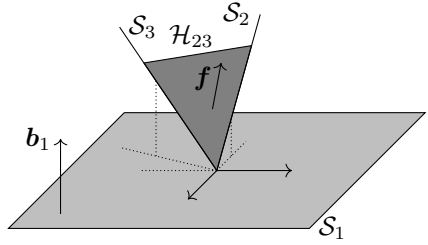
\end{eg}

As Example \ref{eg:setup} shows, all the relevant geometric information is still encoded in the factors
of \emph{some} special basis\footnote{Strictly speaking, this is not always true. However, it is true if the subspace arrangement is general enough, in particular if it is transversal; see Definition \ref{dfn:transversal} and Theorem \ref{thm:I=J}.} of $\I_{\A,n}$, that consists of degree-$n$ homogeneous
polynomials that factorize into the product of linear forms. However,
computing such a basis remains, to the best of our knowledge, an unsolved problem. Instead, one can only rely on computing (or be given) a general basis for the vector space $\I_{\A,n}$. In our example such a basis could be 
\begin{align}
p_1 + p_4, \, \, p_1 - p_4, \, \, p_2 + p_3, \, \, p_2 - p_3\end{align} and it can be seen that none of these polynomials is factorizable
into the product of linear forms. This 
difficulty was not present in the case of hyperplanes, because there was only one 
vanishing polynomial (up to scale) of degree $n$ and it had to be factorizable.

In spite of this difficulty, a solution can still be achieved in an elegant fashion by resorting to polynomial differentiation. The key fact that allows this approach is that any homogeneous polynomial $p$ of degree $n$ that vanishes on the subspace arrangement $\A$ is a  linear combination of vanishing polynomials, each of which is a product of linear forms, with each distinct subspace contributing a vanishing linear form in every product (Theorem \ref{thm:I=J}). As a consequence (Proposition \ref{prp:Grd}), the gradient of $p$ evaluated at some point $\x \in \S_i - \cup_{i' \neq i} \S_{i'}$ lies in $\S_i^{\perp}$ and the linear span of the gradients at $\x$ of all such $p$ is precisely equal to $\S_i^{\perp}$. We can thus recover $\S_i$, remove it from $\A$ and then repeat the procedure to identify all the remaining subspaces.  As stated in Theorem \ref{thm:ASC}, this process is provably correct as long as the subspace arrangement $\A$ is transversal, as defined next.
\begin{definition}[Transversal subspace arrangement \cite{Derksen:JPAA07}]\label{dfn:transversal}
\!\!\! A subspace arrangement $\A = \bigcup_{i=1}^n \S_i \subset \Re^D$ is called transversal, if for any subset $\mathfrak{I}$ of $[n]$, the codimension of $\bigcap_{i \in \mathfrak{I}} \S_i$ is the minimum between $D$ and the sum of the codimensions of all $\S_i, \, i \in \mathfrak{I}$.
\end{definition} 

\begin{remark}Transversality is a geometric condition on the subspaces, which in particular requires the dimensions of all possible intersections among subspaces to be as small as the dimensions of the subspaces allow (see Appendix \ref{appendix:SA} for a discussion).
	\end{remark}

\smallskip
\begin{theorem}[ASC by polynomial differentiation when $n$ is known, \cite{Vidal:PAMI05,Ma:SIAM08}] \label{thm:ASC}
Let $\A = \bigcup_{i=1}^n \mathcal{S}_i$ be a transversal subspace arrangement of $\mathbb{R}^D$, let $\x \in \S_i - \bigcup_{i' \neq i} \S_{i'}$ be a nonsingular point in $\A$, and let $\I_{\A,n}$ be the vector space of all degree-$n$ homogeneous polynomials that vanish on $\A$. Then $\S_i$ is the orthogonal complement of the subspace spanned by all vectors of the form $\nabla p|_{\x}$, where $p \in \I_{\A,n}$, i.e., $\S_i = \Span\left( \nabla \I_{\A,n}|_{\x}\right)^\perp$. 
\end{theorem}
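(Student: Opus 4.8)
The plan is to prove the equivalent statement $\Span\bigl(\nabla\I_{\A,n}|_\x\bigr)=\S_i^\perp$; since $\S_i$ is a linear subspace of $\Re^D$, taking orthogonal complements then yields $\S_i=\Span(\nabla\I_{\A,n}|_\x)^\perp$ as claimed. I would establish the two inclusions separately, and the inclusion $\Span(\nabla\I_{\A,n}|_\x)\subseteq\S_i^\perp$ is the easy one: fixing $p\in\I_{\A,n}$, one appeals to Theorem~\ref{thm:I=J} to write $p$ as a linear combination of products of $n$ linear forms, each product containing a linear form that vanishes on $\S_i$; differentiating such a product at $\x\in\S_i$ by the product rule, every term that does not differentiate the $\S_i$-vanishing factor retains that factor evaluated at $\x$, hence vanishes, so only the term that differentiates it survives and it is a scalar multiple of the corresponding normal vector, which lies in $\S_i^\perp$ (this is precisely Proposition~\ref{prp:Grd}). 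By linearity $\nabla p|_\x\in\S_i^\perp$. (One can also obtain this inclusion directly from calculus, since $p$ vanishes identically on the linear space $\S_i$, forcing $\nabla p|_\x\perp\S_i$ for every $\x\in\S_i$; transversality, via Theorem~\ref{thm:I=J}, is what makes the more algebraic argument available.)

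The substantive direction is $\S_i^\perp\subseteq\Span(\nabla\I_{\A,n}|_\x)$, and here I would exhibit, for each $\b\in\S_i^\perp$, an explicit degree-$n$ polynomial in $\I_{\A,n}$ whose gradient at $\x$ is a positive multiple of $\b$. Since $\x$ is a nonsingular point of $\A$, it lies outside every $\S_{i'}$ with $i'\neq i$, so $\f_{i'}:=\pi_{\S_{i'}^\perp}(\x)\neq\0$; then $\f_{i'}\in\S_{i'}^\perp$ and $\f_{i'}^\transpose\x=\|\f_{i'}\|^2>0$. Setting
\[
p_\b(x):=(\b^\transpose x)\prod_{i'\neq i}(\f_{i'}^\transpose x),
\]
one checks that $p_\b$ is homogeneous of degree $n$ and vanishes on $\A$ --- on $\S_i$ because $\b^\transpose x$ vanishes, and on each $\S_{i'}$ with $i'\neq i$ because $\f_{i'}^\transpose x$ vanishes --- so $p_\b\in\I_{\A,n}$; in fact $p_\b$ is one of the fully factorizable vanishing polynomials described in Theorem~\ref{thm:I=J}. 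Differentiating at $\x$ and using $\b^\transpose\x=0$ annihilates every term that differentiates some $\f_{i'}^\transpose x$, leaving $\nabla p_\b|_\x=\bigl(\prod_{i'\neq i}\f_{i'}^\transpose\x\bigr)\,\b$, a strictly positive multiple of $\b$. Hence $\b\in\Span(\nabla\I_{\A,n}|_\x)$, and letting $\b$ range over $\S_i^\perp$ completes the inclusion.

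The only step that needs genuine care --- the ``main obstacle'', modest as it is --- is the nonvanishing $\prod_{i'\neq i}\f_{i'}^\transpose\x\neq0$, which is exactly where the hypothesis that $\x$ is a nonsingular point of $\A$ is used; everything else is routine product-rule bookkeeping, and the argument would fail for a point lying in two or more of the $\S_i$. A pleasant byproduct is that every normal direction of $\S_i$ is already realized by the gradient of a single product of linear forms, so the span in the statement is attained within the factorizable vanishing polynomials, consistent with the discussion preceding the theorem.
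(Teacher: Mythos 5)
Your proof is correct and follows the same plan the paper illustrates in Example~\ref{eg:ASCtheorem}: split into the two inclusions, deriving $\nabla\I_{\A,n}|_\x \subset \S_i^\perp$ from Theorem~\ref{thm:I=J} together with Proposition~\ref{prp:Grd} (or, as you observe, purely from the calculus fact that a function vanishing on the linear space $\S_i$ has gradient orthogonal to $\S_i$ there), and establishing the reverse inclusion by exhibiting factorizable vanishing polynomials whose gradients realize every normal direction. The paper states the theorem as prior work and only walks through the example, where the hard direction is carried out by plugging a \emph{generic} point $\x_2$ into a fixed factored basis $\{p_j\}$ of $\I_{\A,n}$; your formalization improves on this slightly by tailoring the witness polynomial $p_\b$ to the point itself, choosing the factors $\f_{i'} := \pi_{\S_{i'}^\perp}(\x)$ so that $\prod_{i'\neq i}\f_{i'}^\transpose\x = \prod_{i'\neq i}\|\f_{i'}\|^2 > 0$ holds automatically for \emph{every} nonsingular $\x$, not merely a generic one. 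Your identification of nonsingularity ($\x\notin\S_{i'}$ for $i'\neq i$, hence $\f_{i'}\neq\0$) as the sole crux is accurate, and your remark that the hard direction is realized entirely within the factorizable vanishing polynomials matches the discussion preceding the theorem. No gaps.
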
 

Theorem \ref{thm:ASC} and its proof are illustrated in the next example.

\begin{eg}\label{eg:ASCtheorem}
	Consider Example \ref{eg:setup} and recall that
	$p_1 = (\b_1^\transpose x) (\b_{2,1}^\transpose x)(\b_{3,1}^\transpose x)$,
	$p_2 = (\b_1^\transpose x) (\b_{2,1}^\transpose x)(\b_{3,2}^\transpose x)$,
	$p_3 = (\b_1^\transpose x) (\b_{2,2}^\transpose x)(\b_{3,1}^\transpose x)$,
	and
	$p_4 = (\b_1^\transpose x) (\b_{2,2}^\transpose x)(\b_{3,2}^\transpose x)$.
	Let $\x_2$ be a generic point in $\S_2 - \S_1 \cup \S_3$. Then
	\begin{align}
	\nabla p_1|_{\x_2} \cong \nabla p_2|_{\x_2} \cong \b_{2,1},\, \, \, \nabla p_3|_{\x_2} \cong \nabla p_4|_{\x_2} \cong \b_{2,2}.
	\end{align} 
	Hence $\b_{2,1},\b_{2,2} \in \Span( \nabla \I_{\A,3}|_{\x_2})$ and so 
	$\S_2 \supset \Span \left( \nabla \I_{\A,3}|_{\x_2}\right)^\perp$. Conversely, let $p \in \I_{\A,3}$. Then there exist $\alpha_i\in\Re,i=1,\dots,4$, such that $p = \sum_{i=1}^{4} \alpha_i p_i$ and so 
	\begin{align}
	\nabla p|_{\x_2} = \sum_{i=1}^{4} \alpha_i \nabla p_i|_{\x_2} \in \Span(\b_{2,1},\b_{2,2})=\S_2^\perp.
	\end{align} Hence $\nabla \I_{\A,3}|_{\x_2} \subset \S_2^\perp$, and so
	$\Span(\nabla \I_{\A,3}|_{\x_2})^\perp \supset \S_2$.
\end{eg}

%%%%%%%%%%
\subsection{Unknown number of subspaces of arbitrary dimensions} \label{subsection:GeneralCase}

As it turns out, when the number of subspaces $n$ is unknown, but an upper
bound $m \ge n$ is given, one can obtain the decomposition of the subspace
arrangement from the gradients of the vanishing polynomials of degree $m$,
precisely as in Theorem \ref{thm:ASC}, simply by replacing $n$ with $m$.

\begin{theorem}[ASC by polynomial differentiation when an upper bound on $n$ is known, \cite{Vidal:PAMI05,Ma:SIAM08}]  
\label{thm:ubASC}
	Let $\A = \bigcup_{i=1}^n \mathcal{S}_i$ be a transversal subspace arrangement of $\mathbb{R}^D$, let $\x \in \S_i - \bigcup_{i' \neq i} \S_{i'}$ be a nonsingular point in $\A$, and let $\I_{\A,m}$ be the vector space of all degree-$m$ homogeneous polynomials that vanish on $\A$, where $m \ge n$. Then $\S_i$ is the orthogonal complement of the subspace spanned by all vectors of the form $\nabla p|_{\x}$, where $p \in \I_{\A,m}$, i.e., $\S_i = \Span\left( \nabla \I_{\A,m}|_{\x}\right)^\perp$.  
\end{theorem}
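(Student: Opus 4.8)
The plan is to bootstrap from Theorem~\ref{thm:ASC}, which is precisely this statement in the case $m = n$. I would prove the equality $\Span(\nabla \I_{\A,m}|_{\x}) = \S_i^\perp$ by establishing the two inclusions separately and then taking orthogonal complements, since $\S_i = (\S_i^\perp)^\perp$.

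For ``$\subseteq$'', I would note that this direction requires no hypothesis on the degree whatsoever: any $p \in \I_{\A,m}$ vanishes identically on the linear subspace $\S_i \subseteq \A$, and $\x \in \S_i$, so for every $\v \in \S_i$ the single-variable function $t \mapsto p(\x + t\v)$ is identically zero; differentiating at $t=0$ forces $\nabla p|_{\x}^{\transpose}\v = 0$, hence $\nabla p|_{\x} \in \S_i^\perp$. Therefore $\Span(\nabla \I_{\A,m}|_{\x}) \subseteq \S_i^\perp$. This is exactly the elementary observation behind the easy inclusion in Theorem~\ref{thm:ASC}, illustrated in Example~\ref{eg:ASCtheorem}.

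For ``$\supseteq$'', by Theorem~\ref{thm:ASC} we have $\S_i^\perp = \Span(\nabla \I_{\A,n}|_{\x})$, so it suffices to show $\Span(\nabla \I_{\A,n}|_{\x}) \subseteq \Span(\nabla \I_{\A,m}|_{\x})$. Here I would use a degree-raising trick: assume $m > n$ (the case $m=n$ is Theorem~\ref{thm:ASC} itself), pick a linear form $\ell$ with $\ell(\x) = 1$ (possible since the nonsingular point $\x$ is nonzero), and for $q \in \I_{\A,n}$ set $p := \ell^{\,m-n} q$. Then $p$ is homogeneous of degree $m$ and still vanishes on $\A$ because $\I_{\A}$ is an ideal, so $p \in \I_{\A,m}$. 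The product rule together with $q(\x) = 0$ (as $\x \in \A$) gives
\begin{align}
\nabla p|_{\x} \;=\; \ell(\x)^{m-n}\,\nabla q|_{\x} \;+\; (m-n)\,q(\x)\,\ell(\x)^{m-n-1}\,\nabla \ell|_{\x} \;=\; \nabla q|_{\x},
\end{align}
so $\nabla q|_{\x} \in \Span(\nabla \I_{\A,m}|_{\x})$ for every $q \in \I_{\A,n}$. Combining the two inclusions and taking orthogonal complements yields $\S_i = \Span(\nabla \I_{\A,m}|_{\x})^\perp$.

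\textbf{Where the real difficulty lies.} I do not expect a serious obstacle here: once Theorem~\ref{thm:ASC} is in hand, the passage to $m > n$ is purely formal, and transversality enters \emph{only} through Theorem~\ref{thm:ASC}. If one preferred a proof of ``$\subseteq$'' that mirrors the structural description of the vanishing ideal rather than quoting the gradient computation, one could instead invoke Theorem~\ref{thm:I=J}: for a transversal arrangement $\I_{\A}$ is generated in degree $n$, so any $p \in \I_{\A,m}$ can be written $p = \sum_k g_k q_k$ with $q_k \in \I_{\A,n}$ and $g_k$ homogeneous of degree $m-n$, whence $\nabla p|_{\x} = \sum_k g_k(\x)\,\nabla q_k|_{\x} \in \Span(\nabla \I_{\A,n}|_{\x}) = \S_i^\perp$ using $q_k(\x)=0$. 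This second viewpoint also makes transparent why $m \ge n$ is the natural hypothesis: for $m < n$ the space $\I_{\A,m}$ can be far too small (e.g., trivial for an arrangement of hyperplanes) to recover $\S_i$.
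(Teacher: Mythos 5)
The paper does not give its own proof of Theorem~\ref{thm:ubASC}; it is stated as a known result with a citation to \cite{Vidal:PAMI05,Ma:SIAM08}, so there is no in-paper argument to compare against. On its own merits, your reduction is correct. The ``$\subseteq$'' inclusion is exactly Proposition~\ref{prp:Grd} (any vanishing polynomial has gradient in $\S_i^\perp$ at a point of $\S_i$), and indeed needs no hypothesis on $m$ or transversality. For ``$\supseteq$'' your degree-raising step is sound: with $\ell(\x)=1$ (legitimate since a nonsingular point of a nontrivial arrangement is nonzero) and $q(\x)=0$, the product rule collapses to $\nabla(\ell^{m-n}q)|_{\x}=\nabla q|_{\x}$, so $\Span(\nabla\I_{\A,n}|_{\x})\subseteq\Span(\nabla\I_{\A,m}|_{\x})$, and Theorem~\ref{thm:ASC} closes the loop. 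One small calibration in your closing remark: the fact that $\I_{\A}$ is generated in degree $\le n$ is Proposition~\ref{prp:Regularity}, while the stronger structure you then use — that every element of $\I_{\A,m}$ is a combination of degree-$n$ products of linear forms (one factor per subspace) — is Theorem~\ref{thm:I=J}; it is the latter, not mere degree-$n$ generation, that lets you write $p=\sum_k g_k q_k$ with each $q_k\in\I_{\A,n}$ vanishing at $\x$ so the product rule drops the $\nabla g_k$ terms. That alternative route is essentially how the paper argues in Example~\ref{eg:ASCtheorem} and in the proof of Proposition~\ref{prp:Grd}, and it is where transversality genuinely enters; your primary route instead hides transversality inside the black-box invocation of Theorem~\ref{thm:ASC}. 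Either way the proof stands.
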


\begin{eg}
	Consider the setting of Examples \ref{eg:setup} and \ref{eg:degree2}. Suppose that we have the upper bound $m=4$ on the number of underlying subspaces $(n=3)$. It can be shown that the vector space $\I_{\A,4}$ has\footnote{This can be verified by applying the dimension formula of Corollary 3.4 in \cite{Derksen:JPAA07}.} dimension $8$ and is spanned by the polynomials	 
	\begin{align}
	q_1 &:= (\b_1^\transpose x) (\f^\transpose x)^3,&
	q_5 &:= (\b_1^\transpose x) (\f^\transpose x)(\b_{3}^\transpose x)^2, \\
	q_2 &:= (\b_1^\transpose x)^2 (\f^\transpose x)^2&
	q_6 &:= (\b_1^\transpose x) (\b_2^\transpose x)^2(\f^\transpose x), \\
	q_3 &:= (\b_1^\transpose x)^3 (\f^\transpose x), &
	q_7 &:=(\b_1^\transpose x) (\b_{2}^\transpose x)^2(\b_3^\transpose x), \\
	q_4 &:= (\b_1^\transpose x) (\f^\transpose x)^2(\b_{3}^\transpose x), &
	q_8 &:= (\b_1^\transpose x) (\b_{2}^\transpose x)(\b_{3}^\transpose x)^2,
	\end{align} where $\b_1$ is the normal to $\S_1$, $\f$ is the normal to the plane defined by lines $\S_2$ and $\S_3$, and $\b_i$ is a normal to line $\S_i$ that is linearly independent from $\f$, for $i=2,3$. Hence $\S_1 = \Span(\b_1)^\perp$ and $\S_i=\Span(\f,\b_i)^\perp, i=2,3$. Then for a generic point $\x_2 \in \S_2 - \S_1 \cup \S_3$, we have that 
	\begin{align}
	& \nabla q_1|_{\x_2}=\nabla q_2|_{\x_2}=\nabla q_4|_{\x_2}=\nabla q_6|_{\x_2}=\nabla q_7|_{\x_2}=0,\\ 
	& \nabla q_3|_{\x_2}\cong\nabla q_5|_{\x_2}\cong \f, \, \, \, \nabla q_8|_{\x_2} \cong \b_2.
	\end{align} 
	Hence $\f,\b_2 \in \Span(\nabla \I_{\A,4}|_{\x_2})$ and so 
	$\S_2 \supset \Span(\nabla\I_{\A,4}|_{\x_2})^\perp$. Similarly to 
	Example \ref{eg:ASCtheorem}, since every element of $\I_{\A,4}$ is a 
	linear combination of the $q_\ell,\ell=1,\dots,8$, we have 
	$\S_2 = \Span(\nabla\I_{\A,4}|_{\x_2})^\perp$.	
\end{eg}

\begin{remark}
Notice that both Theorems \ref{thm:ASC} and \ref{thm:ubASC} are statements about the abstract subspace arrangement $\A$, i.e., no finite subset $\X$ of $\A$ is explicitly considered. To pass from $\A$ to $\X$ and get similar Theorems, we need to require $\X$ to be \emph{in general position} in $\A$, in some suitable sense. As one may suspect, this notion of general position must entail that 
polynomials of degree $n$ for Theorem \ref{thm:ASC}, or of degree $m$ for Theorem \ref{thm:ubASC}, that vanish on $\X$ must also vanish on $\A$ and vice versa. In that case, we can compute the required basis for $\I_{\A,n}$, simply by computing a basis for $\I_{\X,n}$, by means of the Veronese embedding described in section \ref{subsection:Hyperplanes}, and similarly for $\I_{\A,m}$. We will make the notion of general position precise in Definition \ref{dfn:GeneralPosition}.	
\end{remark}

\subsection{Computational complexity and recursive ASC} \label{subsection:RASC}
Although Theorem \ref{thm:ubASC} is quite satisfactory from a theoretical point of view, using an upper bound $m\geq n$ for the number of subspaces comes with the practical disadvantage that the dimension of the Veronese embedding, $M_m(D)$, grows exponentially with $m$. In addition, increasing $m$ also increases the number of polynomials in the null space of $\nu_m(\X)$, some which will eventually, as $m$ becomes large, be polynomials that simply fit the data $\X$ but do not vanish on $\A$. To reduce the computational complexity of the polynomial differentiation algorithm, one can consider vanishing polynomials of smaller degree, $m < n$, as suggested by Example \ref{eg:degree2}. While such vanishing polynomials may not be sufficient to cluster the data into $n$ subspaces, they still provide a clustering of the data into $m' \le n$ subspaces. We can then look at each of these $m'$ clusters and see if they can be partitioned further. For instance, in Example \ref{eg:degree2}, we can first cluster the data into two planes, the plane $\S_1$ and the plane $\H_{23}$ containing the two lines $\S_2$ and $\S_3$, and then partition the data lying in $\H_{23}$ into the two lines $\S_2$ and $\S_3$. This leads to the recursive ASC algorithm proposed in \cite{Huang:CVPR04-ED,Vidal:PAMI05}, which is based on finding the polynomials of the smallest possible degree $m$ that vanish on the data, computing the gradients of these vanishing polynomials to cluster the data into $m'\le n$ groups, and then repeating the procedure for each group until the data from each group can be fit by polynomials of degree $1$, in which case each group lies in single linear subspace. While this recursive ASC algorithm is very intuitive, no rigorous proof of its correctness has appeared in the literature. In fact, there are examples where this recursive method provably fails in the sense of producing \emph{ghost subspaces} in the decomposition of $\A$. For instance, when partitioning the data from Example \ref{eg:degree2} into two planes $\S_1$ and $\H_{23}$, we may assign the data from the intersection of the two planes to $\H_{23}$. If this is the case, when trying to partition further the data of $\H_{23}$, we will obtain three lines: $\S_2$, $\S_3$ and the ghost line $\S_4=\S_1 \cap \H_{23}$ (see Fig. \ref{fig:ghost}).

%An approach alternative to Theorem \ref{thm:ubASC} is the recursive ASC algorithm proposed in \cite{Huang:CVPR04-ED,Vidal:PAMI05}, known as Recursive-GPCA (RGPCA). RGPCA is a divide-and-conquer algorithm, alternating between partitioning the set to subsets according to their membership to hyperplanes defined by gradients of vanishing polynomials, and then repeating the partitioning within each subset. RGPCA is intuitive in the sense that no rigorous proof of correctness has been given. In fact, there are examples where the method provably fails, in the sense of producing \emph{ghost-subspaces} in the decomposition of $\A$. An algorithmic description of RGPCA can be found in \cite{Huang:CVPR04-ED,Vidal:PAMI05}, while an abstract description is given in \cite{Tsakiris:Asilomar14}.

%%%%%%%%%%%%%%%%
\subsection{Instability in the presence of noise and spectral ASC}\label{subsection:SASC-A}
Another important issue with Theorem \ref{thm:ubASC} from a practical standpoint is its sensitivity to noise. More precisely, when implementing Theorem \ref{thm:ubASC} algorithmically, one is required to estimate the dimension of the null space of $\nu_m(\X)$, which is an extremely challenging problem in the presence of noise. Moreover, small errors in the estimation of $\dim \mathcal{N}(\nu_m(\X))$ have been observed to have dramatic effects in the quality of the clustering, thus rendering algorithms that are directly based on Theorem \ref{thm:ubASC} unstable. While the recursive ASC algorithm of \cite{Huang:CVPR04-ED,Vidal:PAMI05} is more robust than such algorithms, it is still sensitive to noise, as considerable errors may occur in the partitioning process. Moreover, the performance of the recursive algorithm is always subject to degradation due to the potential occurrence of ghost subspaces.

To enhance the robustness of ASC in the presence of noise and obtain a stable working algebraic algorithm, the standard practice has been to apply a variation of the polynomial differentiation algorithm based on spectral clustering \cite{Vidal:PhD03}. More specifically, given noisy data $\X$ lying close to a union of $n$ subspaces $\A$, one computes an approximate vanishing polynomial $p$ whose coefficients are given by the right singular vector of $\nu_n(\X)$ corresponding to its smallest singular value. Given $p$, one computes the gradient of $p$ at each point in $\X$ (which gives a normal vector associated with each point in $\X)$, and builds an affinity matrix between points $\x_j$ and $\x_{j'}$ as the cosine of the angle between their corresponding normal vectors, i.e.,
\begin{align}
\C_{jj',\text{angle}} = \Big | \Big \langle \frac{\nabla p|_{\x_j}}{||\nabla p|_{\x_j}||},  \frac{\nabla p|_{\x_{j'}}}{||\nabla p|_{\x_{j'}}||} \Big \rangle \Big |. \label{eq:ABA}
\end{align} 
This affinity is then used as input to any spectral clustering algorithm (see \cite{vonLuxburg:StatComp2007} for a tutorial on spectral clustering) to obtain a clustering $\X = \bigcup_{i=1}^n\X_i$. We call this Spectral ASC method with \emph{angle-based affinity} as SASC-A.

To gain some intuition about $\C$, suppose that $\A$ is a union of $n$ hyperplanes and that there is no noise in the data. Then $p$ must be of the form $p(x)=(\b_1^\transpose x)\cdots (\b_n^\transpose x)$. In this case $\C_{jj'}$ is simply the cosine of the angle between the normals to the hyperplanes that are associated with points $\x_j$ and $\x_{j'}$. If both points lie in the same hyperplane, their normals must be equal, and hence $\C_{jj'} = 1$. Otherwise, $\C_{jj'} < 1$ is the cosine of the angles between the hyperplanes. Thus, assuming that the smallest angle between any two hyperplanes is sufficiently large and that the points are well distributed on the union of the hyperplanes, applying spectral clustering to the affinity matrix $\C$ will in general yield the correct clustering.
 
Even though SASC-A is much more robust in the presence of noise than purely algebraic methods for the case of a union of hyperplanes, it is fundamentally limited by the fact that, theoretically, it applies only to unions of hyperplanes. Indeed, if the orthogonal complement of a subspace $\S$ has dimension greater than $1$, there may be points $\x, \x' $ inside $\S$ such that the angle between $\nabla p|_{\x}$ and $\nabla p|_{\x'}$ is as large as $90^\circ$. In such instances, points associated to the same subspace may be weakly connected and thus there is no guarantee for the success of spectral clustering.

%%%%%%%%%%%%%%%%
\subsection{The challenge} \label{subsection:challenges}
As the discussion so far suggests, the state of the art in ASC can be summarized as follows:
\begin{enumerate}
	\item A complete closed form solution to the abstract subspace clustering problem (Problem   \ref{dfn:AbstractSC}) exists and can be found using the polynomial differentiation algorithm implied by Theorem \ref{thm:ubASC}.	
	\item All known algorithmic variants of the polynomial differentiation algorithm are sensitive to noise, especially for subspaces of arbitrary dimensions.
	\item The recursive ASC algorithm described in section \ref{subsection:RASC} does not in 
	general solve the abstract subspace clustering problem (Problem \ref{dfn:AbstractSC}), and is in addition sensitive to noise.
	\item The spectral algebraic algorithm described in section \ref{subsection:SASC-A} is less sensitive to noise, but is theoretically justified only for unions of hyperplanes. 
\end{enumerate}  

The above list reveals the challenge that we will be addressing in the rest of this paper: Develop an ASC algorithm, that solves the abstract subspace clustering problem for perfect data, while at the same time it is robust to noisy data.

%%%%%%%%%%%%%%%%%%%%%%%%%%%%%%%%%%%%%%%%%%%%%%%%%%%%%%%%%%%%%%FASC%%%%%%%%%
\section{Filtrated Algebraic Subspace Clustering - Overview}\label{section:geometricAASC}
This section provides an overview of our proposed \emph{Filtrated Algebraic Subspace Clustering} (FASC) algorithm, which conveys the geometry of the key idea of this paper while keeping technicalities at a minimum. To that end, let us pretend for a moment that we have access to the entire set $\A$, so that we can manipulate it via set operations such as taking its intersection with some other set.
Then the idea behind FASC is to construct a \emph{descending filtration} of the given subspace arrangement $\A \subset \Re^D$, i.e., a sequence of inclusions of subspace arrangements, that starts with
$\A$ and terminates after a finite number of $c$ steps with one of the irreducible components $\S$ of $\A$:\footnote{We will also be using the notation
$\A =: \A_0 \leftarrow \A_1 \leftarrow \A_2 \leftarrow \cdots$,
 where the arrows denote embeddings.}
\begin{align}
\A =: \A_0 \supset \A_1 \supset \A_2 \supset \cdots \supset \A_c=\S.
\end{align} 
The mechanism for generating such a filtration is to construct a strictly descending filtration of
intermediate ambient 
spaces, i.e., 
\begin{align}
\V_0 \supset \V_1 \supset \V_2 \supset \cdots, \label{eq:AmbientSpaces}
\end{align} 
such that $\V_0=\Re^D$, $\dim (\V_{s+1}) = \dim (\V_{s})-1$, and
each $\V_s$ contains the same fixed irreducible component $\S$ of $\A$. 
Then the filtration of
subspace arrangements is obtained by intersecting $\A$ with the filtration of ambient spaces,
i.e., 
\begin{align}
\A_0 := \A \supset \A_1 : = \A \cap \V_1 \supset \A_2 := \A \cap \V_2 \supset \cdots.
\end{align} This can be seen equivalently as constructing a descending filtration of pairs
$(\V_s, \A_s)$, where $\A_s$ is a subspace arrangement of $\V_s$:
\begin{align}
(\Re^{D},\A) \leftarrow (\V_1 \cong \Re^{D-1}, \A_1) \leftarrow (\V_2 \cong \Re^{D-2}, \A_2) \leftarrow \cdots .
\end{align} 

But how can we construct a filtration of ambient spaces \eqref{eq:AmbientSpaces}, that satisfies
the apparently strong condition $\V_s \supset \S, \, \forall s$? The answer lies at the heart of ASC: to construct
$\V_1$ pick a suitable polynomial $p_1$ vanishing on $\A$ and evaluate its gradient at a nonsingular point $\x$ of $\A$. Notice that $\x$ will lie in some irreducible component $\S_{\x}$ of $\A$. Then take $\V_1$ to be the hyperplane of $\Re^D$ defined by the gradient of $p_1$ at $\x$. We know from Proposition \ref{prp:Grd} that $\V_1$ must contain $\S_{\x}$. To construct $\V_2$ we apply essentially the same procedure on the pair $(\V_1,\A_1)$: take a suitable polynomial $p_2$ that vanishes on $\A_1$, but does not vanish on $\V_1$, and take $\V_2$ to be the hyperplane of $\V_1$ defined by $\pi_{\V_1}\left(\nabla p_2|_{\x}\right)$. As we will show in section \ref{section:mfAASC}, it is always the case that $\pi_{\V_1}\left(\nabla p_2|_{\x}\right) \perp \S_{\x}$ and so $\V_2 \supset \S_{\x}$. Now notice, that after precisely $c$ such steps, where $c$ is the codimension of $\S_{\x}$, $\V_c$ will be a $(D-c)$-dimensional linear subspace of $\Re^D$ that by construction contains $\S_{\x}$. But $\S_{\x}$ is also a $(D-c)$-dimensional subspace and the only possibility is that $\V_c = \S_{\x}$. Observe also that this is precisely the step where the filtration naturally terminates, since there is no polynomial that vanishes on $\S_{\x}$ but does not vanish on $\V_c$.  The relations between the intermediate ambient spaces and subspace arrangements are illustrated in the  commutative diagram of \eqref{eq:commutative-diagram}. The filtration in \eqref{eq:commutative-diagram} will yield the irreducible component $\S:=\S_{\x}$ of $\A$ that contains the nonsingular point $\x \in \A$ that we started with. We will be referring to such a point as \emph{the reference point}. We can also take without loss of generality $\S_{\x}=\S_1$. Having identified $\S_1$, we can pick a nonsingular point $\x' \in \A-\S_{\x}$ and construct a filtration of $\A$ as above with reference point $\x'$. Such a filtration will terminate with the irreducible component $\S_{\x'}$ of $\A$ containing $\x'$, which without loss of generality we take to be $\S_2$. Picking a new reference point $\x'' \in \A-\S_{\x}\cup\S_{\x'}$ and so on, we can identify the entire list of irreducible components of $\A$, as described in Algorithm \ref{alg:geometricAASC}.

\begin{equation}	
\begin{tikzcd} \label{eq:commutative-diagram}
	\Re^D \arrow[r,leftarrow,"\cong"]
	& \V_0 \arrow[r,leftarrow] \arrow[d,leftarrow] & \A_0 \arrow[r,leftarrow] \arrow[d,leftarrow] & \S_{\x} \\
		\Re^{D-1} \arrow[r,leftarrow,"\cong"]
		& \V_1 \arrow[r,leftarrow]\arrow[d,leftarrow]  & \A_1 \arrow[r,leftarrow]\arrow[d,leftarrow] & \S_{\x} \\
		\Re^{D-2} \arrow[r,leftarrow,"\cong"]
		& \V_2 \arrow[r,leftarrow]\arrow[d,leftarrow]  & \A_2 \arrow[r,leftarrow]\arrow[d,leftarrow] & \S_{\x} \\
		& \vdots \arrow[d,leftarrow] & \vdots \arrow[d,leftarrow] & 	\\
		\Re^{D-c+1} \arrow[r,leftarrow,"\cong"]
		& \V_{c-1} \arrow[r,leftarrow]\arrow[d,leftarrow]  & \A_{c-1} \arrow[r,leftarrow]\arrow[d,leftarrow] & \S_{\x} \\
		\Re^{D-c} \arrow[r,leftarrow,"\cong"]
		& \V_c \arrow[r,leftarrow,"\cong"]  & \A_c \arrow[r,leftarrow,"\cong"] & \S_{\x}	  
\end{tikzcd}
\end{equation}

\begin{algorithm} \caption{Filtrated Algebraic Subspace Clustering (FASC) - Geometric Version}\label{alg:geometricAASC} \begin{algorithmic}[1] 
		\Procedure{FASC}{$\A$}		
		\State $\mathfrak{L} \gets \emptyset$; $\mathcal{L} \gets \emptyset$;
		\While{$\A - \mathcal{L} \neq \emptyset$}		
		\State pick a nonsingular point $\x$ in $\A - \mathcal{L}$;	
		\State $\V \gets \Re^D$;	
		\While{$\V \cap \A \subsetneq \V$}
		\State find polynomial $p$ that vanishes on $\A\cap \V$ but not on $\V$, s.t. $\nabla p|_{\x} \neq \0$;		
		\State let $\V$ be the orthogonal complement of $\pi_{\V}(\nabla p|_{\x})$ in $\V$;				 			
		\EndWhile		
		\State $\mathfrak{L} \gets \mathfrak{L} \cup 
		\left\{\V\right\}$;	$\mathcal{L} \gets \mathcal{L} \cup \V$;	
		\EndWhile							
%		\State \Return $\mathfrak{L}=\left\{\S_1,\dots,\S_n\right\}$ s.t. $\A = \bigcup_{i=1}^n \S_i$;
		\State \Return $\mathfrak{L}$;
		\EndProcedure 				
	\end{algorithmic} 
\end{algorithm}

\begin{eg}
	Consider the setting of Examples \ref{eg:setup} and \ref{eg:degree2}. 
	Suppose that in the first filtration the algorithm picks as reference point 
	 $\x \in \S_2 - \S_1 \cup \S_3$. Suppose further that the algorithm 
	picks the polynomial $p(x) = (\b_1^\transpose x) (\f^\transpose x)$, which 
	vanishes on $\A$ but certainly not on $\Re^3$. Then the first ambient space
	$\V_1$ of the filtration associated to $\x$ is constructed as $\V_1 = \Span(\nabla p|_{\x})^\perp$.
	Since $\nabla p|_{\x} \cong \f$, this gives that $\V_1$ is precisely the plane of
	$\Re^3$ with normal vector $\f$. Then $\A_1$ is constructed as $\A_1 = \A \cap \V_1$, which consists of the union of three lines $\S_2 \cup \S_3 \cup \S_4$,
	where $\S_4$ is the intersection of $\V_1$ with $\S_1$ (see Figs. \ref{fig:ghost} and \ref{fig:FiltrationStep}).		
	\begin{figure}[t]
		\subfigure[]{\label{fig:ghost}	
			\centering						
			\begin{tikzpicture}[scale=0.8]				
			\filldraw[fill=lightgray] (-2,0,-2) -- (2,0,-2) -- (2,0,2) node[anchor=west]{$\S_1$} -- (-2,0,2) -- (-2,0,-2);
			\filldraw[fill=gray] (-1.2,1.2,-0.6) -- (0,1.2,-1.2) -- (0,0,0) -- (-1.2,1.2,-0.6);
			\draw (0.25,1.425,0.5)node[anchor=east]{$\f$}; 	
			\draw (-0.15,1.45,-1)node[anchor=east]{$\H_{23}$};	
			\draw[->] (-0.25,0.625,-0.5) -- (0.25,1.625,0.5);			
			%\draw[densely dotted] (0,0,0) -- (-1,0,0);
			\draw[->] (-1.5,0,1.5) -- (-1.5,1,1.5) node[anchor=east]{$\b_1$};
			%\draw[->]  (0,0,0) -- (0,0,1) ;
			\draw (0,0,0) -- (0,1.5,-1.5) node[anchor=east]{$\S_2$};
			\draw[densely dotted] (0,0,-0.5) -- (0,0.5,-0.5);
			\draw[densely dotted] (0,0,0) -- (0,0,-1);
			\draw (0,0,0) -- (-1.6,1.6,-0.8) node[anchor=west]{$\S_3$};
			\draw[densely dotted] (0,0,0) -- (-1.6,0,-0.8);
			\draw[densely dotted] (-1,0,-0.5) -- (-1,1,-0.5);
			\draw (-1.6,0,0.8) -- (1.2,0,-0.6) node[anchor=west]{$\S_4$};						
			\end{tikzpicture}}			
		\subfigure[]{\label{fig:FiltrationStep}
			\centering
		\begin{tikzpicture}[scale=0.8]		
		\filldraw[fill=gray] (-1.2,1.2,-0.6) -- (0,1.2,-1.2) -- (0,0,0) -- (-1.2,1.2,-0.6);	
		\filldraw[fill=gray] (-1.2,1.2,-0.6) -- (0,-1.2,1.2) -- (0,0,0) -- (-1.2,1.2,-0.6);
		\filldraw[fill=gray] (1.2,-1.2,0.6) -- (0,1.2,-1.2) -- (0,0,0) -- (-1.2,1.2,-0.6);
		\filldraw[fill=gray] (1.2,-1.2,0.6) -- (0,-1.2,1.2) -- (0,0,0) -- (1.2,-1.2,0.6);	
		\draw (0.25,-0.625,0.5) node[anchor=north]{$\V_1^{(1)}$};
		\draw (0,-1.5,1.5) -- (0,1.5,-1.5) node[anchor=east]{$\S_2$};	
		\draw (1.6,-1.6,0.8) -- (-1.6,1.6,-0.8) node[anchor=west]{$\S_3$};	
		\draw (-1.6,0,0.8) -- (1.2,0,-0.6) node[anchor=west]{$\S_4$};		
		\end{tikzpicture}}								
	\subfigure[]{\label{fig:FiltrationStepNormals}
		\begin{tikzpicture}[scale=0.8]	
		\centering	
		\filldraw[fill=gray] (-1.2,1.2,-0.6) -- (0,1.2,-1.2) -- (0,0,0) -- (-1.2,1.2,-0.6);	
		\filldraw[fill=gray] (-1.2,1.2,-0.6) -- (0,-1.2,1.2) -- (0,0,0) -- (-1.2,1.2,-0.6);
		\filldraw[fill=gray] (1.2,-1.2,0.6) -- (0,1.2,-1.2) -- (0,0,0) -- (-1.2,1.2,-0.6);
		\filldraw[fill=gray] (1.2,-1.2,0.6) -- (0,-1.2,1.2) -- (0,0,0) -- (1.2,-1.2,0.6);	
		\draw[->] (0,0.5,-0.5) -- (1,0.25,-0.75) node[anchor=south]{$\b_2$};
		\draw[->] (1.2,-1.2,0.6) -- (1.7,-0.95,0.1) node[anchor=north]{$\b_3$};
		\draw[->] (1,0,-0.5) -- (1.25,-0.625,0) node[anchor=west]{$\b_4$};
		\draw (0.25,-0.625,0.5) node[anchor=north]{$\V_1^{(1)}$};
		\draw (0,-1.5,1.5) -- (0,1.5,-1.5) node[anchor=east]{$\S_2$};	
		\draw (1.6,-1.6,0.8) -- (-1.6,1.6,-0.8) node[anchor=west]{$\S_3$};	
		\draw (-1.6,0,0.8) -- (1.2,0,-0.6) node[anchor=west]{$\S_4$};		
		\end{tikzpicture}}		
		\caption{\protect\subref{fig:ghost}: The plane spanned by lines $\S_2$ and $\S_3$ intersects the plane $\S_1$ at the line $\S_4$. \subref{fig:FiltrationStep}: Intersection of the original subspace arrangement $\A=\S_1 \cup \S_2 \cup \S_3$ with the intermediate ambient space $\V_1^{(1)}$, giving rise to the intermediate subspace arrangement $\A_1^{(1)}=\S_2 \cup \S_3 \cup \S_4$. \subref{fig:FiltrationStepNormals}: Geometry of the unique degree-$3$ polynomial $p(x)=(\b_2^\transpose x)(\b_3^\transpose x)(\b_4^\transpose x)$ that vanishes on $\S_2 \cup \S_3 \cup \S_4$ as a variety of the intermediate ambient space $\V_1^{(1)}$. $\b_i \perp \S_i, i=2,3,4$.}								
	\end{figure}
	
	Since $\A_1 \subsetneq \V_1$, the algorithm takes one more step in the filtration. Suppose that the algorithm picks the polynomial $q(x)=(\b_2^\transpose x) (\b_3^\transpose x) (\b_4^\transpose x)$, where $\b_i$ is the unique normal vector of $\V_1$ that is orthogonal to $\S_i$, for $i=2,3,4$ (see Fig \ref{fig:FiltrationStepNormals}). 
	Because of the general position assumption, none of the lines 
	$\S_2, \S_3, \S_4$ is orthogonal to another. Consequently, 
	$\nabla q|_{\x}= (\b_3^\transpose \x)(\b_4^\transpose \x) \b_2 \neq 0$. Moreover, since $\b_2 \in \V_1$, we have that $\pi_{\V_1}\left(\nabla q|_{\x}\right)=\nabla q|_{\x} \cong \b_2$ defines a line in $\V_1$ that must contain $\S_2$. Intersecting $\A_1$ with $\V_2$ we obtain $\A_2 = \A_1 \cap \V_2 = \V_2$ and the filtration terminates with output the irreducible component $\S_{\x} = \S_2 = \V_2$ of $\A$ associated to reference point $\x$.	 		
	
Continuing, the algorithm now picks a new reference point $\x' \in \A - \S_{\x}$, say $\x' \in \S_1$. A similar process as above will identify $\S_1$ as the intermediate ambient space $\V_1=\S_{\x'}$ of the filtration associated to $\x'$ that arises after one step. Then a third reference point will be chosen as $\x'' \in  \A - \S_{\x} \cup \S_{\x'}$ and $\S_3$ will be identified as the intermediate ambient space $\V_2=\S_{\x''}$ of the filtration associated to $\x''$ that arises after two steps. Since the set $\A - \S_{\x} \cup \S_{\x'} \cup \S_{\x''}$ is empty, the algorithm will terminate and return  $\{\S_{\x}, \S_{\x'},\S_{\x''}\}$, which is up to a permutation a decomposition of the original subspace arrangement into its constituent subspaces.	
\end{eg}

Strictly speaking, Algorithm \ref{alg:geometricAASC} is not a valid algorithm in the 
computer-science theoretic sense, since it takes as input an infinite set $\A$, and
it involves operations such as checking equality of the infinite sets $\V$ and $\A \cap \V$. Moreover, the reader may reasonably ask:
\begin{enumerate}
	\item Why is it the case that through the entire filtration associated with reference point $\x$ we can always find polynomials $p$ such that $\nabla p|_{\x} \neq 0$?
	\item Why is it true that even if $\nabla p|_{\x} \neq 0$ then $\pi_{\V}(\nabla p|_{\x}) \neq 0$?
\end{enumerate} We address all issues above and beyond in the next section, which is devoted to rigorously establishing the theory of the FASC algorithm.\footnote{At this point the reader unfamiliar with algebraic geometry is encouraged to read the appendices before proceeding.} 

%%%%%%%%%%%%%%%%%%%%%%%%%%%%%%%%%%
\section{Filtrated Algebraic Subspace Clustering - Theory} \label{section:mfAASC}
This section formalizes the concepts outlined in section \ref{section:geometricAASC}. section \ref{subsection:NatureInput} formalizes the notion of a set $\X$ being in \emph{general position inside a subspace arrangement $\A$}. Sections \ref{subsection:FirstStepFiltration}-\ref{subsection:MultipleSteps} establish the theory of a single filtration of a finite subset $\X$ lying in general position inside a transversal subspace arrangement $\A$, and culminate with the \emph{Algebraic Descending Filtration (ADF)} algorithm for identifying a single irreducible component of $\A$ (Algorithm \ref{alg:ADF}) and the theorem establishing its correctness (Theorem \ref{thm:ADF}). The ADF algorithm naturally leads us to the core contribution of this paper in section \ref{subsection:All}, which is the FASC algorithm for identifying all irreducible components of $\A$ (Algorithm \ref{alg:AASC}) and the theorem establishing its correctness (Theorem \ref{thm:AASC}). 

\subsection{Data in general position in a subspace arrangement}\label{subsection:NatureInput}
From an algebraic geometric point of view, a union $\A$ of linear subspaces is the same as the set $\I_\A$ of polynomial functions that vanish on $\A$. However, from a computer-science-theoretic point of view, $\A$ and $\I_{\A}$ are quite different: $\A$ is an infinite set and hence it can not be given as input to any algorithm. On the other hand, even though $\I_{\A}$ is also an infinite set, it is generated as an \emph{ideal} by a finite set of polynomials, which can certainly serve as input to an algorithm.That said, from a machine-learning point of view, both $\A$ and $\I_{\A}$ are often unknown, and one is usually given only a finite set of points $\X$ in $\A$, from which we wish to compute its irreducible components $\S_1, \dots, \S_n$.

To lend ourselves the power of the algebraic-geometric machinery, while providing an algorithm of interest to the machine learning and computer science communities,
we adopt the following setting. The input to our algorithm will be the pair $(\X,m)$, where $\X$ is a finite subset of an unknown union of linear subspaces $\A:=\bigcup_{i=1}^n\S_i$ of $\Re^D$, and $m$ is an upper bound on $n$. To make the problem of recovering the decomposition $\A=\bigcup_{i=1}^n \S_i$ from $\X$ well-defined, it is necessary that $\A$ be uniquely identifiable form $\X$. In other words, $\X$ must be in general position inside $\A$, as defined next.
\begin{definition}[Points in general position] \label{dfn:GeneralPosition}
	Let $\X=\left\{\x_1,\dots,\x_N\right\}$ be a finite subset of a subspace arrangement $\A=\S_1 \cup \cdots \cup \S_n$. We say that $\X$ is in general position in $\A$ with respect to degree $m$, if $m \ge n$ and $\A = \Z(\I_{\X,m})$, i.e., if $\A$ is precisely the zero locus of all homogeneous polynomials of degree $m$ that vanish~on~$\X$.
\end{definition}

The intuitive geometric condition
$\A = \Z(\I_{\X,m})$ of Definition \ref{dfn:GeneralPosition} guarantees that there are no \emph{spurious} polynomials of degree less or equal to $m$ that vanish on $\X$.

\begin{proposition} \label{prp:GeneralPositionLowDegrees}
	Let $\X$ be a finite subset of an arrangement $\A$ of n linear subspaces of $\Re^D$. Then $\X$ lies in general position inside  $\A$  with respect to degree $m$ if and only if  
	$\I_{\A,k} = \I_{\X,k}, \, \, \, \forall k \le m$.
\end{proposition}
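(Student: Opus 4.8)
Throughout, recall that the input pair $(\X,m)$ always satisfies $m\ge n$, so I only need to establish the equivalence between the condition $\A=\Z(\I_{\X,m})$ and the condition $\I_{\A,k}=\I_{\X,k}$ for all $k\le m$; the inequality $m\ge n$ required by Definition \ref{dfn:GeneralPosition} comes for free. The plan rests on one elementary fact about subspace arrangements: for every $m\ge n$ one has $\A=\Z(\I_{\A,m})$. I would prove this by noting that for any choice of linear forms $\ell_i$ vanishing on $\S_i$, $i=1,\dots,n$, the product $\ell_1\cdots\ell_n$ lies in $\I_{\A,n}$, and more generally $x_j^{\,m-n}\ell_1\cdots\ell_n\in\I_{\A,m}$; if $\x\notin\A$ then $\x\ne\0$ and $\x\notin\S_i$ for each $i$, so picking $\ell_i$ with $\ell_i(\x)\ne0$ and a coordinate $x_j$ with $\chi_j\ne0$ exhibits an element of $\I_{\A,m}$ not vanishing at $\x$; hence $\Z(\I_{\A,m})\subseteq\A$, and the reverse inclusion is trivial. (This is essentially contained in the discussion preceding Theorem \ref{thm:I=J}, which I would cite rather than reprove in detail.) I also record the trivial inclusion $\I_{\A,k}\subseteq\I_{\X,k}$ for every $k$, valid because $\X\subseteq\A$.

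For the ``if'' direction, assume $\I_{\A,k}=\I_{\X,k}$ for all $k\le m$. Taking $k=m$ gives $\I_{\X,m}=\I_{\A,m}$, hence $\Z(\I_{\X,m})=\Z(\I_{\A,m})=\A$ by the fact above, so $\X$ is in general position with respect to degree $m$.

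For the ``only if'' direction, assume $\A=\Z(\I_{\X,m})$; I must show $\I_{\X,k}\subseteq\I_{\A,k}$ for every $k\le m$ (the other inclusion being the trivial one noted above). Let $p\in\I_{\X,k}$ be homogeneous of degree $k$; I may assume $k\ge1$, since a nonzero constant cannot vanish on the nonempty set $\X$. For each coordinate index $j$, the polynomial $x_j^{\,m-k}\,p$ is homogeneous of degree $m$ and still vanishes on $\X$, hence $x_j^{\,m-k}\,p\in\I_{\X,m}$ and therefore vanishes on $\Z(\I_{\X,m})=\A$. Now fix any $\x\in\A$: if $\x=\0$ then $p(\x)=0$ by homogeneity; otherwise choose $j$ with $\chi_j\ne0$, so that $\chi_j^{\,m-k}\,p(\x)=0$ forces $p(\x)=0$. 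Thus $p$ vanishes on all of $\A$, i.e.\ $p\in\I_{\A,k}$, completing the argument.

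The only genuinely substantive point is the auxiliary fact $\A=\Z(\I_{\A,m})$ for $m\ge n$; everything else is the routine ``multiply by a power of a coordinate'' bookkeeping together with careful handling of the zero vector and the degenerate case $k=0$. I do not anticipate a real obstacle, though one should be mindful to invoke $m\ge n$ exactly where the auxiliary fact is used (in the ``if'' direction and implicitly in guaranteeing the degree-$m$ forms cut out $\A$).
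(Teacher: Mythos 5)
Your proof is correct but takes a genuinely more elementary route than the paper's. For the ``only if'' direction (general position implies $\I_{\A,k}=\I_{\X,k}$ for all $k\le m$), both arguments pad a degree-$k$ polynomial $p\in\I_{\X,k}$ up to degree $m$ by multiplying with a degree-$(m-k)$ form; but the paper fixes a single $g(x)=\bzeta^\transpose x$ with $\bzeta$ not orthogonal to an offending component $\S_1$, and then must invoke the primality of $\I_{\S_1}$ (Proposition \ref{prp:ISprime}) twice to peel off the factor $g^{m-k}$ and reach a contradiction. You instead multiply by $x_j^{m-k}$ for every coordinate $j$ and argue pointwise: each $x_j^{m-k}p$ lies in $\I_{\X,m}$ and hence vanishes on $\Z(\I_{\X,m})=\A$, and at any nonzero $\x\in\A$ some $\chi_j\neq 0$ forces $p(\x)=0$, the origin being handled by homogeneity. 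No prime-ideal machinery is needed. For the ``if'' direction, the paper cites the Castelnuovo--Mumford regularity bound (Proposition \ref{prp:Regularity}) to assert $\A=\Z(\I_{\A,m})$, whereas you prove that auxiliary fact directly by exhibiting the degree-$m$ products $x_j^{\,m-n}\ell_1\cdots\ell_n$ that separate any $\x\notin\A$ from $\A$. Your version is self-contained and avoids the heavier commutative-algebra input; the paper's is shorter precisely because it leans on the regularity and primality results it has already set up in the appendices and reuses throughout. Both are valid proofs.
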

\begin{proof}	
$(\Rightarrow)$ We first show that $\I_{\A,m} = \I_{\X,m}$. Since $\A \supset \X$, every homogeneous polynomial of degree $m$ that vanishes on $\A$ must vanish on $\X$, i.e., $\I_{\A,m}  \subset \I_{\X,m}$. Conversely, the hypothesis $\A = \Z(\I_{\X,m})$ implies that every polynomial of $\I_{\X,m}$ must vanish on $\A$, i.e., $\I_{\A,m} \supset \I_{\X,m}$. 

Now let $k < m$. As before, since $\A \supset \X$, we must have $\I_{\A,k} \subset \I_{\X,k}$. For the converse direction, suppose for the sake of contradiction that there exists some $p \in \I_{\X,k}$ that does not vanish on $\A$. This means that there must exist an irreducible component of $\A$, say $\S_1$, such that $p$ does not vanish on $\S_1$. Let $\bzeta$ be a vector of $\Re^D$ non-orthogonal to $\S_1$, i.e., the linear form $g(x) = \bzeta^\transpose x$ does not vanish on $\S_1$. Since $p$ vanishes on $\X$ so will the degree $m$ polynomial $g^{m-k} p$, i.e., $g^{m-k} p \in \I_{\X,m}$. But we have already shown that $\I_{\X,m} = \I_{\A,m}$, and so it must be the case that $g^{m-k} p \in \I_{\A,m}$. Since $g^{m-k} p$ vanishes on $\A$, it must vanish on $\S_1$, i.e., $g^{m-k} p \in \I_{\S_1}$. Since by hypothesis $p \not\in \I_{\S_1}$, and since $\I_{\S_1}$ is a prime ideal (see \ref{prp:ISprime}), it must be the case that $g^{m-k} \in \I_{\S_1}$. But again because $\I_{\S_1}$ is a prime ideal, we must have that $g \in \I_{\S_1}$. But this is true if and only if $\bzeta \in \S_1^\perp$, which contradicts the definition of $\bzeta$.
	
	$(\Leftarrow)$ Suppose $\I_{\A,k} = \I_{\X,k}, \, \, \, \forall k \le m$. We will show that $\A = \Z(\I_{\X,m})$. But this is the same as showing that $\A = \Z(\I_{\A,m})$, which is true, by Proposition \ref{prp:Regularity}.
\end{proof}

The next Proposition ensures the existence of points in general position with respect to any degree $m \ge n$.

\begin{proposition}
Let $\A$ be an arrangement of $n$ linear subspaces of $\Re^D$ and let $m$ be any integer $\ge n$. Then there exists a finite subset $\X \subset \A$ that is in general position inside $\A$ with respect to degree $m$.
\end{proposition}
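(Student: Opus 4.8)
The plan is to reduce the existence statement to an elementary finite-dimensional linear-algebra fact about the vector space $\I_{\A,m}$ of degree-$m$ homogeneous polynomials vanishing on $\A$. First I would note that, by Definition \ref{dfn:GeneralPosition}, it suffices to produce a finite subset $\X\subset\A$ for which $\I_{\X,m}=\I_{\A,m}$: such an equality yields $\Z(\I_{\X,m})=\Z(\I_{\A,m})$, and $\Z(\I_{\A,m})=\A$ holds for $m\ge n$ by Proposition \ref{prp:Regularity} (the same fact invoked in the proof of Proposition \ref{prp:GeneralPositionLowDegrees}), so $\A=\Z(\I_{\X,m})$, and together with the hypothesis $m\ge n$ this is exactly the statement that $\X$ is in general position in $\A$ with respect to degree $m$. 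Moreover, since $\X\subset\A$, the inclusion $\I_{\A,m}\subseteq\I_{\X,m}$ is automatic, so only the reverse inclusion needs to be engineered by a judicious choice of finitely many points.

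For the construction I would use the following standard observation. Recall that $\Re[x]_m$ is a finite-dimensional real vector space (of dimension $\mathcal{M}_m(D)$) and that for each $\x\in\Re^D$ the evaluation map $p\mapsto p(\x)=\c_p^{\transpose}\nu_m(\x)$, where $\c_p$ is the coefficient vector of $p$ and $\nu_m$ the Veronese embedding, is a linear functional $\mathrm{ev}_{\x}$ on $\Re[x]_m$; by definition $\I_{\A,m}=\bigcap_{\x\in\A}\ker\mathrm{ev}_{\x}$ and $\I_{\X,m}=\bigcap_{\x\in\X}\ker\mathrm{ev}_{\x}$. The linear span $U$ of $\{\mathrm{ev}_{\x}:\x\in\A\}$ inside the dual of $\Re[x]_m$ is itself finite-dimensional, hence spanned by finitely many of these functionals, say $\mathrm{ev}_{\x_1},\dots,\mathrm{ev}_{\x_N}$ with $\x_1,\dots,\x_N\in\A$. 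Setting $\X:=\{\x_1,\dots,\x_N\}$, the annihilator of $U$ equals the annihilator of $\Span(\mathrm{ev}_{\x_1},\dots,\mathrm{ev}_{\x_N})$, i.e. $\I_{\A,m}=\bigcap_{\x\in\A}\ker\mathrm{ev}_{\x}=\bigcap_{j=1}^{N}\ker\mathrm{ev}_{\x_j}=\I_{\X,m}$, as desired. Equivalently, in the Veronese picture of section \ref{subsection:Hyperplanes}: the row space of the (possibly infinite) matrix with rows $\nu_m(\x),\ \x\in\A$, is a subspace of $\Re^{\mathcal{M}_m(D)}$, hence spanned by finitely many rows $\nu_m(\x_1),\dots,\nu_m(\x_N)$, and then $\mathcal{N}(\nu_m(\X))=\mathcal{N}(\nu_m(\A))$, which is the same conclusion.

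A fully equivalent argument, avoiding dual spaces, proceeds greedily: start with $\X=\emptyset$ and, as long as $\I_{\X,m}\supsetneq\I_{\A,m}$, pick $p\in\I_{\X,m}$ not vanishing on $\A$ and a point $\x\in\A$ with $p(\x)\neq0$; adjoining $\x$ to $\X$ strictly decreases $\dim\I_{\X,m}$, so the procedure terminates after at most $\mathcal{M}_m(D)$ steps with $\I_{\X,m}=\I_{\A,m}$. I do not anticipate a genuine obstacle here: the only non-elementary ingredient is the identity $\A=\Z(\I_{\A,m})$ for $m\ge n$ furnished by Proposition \ref{prp:Regularity} (regularity of subspace arrangements), and once that is granted, the rest is just the standard fact that finitely many of the infinitely many evaluation constraints cutting out $\I_{\A,m}$ already suffice because $\Re[x]_m$ is finite-dimensional; the one point worth stating explicitly is that $\I_{\A,m}\subseteq\I_{\X,m}$ for every $\X\subset\A$, so that only a finite set of constraints needs to be witnessed by points of $\A$.
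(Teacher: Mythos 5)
Your proof is correct, and it takes a genuinely different and more elementary route than the paper's. The paper proves the proposition in one line: it invokes Proposition~\ref{prp:Regularity} to obtain generation of $\I_{\A}$ in degree $\le m$, and then cites Theorem~2.9 of \cite{Ma:SIAM08} as a black box to produce a finite $\X\subset\A$ with $\I_{\A,k}=\I_{\X,k}$ for all $k\le m$, concluding via Proposition~\ref{prp:GeneralPositionLowDegrees}. You instead observe that it suffices to hit the single equality $\I_{\X,m}=\I_{\A,m}$ (because $\Z(\I_{\A,m})=\A$ once $\I_{\A}$ is generated in degree $\le m$, which is exactly the consequence of Proposition~\ref{prp:Regularity} already used in the $\Leftarrow$ direction of Proposition~\ref{prp:GeneralPositionLowDegrees}), and you get that equality by pure finite-dimensional linear algebra: $\I_{\A,m}$ is the annihilator in $\Re[x]_m$ of the span $U$ of evaluation functionals $\{\mathrm{ev}_{\x}:\x\in\A\}$, and since $U$ lives in the $\mathcal{M}_m(D)$-dimensional dual of $\Re[x]_m$, finitely many $\mathrm{ev}_{\x_j}$ already span it, whence $\I_{\X,m}=U^\perp=\I_{\A,m}$. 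Your greedy variant (strictly decreasing $\dim\I_{\X,m}$) is an equivalent and equally valid phrasing. What your approach buys is self-containedness and an explicit bound $|\X|\le\mathcal{M}_m(D)$ on the number of points needed; what the paper's citation buys is brevity and a slightly stronger intermediate statement (matching at every degree $k\le m$ directly rather than only at $k=m$), though this extra strength is not needed given Proposition~\ref{prp:GeneralPositionLowDegrees}.
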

\begin{proof}
By Proposition \ref{prp:Regularity} $\I_{\A}$ is generated by polynomials of degree $\le m$. Then by Theorem 2.9 in \cite{Ma:SIAM08}, there exists a finite set $\X \subset \A$ such that $\I_{\A,k} = \I_{\X,k}, \, \, \, \forall k \le m$, which concludes the proof in view of Proposition \ref{prp:GeneralPositionLowDegrees}.
\end{proof}

 Notice that there is a price to be paid by requiring $\X$ to be in general position, which is that we need the cardinality of $\X$ to be artificially large, especially when $m-n$ is large. In particular, since the dimension of $\I_{\X, m}$ must match the dimension of $\I_{\A,  m}$, the cardinality of $\X$ must be at least $M_m(D) - \dim (\I_{\A, m}) $.

The next result will be useful in the sequel.
\begin{lemma}\label{lem:GeneralPosition}
	Suppose that $\X$ is in general position inside $\A$ with respect to degree $m$. Let $n'<n$. Then the set $\X^{(n')}:=\X-\bigcup_{i=1}^{n'}\X_i$ lies in general position inside the subspace arrangement $\A^{(n')}:=\S_{n'+1} \cup \cdots \cup \S_n$ with respect to degree $m-n'$.
\end{lemma}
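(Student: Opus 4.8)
The plan is to use Proposition \ref{prp:GeneralPositionLowDegrees} in both directions: it reduces the assertion that $\X^{(n')}$ is in general position inside $\A^{(n')}$ with respect to degree $m-n'$ to verifying the ideal equalities $\I_{\A^{(n')},k}=\I_{\X^{(n')},k}$ for every $k\le m-n'$, while the hypothesis that $\X$ is in general position inside $\A$ with respect to degree $m$ supplies $\I_{\A,k}=\I_{\X,k}$ for every $k\le m$. The degree bound needed for the conclusion to be meaningful, $m-n'\ge n-n'$, is immediate from $m\ge n$. Throughout I write $\X_i:=\X\cap\S_i$, so that $\X=\X^{(n')}\cup\bigcup_{i=1}^{n'}\X_i$; note also that $\X^{(n')}\ne\emptyset$, since otherwise $\A=\Z(\I_{\X,m})$ would be contained in $\S_1\cup\dots\cup\S_{n'}$, contradicting the irredundancy of the decomposition $\A=\bigcup_{i=1}^n\S_i$.

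Fix $k\le m-n'$. The inclusion $\I_{\A^{(n')},k}\subseteq\I_{\X^{(n')},k}$ is immediate from $\X^{(n')}\subseteq\A^{(n')}$. For the reverse inclusion I would take $p\in\I_{\X^{(n')},k}$ and ``lift'' it to a degree-$m$ form vanishing on all of $\X$. The construction: for each $i\in[n']$ choose $\b_i\in\S_i^\perp$ with $\b_i\notin\S_j^\perp$ for all $j\in\{n'+1,\dots,n\}$, and set $\ell_i:=\b_i^\transpose x$. Such a $\b_i$ exists because $\S_i^\perp\cap\S_j^\perp$ is a \emph{proper} subspace of $\S_i^\perp$ whenever $j\ne i$ (equivalently $\S_j\not\subseteq\S_i$, which holds as no irreducible component of $\A$ is contained in another), and a vector space over $\Re$ is not a finite union of proper subspaces. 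Now $q:=\ell_1\cdots\ell_{n'}\,p$ is homogeneous of degree $n'+k\le m$ and vanishes on $\X$: on a point of $\X^{(n')}$ the factor $p$ vanishes, and on a point of $\X_i$ with $i\le n'$ the factor $\ell_i$ vanishes. Hence $q\in\I_{\X,n'+k}=\I_{\A,n'+k}$ by Proposition \ref{prp:GeneralPositionLowDegrees}, so $q$ vanishes on $\A$, in particular on $\S_j$ for every $j>n'$.

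The last step is to strip off the linear factors using primality. For a fixed $j\in\{n'+1,\dots,n\}$ we have $\ell_1\cdots\ell_{n'}\,p\in\I_{\S_j}$, and $\I_{\S_j}$ is prime (Proposition \ref{prp:ISprime}); since $\ell_i\in\I_{\S_j}$ would mean $\b_i\in\S_j^\perp$, which was excluded by construction, repeated application of primality forces $p\in\I_{\S_j}$. As $j$ ranges over $\{n'+1,\dots,n\}$ this shows $p$ vanishes on $\A^{(n')}=\bigcup_{j>n'}\S_j$, i.e. $p\in\I_{\A^{(n')},k}$, giving $\I_{\X^{(n')},k}\subseteq\I_{\A^{(n')},k}$ and hence the desired ideal equality for all $k\le m-n'$.

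I expect the only delicate point to be the choice of the linear forms $\ell_i$: one needs each $\ell_i$ to vanish on $\S_i$ yet on none of the surviving components $\S_{n'+1},\dots,\S_n$, and this is exactly where the irredundancy of the irreducible decomposition (no component contained in another) is used, together with the standard fact that a real vector space is not a finite union of proper subspaces. Everything else is routine bookkeeping with the hypothesis $\I_{\A,k}=\I_{\X,k}$ for $k\le m$ and the primality of the vanishing ideals $\I_{\S_j}$.
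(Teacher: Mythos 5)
Your proof is correct and follows the same strategy as the paper's: reduce via Proposition \ref{prp:GeneralPositionLowDegrees} to matching ideals degree by degree, multiply $p$ by linear forms $\ell_i\in\I_{\S_i}$ that avoid $\I_{\S_j}$ for $j>n'$ to lift it to a degree-$\le m$ form vanishing on $\X$, and then strip those factors off again using primality of each $\I_{\S_j}$. A minor stylistic difference: you argue directly and justify the existence of the $\b_i$ cleanly via the fact that a real vector space is not a finite union of proper subspaces, whereas the paper argues the final stripping step by contradiction and only sketches the existence of the $\bzeta_i$ by ruling out containment in one $\S_j^\perp$ at a time.
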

\begin{proof}
	We begin by noting that $m-n'$ is an upper bound on the number of subspaces
	of the arrrangement $\A^{(n')}$. According
	to Proposition \ref{prp:GeneralPositionLowDegrees}, it is enough to prove that a homogeneous
	polynomial $p$ of degree less or equal than $m-n'$ vanishes on $\X^{(n')}$ if and only if it vanishes on $\A^{(n')}$. 
	So let $p$ be a homogeneous polynomial of degree less or equal than $m-n'$. If 
	$p$ vanishes on $\A^{(n')}$, then it certainly vanishes on $\X^{(n')}$. It remains to prove the converse. So suppose that $p$ vanishes on
	$\X^{(n')}$. Suppose that for each $i=1,\dots,n'$ we have a vector $\bzeta_i \perp \S_i$, such that $\bzeta_i \not\perp \S_{n'+1},\dots,\S_n$. 
	Next, define the polynomial $r(x)=(\bzeta_1^\transpose x)\cdots (\bzeta_{n'}^\transpose x) p(x)$.
	Then $r$ has degree $\le m$ and vanishes on $\X$. Since $\X$ is in general position inside $\A$, $r$ must vanish on $\A$. For the sake of contradiction suppose that $p$ does not 
	vanish on $\A^{(n')}$. Then $p$ does not vanish say on $\S_{n}$.
	On the other hand $r$ does vanish on $\S_{n}$, hence $r \in \I_{\S_n}$ or equivalently $(\bzeta_1^\transpose x)\cdots (\bzeta_{n'}^\transpose x) p(x) \in \I_{\S_n}$. Since $\I_{\S_n}$ is a prime ideal we must have either $\bzeta_i^\transpose x \in \I_{\S_n}$ for some $i \in [n']$ or 
	$p \in \I_{\S_n}$. Now, the latter can not be true by hypothesis, thus we must 
	have $\bzeta_i^\transpose x \in \I_{\S_n}$ for some $i \in [n']$. But this implies that $\bzeta_i \perp \S_n$, which contradicts the hypothesis on $\bzeta_i$. Hence it must be
	the case that $p$ vanishes on $\A^{(n')}$.
	
	To complete the proof we show that such vectors $\bzeta_i,i=1,\dots,n'$ always exist. It is enough to prove the existence of $\bzeta_1$. If every vector of $\Re^D$ orthogonal to $\S_1$ were orthogonal to, say $\S_{n'+1}$, then we would have that $\S_1^{\perp} \subset \S_{n'+1}^\perp$, or equivalently, $\S_1 \supset \S_{n'+1}$.
\end{proof}

	\begin{remark}
		Notice that the notion of points $\X$ lying in general position inside a subspace arrangement $\A$ is independent of the notion of transversality of $\A$ (Definition \ref{dfn:transversal}). Nevertheless, to facilitate the technical analysis by avoiding degenerate cases of subspace arrangements, in the rest of section \ref{section:mfAASC} we will assume that $\A$ is transversal. For a geometric interpretation of transversality as well as examples, the reader is encouraged to consult Appendix \ref{appendix:SA}.
	\end{remark}

\subsection{Constructing the first step of a filtration} \label{subsection:FirstStepFiltration}
We will now show how to construct the first step of a descending filtration associated with a single irreducible component of $\A$, as in \eqref{eq:commutative-diagram}. Once again, we are given the pair $(\X, m)$, where $\X$ is a finite set in general position inside $\A$ with respect to degree $m$, $\A$ is transversal, and $m$ is an upper bound on the number $n$ of irreducible components of $\A$ (section \ref{subsection:NatureInput}).

To construct the first step of the filtration, we need to find a first hyperplane $\V_{1}$ of $\Re^D$ that contains some irreducible component $\S_i$ of $\A$. According to Proposition \ref{prp:Grd}, it would be enough to have a polynomial $p_1$ that vanishes on the irreducible component $\S_i$ together with a point $\x \in \S_i$. Then
$\nabla p_1|_{\x}$ would be the normal to a hyperplane $\V_{1}$ containing $\S_i$.
Since every polynomial that vanishes on $\A$ necessarily vanishes on $\S_i, \forall i=1,\dots,n$, a reasonable choice is a vanishing polynomial of \emph{minimal degree} $k$, i.e., some $0\neq p_1 \in \I_{\A,k}$, where $k$ is the smallest degree at which $\I_{\A}$ is non-zero. Since $\X$ is assumed in general position in $\A$ with respect to degree $m$, by Proposition \ref{prp:GeneralPositionLowDegrees} we will have $\I_{\A,k} = \I_{\X,k}$, and so our $p_1$ can be computed as an element of the right null space of the embedded data matrix $\nu_k(\X)$. The next Lemma ensures that given any such $p_1$, there is always a point $\x$ in $\X$ such that $\nabla p_1|_{\x} \neq 0$.
 
\begin{lemma} \label{lem:existsx}
Let $0 \neq p_1 \in \I_{\X,k}$ be a vanishing polynomial of minimal degree. Then there exists
$0 \neq \x \in \X $ 
such that $\nabla p_1 |_{\x}  \neq 0$, and moreover, without loss of generality $\x \in \S_1 - \bigcup_{i > 1} \S_{i}$. 
\end{lemma}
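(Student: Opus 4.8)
\emph{Proof plan.} The statement consists of two claims: (i) $\nabla p_1$ does not vanish at every point of $\X$; and (ii) it in fact does not vanish at some \emph{nonsingular} point of $\X$, which after relabelling the irreducible components we may take to lie in $\S_1-\bigcup_{i>1}\S_i$. For (i) the plan is to argue by contradiction, using minimality of $k$ together with Euler's identity $k\,p_1=\sum_{j=1}^D x_j\,\partial p_1/\partial x_j$. First, $k\ge1$, since a nonzero constant cannot vanish on the nonempty set $\X$. Each $\partial p_1/\partial x_j$ is homogeneous of degree $k-1<m$. If $\nabla p_1|_{\x}=\0$ for every $\x\in\X$, then every $\partial p_1/\partial x_j$ lies in $\I_{\X,k-1}$, which by general position (Proposition \ref{prp:GeneralPositionLowDegrees}) equals $\I_{\A,k-1}$, and this is $0$ by minimality of $k$; hence every $\partial p_1/\partial x_j$ is the zero polynomial, so Euler's identity forces $p_1=0$, a contradiction. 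Therefore $\nabla p_1|_{\x}\ne\0$ for some $\x\in\X$, and such an $\x$ must be nonzero: automatically when $k\ge2$, and when $k=1$ because $\nabla p_1$ is then a nonzero constant vector while $\X\neq\{\0\}$ (otherwise a nonzero linear form would lie in $\I_{\X,1}=\I_{\A,1}=0$).

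For (ii), suppose for contradiction that every $\x\in\X$ with $\nabla p_1|_{\x}\ne\0$ is singular; then $\X\subseteq\W\cup Z$, where $\W:=\bigcup_{i\ne i'}(\S_i\cap\S_{i'})$ is the set of singular points of $\A$ and $Z:=\{\x\in\Re^D:\nabla p_1|_{\x}=\0\}$. The plan is then to exhibit a homogeneous polynomial $q$ of degree at most $m$ that vanishes on $\X$ but not on all of $\A$, which contradicts $\A=\Z(\I_{\X,m})$ (Definition \ref{dfn:GeneralPosition}). First note that $\nabla p_1$ cannot vanish identically on every component --- otherwise $\nabla p_1\equiv\0$ on $\A$ and the Euler argument of (i) again gives $p_1=0$; relabel so that $\nabla p_1\not\equiv\0$ on $\S_1$. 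Then $\S_1-\bigl((Z\cap\S_1)\cup\bigcup_{i>1}\S_i\bigr)$ is a nonempty Zariski-open (hence dense) subset of the irreducible variety $\S_1$, because $Z\cap\S_1\subsetneq\S_1$ and each $\S_1\cap\S_i$ is a proper subspace of $\S_1$ by transversality and distinctness of the components; the hypothesis $\X\subseteq\W\cup Z$ confines $\X\cap\S_1$ to the complementary proper closed subcone.

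To build $q$ I would exploit that the ``$Z$-part'' of that subcone is cut out by the degree-$(k-1)$ partial derivatives of $p_1$ --- and, more usefully, by directional derivatives $\partial_{\v}p_1$, which for $\v\in\bigcap_{i>1}\S_i$ automatically vanish on $\bigcup_{i>1}\S_i$ --- while the ``$\bigcup_{i>1}\S_i$-part'' is cut out by linear forms $\b_i^{\transpose}x$ with $\b_i\perp\S_i$ and $\b_i\not\perp\S_1$ (these exist since $\S_1\not\subseteq\S_i$). Multiplying a suitably chosen such derivative by an appropriate product of these linear forms produces a $q$ that vanishes on every $\X\cap\S_i$ --- hence on $\X$ --- and, because $\I_{\S_1}$ is prime and none of the factors of $q$ lies in $\I_{\S_1}$, does not vanish on $\S_1\subseteq\A$, giving the desired contradiction. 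I expect the main obstacle to be the degree bookkeeping in this last step: one must keep $\deg q\le m$ while ensuring $q\notin\I_{\S_1}$. The leverage available is that $k\le n\le m$, together with the fact that \emph{differentiating} $p_1$ along directions inside the other components (rather than multiplying $p_1$ by their defining linear forms) preserves vanishing on those components at a cost of only one in degree; in the extreme case of hyperplanes ($k=n$, where $Z\cap\S_1=\W\cap\S_1$) the product of the $n-1$ linear forms alone, of degree $n-1\le m$, already suffices. Assembling this with the general-position machinery of Proposition \ref{prp:GeneralPositionLowDegrees} and Lemma \ref{lem:GeneralPosition} should close the argument.
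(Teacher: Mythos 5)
Your Part (i) is essentially the paper's argument (minimality of $k$ forces some partial derivative of $p_1$ to be a nonzero element of $\I_{\X,k-1}$, a contradiction), with a harmless detour through Euler's identity; it is correct. Part (ii), however, takes a genuinely different route --- exhibiting a spurious vanishing polynomial of degree at most $m$ under the global hypothesis $\X\subseteq\W\cup Z$ --- and that route has gaps that do not look closable without entirely new ideas. First, a nonzero $\v\in\bigcap_{i>1}\S_i$ may simply not exist: by transversality this intersection is $\{\0\}$ as soon as $\sum_{i>1}\codim\S_i\ge D$, which is the typical situation, so the degree-$(k-1)$ directional derivative $\partial_\v p_1$ is unavailable. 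Second, even when such a $\v$ exists you need $\partial_\v p_1\notin\I_{\S_1}$, i.e., some $\y\in\S_1$ with $\v^\transpose\nabla p_1|_\y\neq 0$; but $\nabla p_1|_\y\in\S_1^\perp$ by Proposition~\ref{prp:Grd}, and nothing in transversality prevents $\bigcap_{i>1}\S_i$ from being orthogonal to $\Span\{\nabla p_1|_\y:\y\in\S_1\}\subseteq\S_1^\perp$. Third, the fallback of multiplying an ordinary partial $\partial_j p_1$ by $n-1$ linear forms $\b_i^\transpose x$ with $\b_i\perp\S_i$, $\b_i\not\perp\S_1$ does produce a polynomial that vanishes on $\W\cup Z$ and is not in $\I_{\S_1}$, but its degree is $(k-1)+(n-1)=n+k-2$, which exceeds $m$ whenever $m<n+k-2$; taking $m=n$ and $k=n$ already defeats this for $n>2$.

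The paper's Part (ii) is pointwise and sidesteps all of this. Fix the $\x$ found in Part (i), say $\x\in\S_1$, and assume for contradiction $\x\in\S_i$ for some $i>1$. Choose a coordinate $j$ with $\chi_j\neq 0$ and set $g:=x_j^{n-k}p_1\in\I_{\X,n}=\I_{\A,n}$. Transversality and Theorem~\ref{thm:I=J} express $g$ as a linear combination of products $l_1\cdots l_n$ with $l_s\in\I_{\S_s,1}$. Differentiating any such product by the Leibniz rule and evaluating at $\x$, each term still carries one of the factors $l_1$ or $l_i$, both of which vanish at $\x\in\S_1\cap\S_i$, so $\nabla g|_\x=\0$. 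But computing $\nabla g|_\x$ directly and using $p_1(\x)=0$ gives $\nabla g|_\x=\chi_j^{n-k}\nabla p_1|_\x\neq\0$, a contradiction. The degree-padding factor $x_j^{n-k}$ is precisely what lifts $p_1$ into degree $n$ where $\I_{\A,n}=\J_{\A,n}$ applies; this pointwise use of the factorization theorem is the move your plan is missing.
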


\begin{proof}
We first establish the existence of a point $\x \in \X$ such that $\nabla p_1|_{\x} \neq \0$. For the sake of contradiction, suppose that no such $\x \in \X$ exists. Since $0 \neq p_1 \in \I_{\X,k}$, $p_1$ can not be a constant polynomial, and so there exists some $j \in [D]$ such that the degree $k-1$ polynomial $\frac{\partial p_1}{\partial x_{j}}$ is not the zero polynomial. Now, by hypothesis $\nabla p_1\big|_{\x}=\0, \, \forall \x \in \X$, hence $\frac{\partial p_1}{\partial x_j}\big|_{\x}=0, \, \forall \x \in \X$. But then, $0 \neq \frac{\partial p_1}{\partial x_j} \in \I_{\X,k-1}$ and this would contradict the hypothesis that $k$ is the smallest index such that $\I_{\X,k} \neq 0$. Hence there exists $\x \in \X$ such that $\nabla p_1|_{\x} \neq \0$. To show that $\x$ can be chosen to be non-zero, note that if $k=1$, then $\nabla p_1$ is a constant vector and we can take $\x$
to be any non-zero element of $\X$. If $k>1$ then $\nabla p_1|_{\0}=\0$ and so $\x$ must necessarily be different from zero. 

Next, we establish that $\x \in \S_1 - \bigcup_{i > 1} \S_{i}$.
Without loss of generality we can assume that $\x \in \X_1:=\X \cap \S_1$. For the sake of contradiction, suppose that $\x \in \S_{1} \cap \S_{i}$ for some $i > 1$. Since $\x \neq \0$, there is some index $j \in [D]$ such that the $j^{th}$ coordinate of $\x$, denoted by $\chi_j$, is different from zero. Define $g(x) := x_j^{n-k} p_1(x)$. Then $g \in \I_{\X,n}$ and by the general position assumption we also have that $g \in \I_{\A,n}$. Since $\A$ is assumed transversal, by Theorem \ref{thm:I=J}, $g$ can be written in the form 
\begin{align}
g = \sum_{r_i \in [c_i], \,  i \in [n]} c_{r_1,\dots,r_n} l_{r_1,1} \cdots l_{r_n,n}, \label{eq:p_m}
\end{align} 
where $c_{r_1,\dots,r_n} \in \Re$ is a scalar coefficient, $l_{r_i,i}$ is a linear form vanishing on $\S_i$, and the summation runs over all multi-indices $(r_1,\dots,r_n) \in [c_1]\times \cdots \times [c_n]$. Then evaluating the gradient of the expression on the right of \eqref{eq:p_m} at $\x$, and using the hypothesis that $\x \in \S_{1} \cap \S_{i}$ for some $i>1$, we see that $\nabla g|_{\x}=\0$. However, evaluating the gradient of $g$ at $\x$ from the formula $g(x) := x_j^{n-\ell} p_1(x)$, we get $\nabla g|_{\x}  = \chi_j^{n-k} \nabla p_1|_{\x} \neq \0$. This contradiction implies that the hypothesis $\x \in \S_1 \cap \S_i$ for some $i>1$ can not be true, i.e., $\x$ lies only in the irreducible component $\S_1$.
\end{proof}

Using the notation established so far and setting $\b_1=\nabla p_1|_{\x}$, the hyperplane of $\Re^D$ given by $\V_1 = \Span(\b_1)^\perp=\mathcal{Z}(\b_1^\transpose x)$ contains the irreducible component of $\A$ associated with the reference point $\x$, i.e., $\V_1 \supset \S_1$. Then we can define a subspace sub-arrangement $\A_{1}$ of $\A$ by
\begin{align}
\A_{1} := \A \cap \V_{1} = \S_1 \cup (\S_2 \cap \V_{1}) \cup \cdots \cup  (\S_n \cap \V_{1}). 
\end{align} Observe that $\A_{1}$ can be viewed as a subspace arrangement of $\V_{1}$, since $\A_{1} \subset \V_{1}$ (see also the commutative diagram of eq. \eqref{eq:commutative-diagram}). Certainly, our algorithm can not manipulate directly the infinite sets $\A$ and $\V_1$. Nevertheless, these sets are algebraic varieties and as a consequence we can perform their intersection in the algebraic domain. That is, we can obtain a set of polynomials defining $\A \cap \V_1$, as shown next.\footnote{Lemma \ref{lem:GeneratorsIntersection} is a special case of Proposition \ref{prp:VarietiesIntersection}.}

\begin{lemma} \label{lem:GeneratorsIntersection}
	$\A_1:=\A \cap \V_1$ is the zero set of the ideal generated by $\I_{\X, m}$ and $\b_1^\transpose x$, i.e., 
	\begin{align}
	\A_1=\mathcal{Z} \left(\mathfrak{a}_1 \right), \, \, \, \mathfrak{a}_1:=\langle \I_{\X, m}\rangle+\langle \b_1^\transpose x \rangle.
	\end{align}
\end{lemma}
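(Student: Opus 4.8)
The statement asserts that the set-theoretic intersection $\A_1 = \A \cap \V_1$ equals the zero set of the ideal $\mathfrak{a}_1 = \langle \I_{\X,m}\rangle + \langle \b_1^\transpose x\rangle$. I would prove this by double inclusion, after first replacing $\I_{\X,m}$ by $\I_{\A,m}$, which is legitimate since $\X$ is in general position inside $\A$ with respect to degree $m$, so $\I_{\X,m}=\I_{\A,m}$ by Proposition \ref{prp:GeneralPositionLowDegrees}. The key input is that $\A$ is a transversal subspace arrangement, hence (via the regularity statement, Proposition \ref{prp:Regularity}, already invoked in the excerpt) its vanishing ideal $\I_{\A}$ is generated in degrees $\le m$; equivalently $\A = \Z(\I_{\A,m}) = \Z(\langle \I_{\A,m}\rangle)$. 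This is the fact that makes the single graded piece $\I_{\A,m}$ (equivalently $\I_{\X,m}$) suffice to cut out all of $\A$.

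For the inclusion $\mathcal{Z}(\mathfrak{a}_1) \subseteq \A_1$: if $\x \in \mathcal{Z}(\mathfrak{a}_1)$, then $\x$ annihilates every element of $\I_{\X,m}=\I_{\A,m}$, hence $\x \in \Z(\langle \I_{\A,m}\rangle) = \A$; and $\x$ annihilates $\b_1^\transpose x$, hence $\x \in \Z(\b_1^\transpose x) = \V_1$. Therefore $\x \in \A \cap \V_1 = \A_1$. For the reverse inclusion $\A_1 \subseteq \mathcal{Z}(\mathfrak{a}_1)$: take $\x \in \A \cap \V_1$. Any generator of $\mathfrak{a}_1$ is of the form $f + g\cdot(\b_1^\transpose x)$ with $f$ in the ideal generated by $\I_{\X,m}$ and $g \in \Re[x]$ arbitrary; more precisely a general element of $\mathfrak{a}_1$ is $\sum_i h_i p_i + g\cdot (\b_1^\transpose x)$ with $p_i \in \I_{\X,m} = \I_{\A,m} \subseteq \I_{\A}$. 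Since $\x \in \A$, every $p_i$ vanishes at $\x$; since $\x \in \V_1$, $\b_1^\transpose x$ vanishes at $\x$. Hence every element of $\mathfrak{a}_1$ vanishes at $\x$, i.e. $\x \in \mathcal{Z}(\mathfrak{a}_1)$. Combining the two inclusions gives $\A_1 = \mathcal{Z}(\mathfrak{a}_1)$.

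The only genuinely substantive point — and the step I would flag as the main obstacle, though it is a quotation of an earlier result rather than something to be reproved here — is the equality $\A = \Z(\langle\I_{\A,m}\rangle)$, i.e. that degree-$\le m$ vanishing polynomials already cut out $\A$ set-theoretically. For a general variety one would need its \emph{entire} vanishing ideal, but for a transversal subspace arrangement the Castelnuovo--Mumford regularity bound (Proposition \ref{prp:Regularity} in the appendix, itself building on \cite{Derksen:JPAA07}) guarantees $\I_{\A}$ is generated in degrees $\le m \ge n$, so adding higher-degree generators is redundant. Granting that, the proof is the routine two-inclusion argument above, and indeed the lemma is, as the footnote notes, just the special case of Proposition \ref{prp:VarietiesIntersection} where one variety is the arrangement $\A$ (cut out by $\langle\I_{\X,m}\rangle$) and the other is the hyperplane $\V_1 = \Z(\b_1^\transpose x)$, using that the zero set of a sum of ideals is the intersection of the zero sets.
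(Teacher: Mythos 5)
Your proof is correct and follows essentially the same double-inclusion argument as the paper's, reducing each inclusion to $\I_{\X,m}=\I_{\A,m}$, $\A=\Z(\I_{\X,m})$, and $\V_1=\Z(\b_1^{\transpose}x)$; the paper also notes in a footnote that the lemma is a special case of Proposition~\ref{prp:VarietiesIntersection}. One minor inaccuracy: the regularity bound of Proposition~\ref{prp:Regularity} holds for any subspace arrangement, not just transversal ones, and in any case is not needed here since $\A=\Z(\I_{\X,m})$ is precisely the general-position hypothesis of Definition~\ref{dfn:GeneralPosition}.
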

\begin{proof}
	$(\Rightarrow):$ We will show that $\A_1 \subset \mathcal{Z} \left(\mathfrak{a}_1 \right)$. Let $w$ be a polynomial of $\mathfrak{a}_1$. Then by definition of $\mathfrak{a}_1$, $w$ can be written as $w=w_1+w_2$, where $w_1 \in \langle\I_{\X, m}\rangle$ and $w_2 \in \langle\b_1^\transpose x \rangle$. Now take any point $\y \in \A_1$. Since $\y \in \A$, and $\I_{\X,m} = \I_{\A,m}$, we must have $w_1(\y)=0$. Since $\y \in \V_1$, we must have that $w_2(\y)=0$. Hence $w(\y)=0$, i.e., every point of $\A_1$ is inside the zero set of $\mathfrak{a}_1$.	
	$(\Leftarrow):$ We will show that $\A_1 \supset \mathcal{Z} \left(\mathfrak{a}_1 \right)$. Let $\y \in \mathcal{Z} \left(\mathfrak{a}_1 \right)$, i.e., every element of $\mathfrak{a}_1$ vanishes on $\y$. Hence every element of $\I_{\X, m}$ vanishes on $\y$, i.e., $\y \in \Z(\I_{\X, m})=\A$. In addition, every element of $\langle\b_1^\transpose x \rangle$ vanishes on $\y$, in particular $\b_1^\transpose \y=0$, i.e., $\y \in \V_1$.
\end{proof}

In summary, the computation of the vector $\b_1 \perp \S_1$ completes algebraically the first step of the filtration, which gives us the hyperplane $\V_1$ and the sub-variety $\A_1$. Then, there are two possibilities: $\A_1= \V_1$ or $\A_1 \subsetneq \V_1$. In the first case, we need to terminate the filtration, as explained in section \ref{subsection:SecondStep}, while in the second case we need to take one more step in the filtration, as explained in section \ref{subsection:MultipleSteps}.

%%%%%%
\subsection{Deciding whether to take a second step in a filtration} \label{subsection:SecondStep}

If $\A_1 = \V_1$, we should terminate the filtration because in this case $\V_1 = \S_1$, as Lemma \ref{lem:V=A-geometry} shows, and so we have already identified one of the subspaces. Lemma \ref{lem:V=A-algebra} will give us an algebraic procedure for checking if the condition $\A_1 = \V_1$ holds true, while Lemma~\ref{lem:SinglePolynomialCheck} will give us a computationally more friendly procedure for checking the same condition.
\begin{lemma} \label{lem:V=A-geometry}
	$\V_1 = \A_1$ if and only if $\V_1 = \S_1$.
\end{lemma}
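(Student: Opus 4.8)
The plan is to dispatch the two implications separately; the reverse direction is a one-line sandwiching argument, while the forward direction is where the real content lies and rests on the irreducibility of $\V_1$ as an algebraic variety together with the standing hypotheses on $\A$.

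For the direction $\V_1 = \S_1 \Rightarrow \V_1 = \A_1$, I would simply recall from Section \ref{subsection:FirstStepFiltration} that $\S_1$ is the irreducible component of $\A$ attached to the reference point $\x$ and that $\S_1 \subseteq \V_1$ (this is exactly the content of Proposition \ref{prp:Grd} applied to $\b_1 = \nabla p_1|_{\x}$). Hence
\begin{align}
\S_1 \subseteq \A \cap \V_1 = \A_1 \subseteq \V_1 .
\end{align}
Substituting the hypothesis $\V_1 = \S_1$ forces every inclusion above to be an equality, so $\A_1 = \V_1$.

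For the direction $\V_1 = \A_1 \Rightarrow \V_1 = \S_1$, I would start from the decomposition
\begin{align}
\A_1 = \A \cap \V_1 = \S_1 \cup (\S_2 \cap \V_1) \cup \cdots \cup (\S_n \cap \V_1),
\end{align}
a finite union of linear subspaces, each of which is a closed irreducible subvariety of $\V_1$. Since $\V_1$ is itself a linear subspace, it is irreducible as a variety; therefore, if $\V_1$ equals the finite union on the right, it must be contained in one of the members of that union, and — each member being a subset of $\V_1$ — it must in fact equal one of them. If that member is $\S_1$, we are done. Otherwise $\V_1 = \S_i \cap \V_1$ for some $i \geq 2$, i.e. $\V_1 \subseteq \S_i$, and combining this with $\S_1 \subseteq \V_1$ yields $\S_1 \subseteq \S_i$ with $i \neq 1$.

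The last step is to rule out $\S_1 \subseteq \S_i$. This is where I expect the only real (and very mild) obstacle: one must invoke the standing assumptions. Since the $\S_i$ are by definition the irreducible components of $\A$, the decomposition is irredundant and no component contains another; alternatively, transversality applied to $\mathfrak{I} = \{1,i\}$ gives $\codim(\S_1 \cap \S_i) = \min\{D, \codim\S_1 + \codim\S_i\}$, and $\S_1 \subseteq \S_i$ would make the left side equal $\codim\S_1$, forcing $\codim\S_i = 0$, i.e. $\S_i = \Re^D$, contradicting that $\A$ is a proper (non-trivial) arrangement. Either way $\S_1 \subseteq \S_i$ is impossible, so necessarily $\V_1 = \S_1$, completing the proof.
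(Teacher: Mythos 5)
Your proof is correct and takes essentially the same route as the paper: where the paper works on the ideal side, invoking primality of $\I_{\V_1}$ together with Proposition~\ref{prp:ideals-intersection} and then passing to zero sets, you work directly on the variety side, using irreducibility of $\V_1$ to conclude it must coincide with one member of the finite union — the two formulations are equivalent. One minor remark: your transversality fallback omits the case $\codim\S_1 + \codim\S_i > D$, where transversality would instead force $\codim\S_1 = D$, i.e.\ $\S_1 = \{0\}$ (ruled out since the reference point $\x \in \S_1$ is nonzero by Lemma~\ref{lem:existsx}); your primary argument via irredundancy of the irreducible decomposition is clean and sufficient.
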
	
\begin{proof}
	$(\Rightarrow):$ Suppose $\V_1 = \A_1 \doteq \S_1 \cup (\S_2 \cap \V_{1}) \cup \cdots \cup  (\S_n \cap \V_{1})$. 
	Taking the vanishing-ideal operator on both sides, we obtain
	\begin{align}	\label{eq:IA1}
	\I_{\V_1} = \I_{\S_1} \cap \I_{\S_2 \cap \V_{1}} \cap \cdots \cap \I_{\S_n \cap \V_{1}}.
	\end{align} Since $\V_1$ is a linear subspace, $\I_{\V_1}$ is a prime ideal by Proposition \ref{prp:ISprime}, and so by Proposition \ref{prp:ideals-intersection} $\I_{\V_1}$ must contain one of the ideals
	$\I_{\S_1},\I_{\S_2 \cap \V_1},\dots,\I_{\S_n \cap \V_1}$. Suppose that $\I_{\V_1} \supset \I_{\S_i \cap \V_1}$ for some $i >1$. Taking the zero-set operator on both sides, and using Proposition \ref{prp:closure} and the fact that linear subspaces are closed in the Zariski topology, we obtain $\V_1 \subset \S_i \cap \V_1$, which implies that $\V_1 \subset \S_i$. Since $\S_1 \subset \V_1$, we must have that $\S_1 \subset \S_i$, which contradicts the assumption of transversality on $\A$. Hence it must be the case that $\I_{\V_1} \supset \I_{\S_1}$. Taking the zero-set operator on both sides we get $\V_1 \subset \S_1$, which implies that $\V_1 = \S_1$, since $\S_1 \subset \V_1$.
	$(\Leftarrow):$ Suppose $\V_1 = \S_1$. Then $\V_1=\S_1 \subset \A_1 \subset \V_1 = \S_1$ and so $\A_1 = \V_1$.
\end{proof}	

Knowing that a filtration terminates if $\A_1 = \V_1$, we need a mechanism for checking this condition. The next lemma shows how this can be done in the algebraic domain.
 
\begin{lemma} \label{lem:V=A-algebra}
$\V_1 = \A_1$ if and only if $\I_{\X, m} \subset \langle\b_1^\transpose x \rangle_{ m}$.
\end{lemma}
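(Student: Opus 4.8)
The plan is to reduce the whole equivalence to one elementary fact: since $\V_1 = \mathcal{Z}(\b_1^\transpose x)$ is a hyperplane, its degree-$m$ vanishing piece is exactly the degree-$m$ graded component of the principal ideal it cuts out, namely $\I_{\V_1,m} = \langle \b_1^\transpose x \rangle_m$. One inclusion is trivial (every degree-$m$ multiple of $\b_1^\transpose x$ vanishes on $\V_1$); for the reverse inclusion I would apply an orthogonal change of coordinates taking the linear form $\b_1^\transpose x$ to $x_1$, so that $\V_1 = \{x_1 = 0\}$, and then observe that a homogeneous polynomial $p$ vanishing on $\V_1$ satisfies $p(0,x_2,\dots,x_D) = 0$, whence $x_1 \mid p$. (Alternatively one can quote the description of the vanishing ideal of a linear subspace from the appendix.) Everything else follows by unwinding definitions.

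Granting this, the forward direction is short. Assuming $\V_1 = \A_1$, recall $\A_1 := \A \cap \V_1$, so $\V_1 \subset \A$. Take any $p \in \I_{\X,m}$; by the general position hypothesis and Proposition \ref{prp:GeneralPositionLowDegrees} we have $\I_{\X,m} = \I_{\A,m}$, so $p$ vanishes on $\A$, hence on $\V_1$. Therefore $p \in \I_{\V_1,m} = \langle \b_1^\transpose x \rangle_m$, which is precisely $\I_{\X,m} \subset \langle \b_1^\transpose x \rangle_m$.

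For the converse I would argue as follows. Assume $\I_{\X,m} \subset \langle \b_1^\transpose x \rangle_m$. Every element of $\langle \b_1^\transpose x \rangle_m$ is divisible by $\b_1^\transpose x$ and hence vanishes on $\V_1$, so every polynomial in $\I_{\X,m}$ vanishes on $\V_1$. By Definition \ref{dfn:GeneralPosition}, $\A = \mathcal{Z}(\I_{\X,m})$, and therefore $\V_1 \subset \A$. Combining with the tautological $\V_1 \subset \V_1$ gives $\V_1 \subset \A \cap \V_1 = \A_1$; since the reverse inclusion $\A_1 = \A \cap \V_1 \subset \V_1$ always holds, we conclude $\A_1 = \V_1$.

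I expect the only mildly delicate step to be the identity $\I_{\V_1,m} = \langle \b_1^\transpose x \rangle_m$, i.e., that a homogeneous polynomial vanishing on the hyperplane $\V_1$ must be divisible by the linear form defining it; the remainder is just bookkeeping with $\I_{\X,m} = \I_{\A,m}$ and $\A = \mathcal{Z}(\I_{\X,m})$, and it does not even use transversality.
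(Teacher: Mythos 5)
Your proof is correct. The forward direction is essentially the paper's own argument: take $p \in \I_{\X,m}=\I_{\A,m}$, note that $\V_1 \subset \A$, and conclude $p \in \I_{\V_1,m}=\langle\b_1^\transpose x\rangle_m$; the identity $\I_{\V_1}=\langle\b_1^\transpose x\rangle$ you sketch is exactly Proposition~\ref{prp:VIS}. The converse, however, is genuinely different and considerably simpler than the paper's. The paper proves the converse by contradiction: it first invokes Lemma~\ref{lem:V=A-geometry} (whose proof uses transversality) to pass from $\A_1 \subsetneq \V_1$ to $\S_1 \subsetneq \V_1$, then constructs a product of linear forms $(\bzeta_1^\transpose x)\cdots(\bzeta_n^\transpose x)\in\I_{\X,n}$ with each $\bzeta_i$ not collinear with $\b_1$ (again using transversality to rule out inclusions $\S_1 \subset \S_i$), and finally appeals to primeness of $\langle\b_1^\transpose x\rangle$ to reach a contradiction. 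Your converse is a short direct argument that avoids all of this: the inclusion $\I_{\X,m}\subset\langle\b_1^\transpose x\rangle_m$ forces every polynomial in $\I_{\X,m}$ to vanish on $\V_1$, and since by Definition~\ref{dfn:GeneralPosition} the arrangement is $\A=\mathcal{Z}(\I_{\X,m})$, you get $\V_1\subset\A$ and hence $\A_1=\A\cap\V_1=\V_1$ immediately. In particular, as you observe, you never use transversality or primeness; the lemma as stated is really just a direct unwinding of the general-position hypothesis. What the paper's route highlights is the interplay between products of linear forms and vanishing ideals, a thread it keeps pulling elsewhere, but as a proof of this specific equivalence your version is cleaner and more elementary.
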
 

\begin{proof}
	$(\Rightarrow):$ Suppose $\A_1=\V_1$. Then $\A \supset \V_1$ and by taking vanishing ideals on both sides we get $\I_{\A} \subset \I_{\V_1}=\langle\b_1^\transpose x \rangle$. Since $\I_{\X, m} = \I_{\A, m} \subset \I_{\A}$, it follows that $\I_{\X, m} \subset \langle\b_1^\transpose x \rangle_m$.
	$(\Leftarrow):$ Suppose $\I_{\X, m} \subset \langle\b_1^\transpose x \rangle_{ m}$ and for the sake of contradiction suppose that $\A_1 \subsetneq \V_1$. In particular, from Lemma \ref{lem:V=A-geometry} we have that $\S_1 \subsetneq \V_1$. Hence, there exists a vector $\bzeta_1$ linearly independent from $\b_1$ such that $\bzeta_1 \perp \S_1$. Now for any $i > 1$, there exists $\bzeta_i$ linearly independent from $\b_1$ such that $\bzeta_i \perp \S_i$. For if not, then $\I_{\S_i} \subset \I_{\V_1}$ and so $\S_i \supset \V_1$, which leads to the contradiction $\S_i \supset \S_1$. Then the polynomial $(\bzeta_1^\transpose x) \cdots (\bzeta_n^\transpose x)$ is an element of $\I_{\A,n}=\I_{\X,n}$ and by the hypothesis that $\I_{\X, m} \subset \langle\b_1^\transpose x \rangle_m$ we must have that
	$(\bzeta_1^\transpose x)^{m-n+1} \cdots (\bzeta_n^\transpose x) \in \langle\b_1^\transpose x\rangle$. But $\langle\b_1^\transpose x\rangle$ is a prime ideal and so one of the factors of $(\bzeta_1^\transpose x) \cdots (\bzeta_n^\transpose x)$ must lie in $\langle\b_1^\transpose x\rangle$. So suppose $\bzeta_j^\transpose x \in \langle\b_1^\transpose x\rangle$, for some $j \in [n]$. This implies that there must exist a polynomial $h$ such that
	$\bzeta_j^\transpose x = h \, (\b_1^\transpose x)$. By degree considerations, we conclude that $h$ must be a constant, in which case the above equality implies $\bzeta_j \cong \b_1$. But this is a contradiction on the definition of $\bzeta_j$. Hence it can not be the case that $\A_1 \subsetneq \V_1$.
\end{proof}

Notice that checking the condition $\I_{\X, m} \subset \langle\b_1^\transpose x \rangle_{ m}$ in Lemma \ref{lem:V=A-algebra}, requires computing a basis of $\I_{\X,m}$ and checking whether each element of the basis is divisible by the linear form $\b_1^\transpose \x$. Equivalently, to check the inclusion of finite dimensional vector spaces $\I_{\X, m} \subset \langle\b_1^\transpose x \rangle_{ m}$ we need to compute a basis $\B_{\X, m}$ of $\I_{\X,m}$ as well as a basis $\B$ of $\langle\b_1^\transpose x \rangle_{ m}$ and check whether the rank equality $\rank([\B_{\X,m} \, \, \B]) = \rank(\B)$ holds true. Note that a basis of $\langle\b_1^\transpose x \rangle_{ m}$ can be obtained in a straightforward manner by multiplying all monomials of degree $m-1$ with the linear form $\b_1^\transpose x$. On the other hand, computing a basis of $\I_{\X,m}$ by computing a basis for the right nullspace of $\nu_m(\X)$ can be computationally expensive, particularly when $m$ is large. If however, the points $\X \cap \S_1$ are in general position in $\S_1$ with respect to degree $m$, then checking the condition $\I_{\X,m} \subset \langle\b_1^\transpose x \rangle_{ m}$ can be done more efficiently, as we now explain. Let $\boldsymbol{V}_1=[\v_1,\dots,\v_{D-1}]$ be a basis for the vector space $\V_1$. Then $\V_1$ is isomorphic to $\Re^{D-1}$ under the linear map $\sigma_{\boldsymbol{V}_1}: \V_1 \rightarrow \Re^{D-1}$ that takes a vector $\v = \alpha_1 \v_1 + \cdots + \alpha_{D-1} \v_{D-1}$ to its coordinate representation $(\alpha_1,\dots,\alpha_{D-1})^\transpose$. Then the next result says that checking the condition $\V_1 = \A_1$ is equivalent to checking the rank-deficiency of the embedded data matrix $\nu_{m}(\sigma_{\boldsymbol{V}_1}(\X \cap \V_1))$, which is computationally a simpler task than computing the right nullspace of $\nu_m(\X)$.
\begin{lemma} \label{lem:SinglePolynomialCheck}
	Suppose that $\X_1$ is in general position inside $\S_1$ with respect to degree $m$. Then 
	$\V_1 = \A_1$ if and only if the embedded data matrix $\nu_{m}(\sigma_{\boldsymbol{V}_1}(\X \cap \V_1))$ is full rank.
\end{lemma}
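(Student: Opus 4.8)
The plan is to translate the rank condition on $\nu_{m}(\sigma_{\boldsymbol{V}_1}(\X \cap \V_1))$ into a statement about vanishing polynomials and then invoke Lemmas \ref{lem:V=A-geometry} and \ref{lem:V=A-algebra}. Write $J := \I_{\X \cap \V_1, m}$ for the space of homogeneous degree-$m$ elements of $\Re[x]$ that vanish on the data points lying in $\V_1$; note that $\langle \b_1^\transpose x \rangle_m \subseteq J$ always, since $\b_1^\transpose x$ vanishes on all of $\V_1$.

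\textbf{Step 1 (the rank dictionary).} First I would show that $\nu_{m}(\sigma_{\boldsymbol{V}_1}(\X \cap \V_1))$ has full column rank (equivalently, trivial right null space) if and only if $J = \langle \b_1^\transpose x \rangle_m$. The right null space of this matrix consists exactly of the coefficient vectors of homogeneous degree-$m$ forms in $D-1$ indeterminates that vanish on $\sigma_{\boldsymbol{V}_1}(\X \cap \V_1)$. Since $\sigma_{\boldsymbol{V}_1}\colon \V_1 \to \Re^{D-1}$ is a linear isomorphism, pulling back along it identifies this space with the space of degree-$m$ polynomial functions on $\V_1$ that vanish on $\X \cap \V_1$; and the restriction-to-$\V_1$ map on homogeneous degree-$m$ polynomials is surjective onto the degree-$m$ polynomial functions on $\V_1$, with kernel $\langle \b_1^\transpose x \rangle_m$, carrying $J$ onto exactly that space. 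Hence the right null space is trivial if and only if $J/\langle \b_1^\transpose x \rangle_m = 0$, i.e., $J = \langle \b_1^\transpose x \rangle_m$. This is essentially bookkeeping about the Veronese embedding and the chosen coordinates on $\V_1$ (in particular, the rank does not depend on the basis $\boldsymbol{V}_1$, since two choices differ by an invertible linear change of variables on $\Re^{D-1}$), but it is the step that needs the most care; everything after it is short.

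\textbf{Step 2 ($\Leftarrow$).} Suppose the matrix is full rank, so $J = \langle \b_1^\transpose x \rangle_m$ by Step 1. Any $p \in \I_{\X,m}$ vanishes on $\X$, hence on $\X \cap \V_1$, so $\I_{\X,m} \subseteq J = \langle \b_1^\transpose x \rangle_m$; by Lemma \ref{lem:V=A-algebra}, $\V_1 = \A_1$. (This direction does not use the general-position hypothesis on $\X_1$.)

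\textbf{Step 3 ($\Rightarrow$).} Suppose $\V_1 = \A_1$. By Lemma \ref{lem:V=A-geometry}, $\V_1 = \S_1$, so $\X \cap \V_1 = \X \cap \S_1 = \X_1$ and therefore $J = \I_{\X_1,m}$. By the definition of general position (Definition \ref{dfn:GeneralPosition}), $\S_1 = \Z(\I_{\X_1,m})$, which together with $\X_1 \subseteq \S_1$ yields $\I_{\X_1,m} = \I_{\S_1,m}$. Finally, since $\S_1 = \V_1 = \Z(\b_1^\transpose x)$ is the hyperplane with normal $\b_1$, its vanishing ideal is $\langle \b_1^\transpose x \rangle$, whence $\I_{\S_1,m} = \langle \b_1^\transpose x \rangle_m$. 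Chaining these equalities gives $J = \langle \b_1^\transpose x \rangle_m$, and Step 1 then shows the matrix is full rank. The main obstacle is Step 1: setting up cleanly the correspondence between the right null space of the coordinate-transformed Veronese data matrix and the quotient $J/\langle \b_1^\transpose x \rangle_m$. Once that dictionary is in place, the lemma follows immediately from Lemmas \ref{lem:V=A-geometry}, \ref{lem:V=A-algebra} and the definition of general position, and no genuinely new ideas are required.
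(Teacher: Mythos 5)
Your proof is correct, and its overall architecture coincides with the paper's: both reduce the rank statement to the equality $\I_{\X \cap \V_1, m}=\langle \b_1^\transpose x\rangle_{m}$, and your $(\Rightarrow)$ is essentially identical to the paper's (via Lemma \ref{lem:V=A-geometry}, then general position of $\X_1$ giving $\I_{\X_1,m}=\I_{\S_1,m}=\langle\b_1^\transpose x\rangle_m$). The one place you diverge is the $(\Leftarrow)$ direction: the paper re-derives it from scratch by a contradiction argument (produce a degree-$\le m$ polynomial vanishing on $\A_1$ but not on $\V_1$, note it lands in $\I_{\X\cap\V_1,m}=\langle\b_1^\transpose x\rangle_m$, contradiction), whereas you observe $\I_{\X,m}\subseteq\I_{\X\cap\V_1,m}=\langle\b_1^\transpose x\rangle_m$ and hand this directly to Lemma \ref{lem:V=A-algebra}. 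Your route is shorter, avoids the mild degree-padding issue that the paper's argument glosses over (a vanishing polynomial of degree $<m$ is not literally an element of $\I_{\X\cap\V_1,m}$ and must first be multiplied up), and makes transparent that general position of $\X_1$ inside $\S_1$ is only needed for $(\Rightarrow)$. Your Step 1, spelling out the isomorphism between the right null space of $\nu_m(\sigma_{\boldsymbol{V}_1}(\X\cap\V_1))$ and the quotient $\I_{\X\cap\V_1,m}/\langle\b_1^\transpose x\rangle_m$, makes explicit a bookkeeping step the paper treats as self-evident in its opening sentence; it is the right thing to check and you check it correctly.
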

\begin{proof}
	The statement is equivalent to the statement ``$\V_1=\A_1$ if and only if
	$\I_{\X \cap \V_1, m}=\langle \b_1^\transpose x\rangle_{m}$", which we now prove.
	$(\Rightarrow):$ Suppose $\V_1 = \A_1$. Then by Lemma \ref{lem:V=A-geometry} $\V_1=\S_1$, which implies that $\I_{\S_1}=\langle \b_1^\transpose x \rangle$. This in turn implies that $\I_{\S_1, m}=\langle \b_1^\transpose x\rangle_{ m}$. Now
	$\I_{\X \cap \V_1, m} = \I_{\X \cap \S_1, m} = \I_{\X_1, m}$. By the general position hypothesis on $\X_1$ we have $\I_{\S_1, m} = \I_{\X_1, m}$. Hence $\I_{\X \cap \V_1, m} = \langle \b_1^\transpose x\rangle_{ m}$. 
	$(\Leftarrow):$ Suppose that $\I_{\X \cap \V_1, m} = \langle \b_1^\transpose x\rangle_{ m}$.
	For the sake of contradiction, suppose that $\A_1 \subsetneq \V_1$. Since $\A_1$ is an arrangement of at most $m$ subspaces, there exists a homogeneous polynomial $p$ of degree at most $m$ that vanishes on $\A_1$ but does not vanish on $\V_1$. Since $\X \cap \V_1 \subset \A_1$, $p$ will vanish on $\X \cap \V_1$, i.e., $p \in \I_{\X \cap \V_1, m}$ or equivalently $p \in \langle \b_1^\transpose x\rangle_{ m}$ by hypothesis. But then $p$ vanishes on $\V_1$, which is a contradiction; hence it must be the case that $\V_1 = \A_1$.	 
\end{proof}

%%%%%%%%%%
\subsection{Taking multiple steps in a filtration and terminating}\label{subsection:MultipleSteps}
If $\A_1 \subsetneq \V_1$, 
%or equivalently if $\I_{\X,\le m} \not\subset \langle\b_1^\transpose x \rangle$ by Lemma \ref{lem:V=A-algebra}, 
then it follows from Lemma \ref{lem:V=A-geometry} that $\S_1 \subsetneq \V_1$. Therefore, subspace $\S_1$ has not yet been identified in the first step of the filtration and we should take a second step. As before, we can start constructing the second step of our filtration by choosing a suitable vanishing polynomial $p_2$, such that its gradient at the reference point $\x$ is not colinear with $\b_1$. The next Lemma shows that such a $p_2$ always exists.

%%%%%%
\begin{lemma}  \label{lem:exists-q}
$\X$ admits a homogeneous vanishing polynomial $p_2$ of degree $\ell \le n$, such that $p_2 \not\in \I_{\V_1}$ and $\nabla p_2|_{\x} \not\in \Span(\b_{1})$.
\end{lemma}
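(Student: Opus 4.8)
The plan is to produce $p_2$ as a product of linear forms, $n$ of them, one for each subspace $\S_i$, chosen so that (i) the factor for $\S_1$ is \emph{not} colinear with $\b_1$, and (ii) at least one factor does not vanish on $\V_1$. Concretely, I would argue as follows. First, since $\A_1 \subsetneq \V_1$, Lemma \ref{lem:V=A-geometry} gives $\S_1 \subsetneq \V_1$, so $\S_1^\perp$ has dimension at least two and is not contained in $\Span(\b_1)$; hence there exists $\bzeta_1 \perp \S_1$ with $\bzeta_1 \not\cong \b_1$, i.e. $\bzeta_1^\transpose x \not\in \langle \b_1^\transpose x\rangle$ and so $\bzeta_1^\transpose x \not\in \I_{\V_1}$. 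Next, for each $i > 1$ I would pick $\bzeta_i \perp \S_i$; using transversality I can additionally arrange $\bzeta_i \not\in \S_1^\perp$ (since $\S_i^\perp \not\subset \S_1^\perp$, otherwise $\S_1 \subset \S_i$), although for the statement it is actually enough that the product does not vanish identically on $\V_1$, which is already guaranteed by the factor $\bzeta_1^\transpose x$.

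Then I would set $p_2 := (\bzeta_1^\transpose x)(\bzeta_2^\transpose x)\cdots(\bzeta_n^\transpose x)$. This has degree $n$, it vanishes on every $\S_i$ hence on $\A$ hence on $\X$, so $p_2 \in \I_{\X,n}$ — note $n \le m$ so this is a legitimate vanishing polynomial of $\X$. To check $p_2 \not\in \I_{\V_1}$: if $p_2$ vanished on $\V_1$, then since $\I_{\V_1} = \langle \b_1^\transpose x\rangle$ is prime, one of the factors $\bzeta_i^\transpose x$ would lie in $\langle \b_1^\transpose x\rangle$, forcing $\bzeta_i \cong \b_1$ by a degree argument; this contradicts $\bzeta_1 \not\cong \b_1$ for the $i=1$ factor and is handled for $i>1$ by the choice above (or simply: the $\S_1$-factor $\bzeta_1$ already fails to be colinear with $\b_1$, and that is the factor one uses). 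Finally, to check $\nabla p_2|_{\x} \not\in \Span(\b_1)$: recall from Lemma \ref{lem:existsx} that $\x \in \S_1 - \bigcup_{i>1}\S_i$, so $\bzeta_1^\transpose \x = 0$ while $\bzeta_i^\transpose \x \neq 0$ for all $i > 1$ (here general position of $\x$ in $\S_1$ relative to the other components is used, and transversality lets us choose the $\bzeta_i$, $i>1$, with $\bzeta_i^\transpose \x \neq 0$); therefore the product rule gives $\nabla p_2|_{\x} = \big(\prod_{i>1}\bzeta_i^\transpose\x\big)\,\bzeta_1 \cong \bzeta_1$, which is not colinear with $\b_1$ by construction. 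The degree of $p_2$ is $n \le m$, but if a lower-degree vanishing polynomial with these properties is available one can of course take it; the statement only claims $\ell \le n$.

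The main obstacle is the bookkeeping around the choices of the $\bzeta_i$ for $i > 1$: one must simultaneously ensure $\bzeta_i \perp \S_i$, $\bzeta_i^\transpose \x \neq 0$, and (for cleanliness) $\bzeta_i \not\cong \b_1$. Each of these is a "generic" condition — the bad sets are proper subspaces (or proper subvarieties) of $\S_i^\perp$ — and the argument that $\S_i^\perp$ is not contained in any of them reduces, via transversality, to statements like $\S_1 \not\subset \S_i$ and $\x \notin \S_i$, both of which hold by hypothesis. So the real work is checking that these finitely many "avoid a proper subspace" constraints are jointly satisfiable, which follows because a vector space over an infinite field is not a finite union of proper subspaces. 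Everything else (the gradient computation, primeness of $\langle \b_1^\transpose x\rangle$, the degree argument) is routine and parallels the proof of Lemma \ref{lem:V=A-algebra}.
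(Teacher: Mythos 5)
Your proof is correct and follows the same route as the paper's: build $p_2$ as a product of linear forms $(\bzeta_1^\transpose x)\cdots(\bzeta_n^\transpose x)$ with the $\S_1$-factor $\bzeta_1 \perp \S_1$ chosen not colinear with $\b_1$ (possible since $\S_1 \subsetneq \V_1$), and the other factors $\bzeta_i \perp \S_i$ chosen with $\bzeta_i \not\perp \x$; then the gradient at $\x$ is a nonzero multiple of $\bzeta_1$, and primeness of $\langle \b_1^\transpose x\rangle$ gives $p_2 \notin \I_{\V_1}$. The one simplification you missed, which dissolves your ``bookkeeping'' worry at the end, is that for $i>1$ the single constraint $\bzeta_i \not\perp \x$ already forces $\bzeta_i \not\cong \b_1$ automatically (since $\b_1 \perp \x$), so no genericity/avoid-finitely-many-subspaces argument is needed.
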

\begin{proof} Since $\A_1 \subsetneq \V_1$, Lemma \ref{lem:V=A-geometry} implies that $\S_1 \subsetneq \V_1$. Then there exists a vector $\bzeta_1$ that is orthogonal to $\S_1$ and is linearly independent from $\b_1$. Since $\x \in \S_1 - \bigcup_{i>1}\S_i$, for each $i>1$ we can find a vector $\bzeta_i$ such that $\bzeta_i \not\perp \x$ and $\bzeta_i \perp \S_i$. Notice that the pairs $\b_1,\bzeta_i$ are linearly independent for $i>1$, since $\b_1 \perp \x$ but $\bzeta_i \not\perp \x$. Now, the polynomial $p_2:=(\bzeta_1^\transpose x)\cdots (\bzeta_n^\transpose x)$ has degree $n$ and vanishes on $\A$, hence $p_2 \in \I_{\X,\le m}$. Moreover, 
	$\nabla  p_2|_{\x} =
	(\bzeta_2^\transpose \x)\cdots (\bzeta_n^\transpose \x) \bzeta_1 \neq 0$, since by hypothesis $\bzeta_i^\transpose \x \neq 0, \forall i>1$. Since $\bzeta_1$ is linearly independent from $\b_1$, we have $\nabla p_2|_{\x} \not\in \Span(\b_1)$. Finally, $p_2$ does not vanish on $\V_1$, by a similar argument to the one used in the proof of Lemma \ref{lem:V=A-algebra}.
\end{proof}
%%%%%%

\begin{remark}
Note that if $\ell$ is the degree of $p_2$ as in Lemma \ref{lem:exists-q}, and if $q_1,\dots,q_s$ is a basis for $\I_{\X,\ell}$, then at least one of the $q_i$ satisfies the conditions of the Lemma. This is important algorithmically, because it implies that the search for our $p_2$ can be done sequentially. We can start by first computing a minimal-degree polynomial in $\I_{\A,k}$, and see if it satisfies our requirements. If not, then we can compute a second linearly independent polynomial and check again. We can continue in that fashion until we have computed a full basis for $\I_{\X,k}$. If no suitable polynomial has been found, we can repeat the process for degree $k+1$, and so on, until we have reached degree $n$, if necessary.
\end{remark}

By using a polynomial $p_2$ as in Lemma \ref{lem:exists-q}, Proposition \ref{prp:Grd} guarantees that $\nabla p_2|_{\x}$ will be
orthogonal to $\S_1$. Recall though that for the purpose of the filtration we are interested in constructing a hyperplane $\V_2$ of $\V_1$. Since there is no guarantee that $\nabla p_2|_{\x}$ is inside $\V_1$ (thus defining a hyperplane of $\V_1$), 
we must project $\nabla p_2|_{\x}$ onto $\V_{1}$ and guarantee that this projection is still orthogonal to $\S_1$. The next Lemma ensures that this is always the case.

\begin{lemma} \label{lem:project-q}
	Let $0 \neq p_2 \in \I_{\X,\le m} - \I_{\V_1}$
	such that $\nabla p_2|_{\x} \not\in \Span(\b_{1})$. Then $\0 \neq
	\pi_{\V_{1}}(\nabla p_2|_{\x}) \perp \S_1$.
\end{lemma}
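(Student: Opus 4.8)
The plan is to observe that the statement decouples into two nearly independent facts: that the raw gradient $\nabla p_2|_{\x}$ already lies in $\S_1^{\perp}$, and that orthogonal projection onto $\V_1$ neither leaves $\S_1^{\perp}$ nor kills the vector. First I would upgrade the vanishing hypothesis from $\X$ to $\A$: since $p_2 \in \I_{\X,\le m}$ and $\X$ is in general position inside $\A$ with respect to degree $m$, Proposition \ref{prp:GeneralPositionLowDegrees} gives $\I_{\X,k} = \I_{\A,k}$ for every $k \le m$, so (decomposing $p_2$ into homogeneous components, each of which vanishes on the cone $\A$) $p_2$ vanishes on $\A$. Because $\x \in \S_1 - \bigcup_{i>1}\S_i$ is a nonsingular point of $\A$ lying in the component $\S_1$, Proposition \ref{prp:Grd} then yields $\nabla p_2|_{\x} \in \S_1^{\perp}$. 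The analogous conclusion for $\b_1$ is already in hand: it was shown in Section \ref{subsection:FirstStepFiltration} that $\V_1 = \Span(\b_1)^{\perp} \supset \S_1$, i.e. $\b_1 \in \S_1^{\perp}$.

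Next I would write the projection explicitly. Since $\V_1$ is the orthogonal complement of $\b_1$, we have $\pi_{\V_1}(\nabla p_2|_{\x}) = \nabla p_2|_{\x} - \lambda \b_1$ for the scalar $\lambda = \langle \nabla p_2|_{\x},\, \b_1\rangle / \|\b_1\|^2$; that is, the projection differs from $\nabla p_2|_{\x}$ only by a multiple of $\b_1$. As both $\nabla p_2|_{\x}$ and $\b_1$ lie in the linear subspace $\S_1^{\perp}$, so does their difference, and hence $\pi_{\V_1}(\nabla p_2|_{\x}) \perp \S_1$.

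Finally, for the non-vanishing I would use that the kernel of $\pi_{\V_1}$ is exactly $\Span(\b_1)$: if $\pi_{\V_1}(\nabla p_2|_{\x}) = \0$ then $\nabla p_2|_{\x} \in \Span(\b_1)$, contradicting the standing hypothesis $\nabla p_2|_{\x} \notin \Span(\b_1)$. Hence $\pi_{\V_1}(\nabla p_2|_{\x}) \neq \0$, which completes the argument. I do not expect a real obstacle here: the only step that uses anything beyond elementary linear algebra is the passage from ``vanishes on $\X$'' to ``vanishes on $\A$'', for which the general-position hypothesis was tailor-made; everything else is one line once Proposition \ref{prp:Grd} is available.
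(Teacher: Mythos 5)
Your proof is correct and takes a genuinely different route from the paper's. The paper argues by contradiction: it invokes Proposition~\ref{prp:VIS} to write $p_2 = \sum_{j} q_j(x)(\b_{1j}^\transpose x)$ where $\b_{11}=\b_1,\b_{12},\dots,\b_{1c}$ is a basis of $\S_1^\perp$ adapted to $\V_1$, differentiates this expression to read off the coefficients of $\nabla p_2|_{\x}$ in that basis, projects onto $\V_1$ term by term, and shows that if the projection were zero then $\nabla p_2|_{\x}$ would have to be colinear with $\b_1$, contradicting the hypothesis. You instead factor the claim into two independent linear-algebra facts: first, $\nabla p_2|_{\x}\in\S_1^\perp$ is obtained at once from Proposition~\ref{prp:Grd} (which the paper's computation effectively re-derives inline), and second, since $\pi_{\V_1}$ subtracts a multiple of $\b_1\in\S_1^\perp$ from its argument and has kernel exactly $\Span(\b_1)$, the projection stays in $\S_1^\perp$ and cannot vanish by hypothesis. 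Your version is more modular and, to my eye, cleaner: it isolates the one substantive algebraic ingredient (Proposition~\ref{prp:Grd}) and reduces the rest to the definition of orthogonal projection onto a hyperplane, whereas the paper re-does that ingredient concretely in coordinates and then reasons negatively.

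One small caveat, shared by both proofs and worth being aware of: to apply Proposition~\ref{prp:Grd} (or, in the paper's case, Proposition~\ref{prp:VIS}), one must first know that $p_2 \in \I_{\A}$, not merely that $p_2$ vanishes on the finite set $\X$. Your parenthetical ``decomposing $p_2$ into homogeneous components, each of which vanishes on the cone $\A$'' does not close this gap on its own, because a non-homogeneous polynomial can vanish on a \emph{finite} set $\X$ without any of its homogeneous components vanishing there, so Proposition~\ref{prp:GeneralPositionLowDegrees} cannot be applied componentwise. The gap disappears once one remembers that $p_2$ here is always produced homogeneous (Lemma~\ref{lem:exists-q} supplies a homogeneous vanishing polynomial, and the ADF algorithm only examines the homogeneous spaces $\I_{\X,k}$ via the Veronese map), in which case $p_2\in\I_{\X,\ell}=\I_{\A,\ell}\subset\I_{\A}$ directly. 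The paper's proof leans on the same implicit homogeneity when it writes $p_2$ in the form coming from Proposition~\ref{prp:VIS}, so this is not a defect of your argument relative to theirs; it just means the phrase about homogeneous components should be replaced with the observation that $p_2$ is itself homogeneous.
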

\begin{proof}
	For the sake of contradiction, suppose that $\pi_{\V_{1}}(\nabla p_2|_{\x})=0$.
	Setting $\b_{11}:=\b_1$,
	let us augment $\b_{11}$ to a basis $\b_{11},\b_{12}\dots,\b_{1c}$ for the orthogonal complement of $\S_1$ in $\Re^D$. In fact, we can choose the vectors
	$\b_{12},\dots,\b_{1c}$ to be a basis for the orthogonal complement of $\S_1$ inside $\V_{1}$. By proposition
	\ref{prp:VIS}, $p_2$ must have the form
	\begin{align}
	p_2(x)=q_1(x)(\b_{11}^\transpose x)+q_2(x) (\b_{12}^\transpose x)+\cdots +q_c(x) (\b_{1c}^\transpose x),
	\end{align} where $q_1,\dots,q_c$ are homogeneous polynomials of 
	degree $\deg(p_2)-1$. Then
	\begin{align}
	\nabla p_2|_{\x} = q_1(\x)\b_{11}+ q_2(\x)\b_{12}+\cdots + q_c(\x)\b_{1c}. \label{eq:NablaCombination}
	\end{align} Projecting the above equation orthogonally onto $\V_{1}$ we get
	\begin{align}
	\pi_{\V_{1}}(\nabla p_2|_{\x})=q_2(\x)\b_{12}+\cdots + q_c(\x)\b_{1c},\label{eq:NablaProjection}
	\end{align} which is zero by hypothesis. Since $\b_{12},\cdots,\b_{1c}$ are linearly independent vectors of $\V_{1}$ it must be the case that 
	$q_2(\x)=\cdots=q_c(\x)=0$. But this implies that $\nabla p_2|_{\x}=q_1(\x) \b_{11}$, which is a contradiction on the non-colinearity of $\nabla p_2|_{\x}$ with $\b_{11}$.	Hence it must be the case that 
	$0 \neq \pi_{\V_{1}}(\nabla p_2|_{\x})$. The fact that $\pi_{\V_{1}}(\nabla p_2|_{\x}) \perp \S_1$ follows from \eqref{eq:NablaProjection} and the fact that
	by definition $\b_{12},\dots,\b_{1c}$ are orthogonal to $\S_1$.	
\end{proof}

At this point, letting $\b_2:=\pi_{\V_{1}}(\nabla p_2|_{\x})$, we can define $\V_2 = \Span(\b_1,\b_2)^\perp$, which is a subspace of codimension $1$ inside $\V_1$ (and hence of codimension $2$ inside $\V_0:=\Re^D$). As before, we can define a subspace sub-arrangement $\A_2$ of $\A_1$ by intersecting $\A_1$ with $\V_2$. Once again, this intersection can be realized in the algebraic domain as $\A_2 = \mathcal{Z}(\I_{\X, m},\b_1^\transpose x, \b_2^\transpose x)$. Next, we have a similar result as in Lemmas \ref{lem:V=A-geometry} and \ref{lem:V=A-algebra}, which we now prove in general form:

\begin{lemma}\label{lem:FiltrationTermination}
	Let $\b_{1},\dots,\b_{s}$ be $s$ vectors orthogonal to $\S_1$ and define the intermediate ambient space
	$\V_{s} := \Span(\b_{1},\cdots,\b_{s})^\perp$. Let $\A_{s}$ be the subspace arrangement obtained by intersecting
	$\A$ with $\V_{s}$. Then the following are equivalent:
	\begin{enumerate}[(i)]
		\item $\V_{s}=\A_{s}$
		\item $\V_{s}=\S_1$
		\item $\S_1 = \Span(\b_{1},\dots,\b_s)^\perp$
		\item $\I_{\X, m} \subset \langle \b_{1}^\transpose x, \dots, \b_{s}^\transpose x \rangle_{m}$.		
	\end{enumerate}	 
\end{lemma}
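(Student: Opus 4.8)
The plan is to prove the four-way equivalence by a short, non-circular cycle of implications, each being the evident generalization of the $s=1$ arguments already recorded in Lemmas~\ref{lem:V=A-geometry} and~\ref{lem:V=A-algebra}. First observe the standing inclusions: since every $\b_i$ is orthogonal to $\S_1$, we have $\S_1 \subseteq \V_s := \Span(\b_1,\dots,\b_s)^\perp$, and hence $\S_1 \subseteq \A \cap \V_s = \A_s \subseteq \V_s$; this chain will be used repeatedly. The equivalence (i)$\Leftrightarrow$(iii) with (ii) is partly trivial: (ii) and (iii) are \emph{the same statement}, because $\V_s$ is by definition $\Span(\b_1,\dots,\b_s)^\perp$; and (ii)$\Rightarrow$(i) is immediate, since $\V_s=\S_1$ forces $\S_1 = \V_s \supseteq \A_s \supseteq \S_1$, i.e. $\A_s=\V_s$.

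For (i)$\Rightarrow$(ii) I would transcribe the proof of Lemma~\ref{lem:V=A-geometry} at this level of generality. Writing $\A_s = \S_1 \cup (\S_2\cap\V_s) \cup \cdots \cup (\S_n\cap\V_s)$ and passing to vanishing ideals, the hypothesis $\V_s=\A_s$ gives $\I_{\V_s} = \I_{\S_1} \cap \I_{\S_2\cap\V_s} \cap \cdots \cap \I_{\S_n\cap\V_s}$. As $\V_s$ is a linear subspace, $\I_{\V_s}$ is prime (Proposition~\ref{prp:ISprime}), so by Proposition~\ref{prp:ideals-intersection} it contains one of the ideals on the right. If $\I_{\V_s}\supseteq\I_{\S_i\cap\V_s}$ for some $i>1$, taking zero sets (Proposition~\ref{prp:closure}, and the fact that linear subspaces are Zariski-closed) yields $\V_s\subseteq\S_i$, hence $\S_1\subseteq\S_i$, contradicting transversality of $\A$. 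Therefore $\I_{\V_s}\supseteq\I_{\S_1}$, so $\V_s\subseteq\S_1$, and with $\S_1\subseteq\V_s$ we conclude (ii).

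For (i)$\Rightarrow$(iv): from $\V_s=\A_s\subseteq\A$, passing to vanishing ideals gives $\I_{\A}\subseteq\I_{\V_s}=\langle \b_1^\transpose x,\dots,\b_s^\transpose x\rangle$ (the vanishing ideal of the linear subspace $\V_s$ is generated by the linear forms with coefficient vectors in $\Span(\b_1,\dots,\b_s)$). Restricting to degree $m$ and invoking the general-position hypothesis $\I_{\X,m}=\I_{\A,m}$ (Proposition~\ref{prp:GeneralPositionLowDegrees}) yields $\I_{\X,m}\subseteq\langle \b_1^\transpose x,\dots,\b_s^\transpose x\rangle_m$, which is (iv).

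The only implication with genuine content is (iv)$\Rightarrow$(i), mirroring Lemma~\ref{lem:V=A-algebra}. Suppose (iv) holds but $\A_s\subsetneq\V_s$; by the already-established (i)$\Leftrightarrow$(ii) this forces $\S_1\subsetneq\V_s$, i.e. $\V_s^\perp\subsetneq\S_1^\perp$, so I may pick $\bzeta_1\in\S_1^\perp\setminus\V_s^\perp$. For each $i>1$, transversality forbids $\S_i\supseteq\V_s\supseteq\S_1$, so $\S_i^\perp\not\subseteq\V_s^\perp$ and I may pick $\bzeta_i\in\S_i^\perp\setminus\V_s^\perp$. Then $\prod_{i=1}^n(\bzeta_i^\transpose x)$ vanishes on $\A$, hence lies in $\I_{\A,n}=\I_{\X,n}$ (using $n\le m$ and general position), so the degree-$m$ polynomial $q := (\bzeta_1^\transpose x)^{m-n}\prod_{i=1}^n(\bzeta_i^\transpose x)$ lies in $\I_{\X,m}$. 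By (iv), $q\in\langle \b_1^\transpose x,\dots,\b_s^\transpose x\rangle_m\subseteq\I_{\V_s}$, and since $\I_{\V_s}$ is prime, one of the linear factors $\bzeta_j^\transpose x$ must lie in $\I_{\V_s}$, i.e. $\bzeta_j\in\V_s^\perp$ --- contradicting its choice; hence $\A_s=\V_s$. The main points to watch are merely that the cycle stays non-circular (so that (i)$\Leftrightarrow$(ii) is in hand before it is used inside (iv)$\Rightarrow$(i)) and that transversality is invoked exactly where a degenerate containment $\S_1\subseteq\S_i$ or $\V_s\subseteq\S_i$ would otherwise arise; everything else is routine bookkeeping.
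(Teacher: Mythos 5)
Your proof is correct, and the cycle you propose closes properly: (ii)$\Leftrightarrow$(iii) is definitional, (ii)$\Rightarrow$(i) is the squeeze $\S_1 \subseteq \A_s \subseteq \V_s$, (i)$\Rightarrow$(ii) transcribes the prime-ideal argument from Lemma~\ref{lem:V=A-geometry}, and (i)$\Rightarrow$(iv) follows by passing to vanishing ideals and invoking general position. The one implication where you diverge from the paper is (iv)$\Rightarrow$(i). The paper's proof of this step is much lighter: from $\I_{\X,m} \subseteq \langle \b_1^\transpose x,\dots,\b_s^\transpose x\rangle_m = \I_{\V_s,m}$ and $\I_{\X,m}=\I_{\A,m}$ it simply takes zero sets to obtain $\A = \Z(\I_{\A,m}) \supseteq \Z(\I_{\V_s}) = \V_s$, then intersects both sides with $\V_s$ to get $\A_s \supseteq \V_s$; combined with the standing inclusion $\A_s\subseteq\V_s$, this already forces $\A_s=\V_s$. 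That argument needs neither transversality nor primeness in this direction --- only the general-position identity $\A=\Z(\I_{\X,m})$. You instead generalize the heavier contradiction argument from Lemma~\ref{lem:V=A-algebra}: you pass through (i)$\Leftrightarrow$(ii) to deduce $\S_1\subsetneq\V_s$, invoke transversality to build the auxiliary vectors $\bzeta_1,\dots,\bzeta_n$, form a degree-$m$ product of linear forms in $\I_{\X,m}$, and use primeness of $\I_{\V_s}$ to extract a contradiction. This is valid, and it is a faithful lift of the $s=1$ argument you cite, but it is longer than it needs to be and reintroduces transversality where the paper has already extracted everything it needs from it in the (i)$\Rightarrow$(ii) direction. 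You would do well to notice that the simpler zero-set argument works directly once (iv) is rephrased as $\I_{\A,m}\subseteq\I_{\V_s}$.
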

\begin{proof}			
	$(i)\Rightarrow (ii):$ By taking vanishing ideals on both sides of
	$\V_s = \S_1 \bigcup_{i>1} (\S_i \cap \V_s)$ we get 
	$\I_{\V_s} = \I_{\S_1} \bigcap_{i>1} \I_{\S_i \cap \V_s}$. By using
	Proposition \ref{prp:ideals-intersection} in a similar fashion as in the proof of Lemma \ref{lem:V=A-geometry}, we conclude that $\V_s = \S_1$.
	$(ii)\Rightarrow (iii):$ This is obvious from the definition of $\V_s$.
	$(iii)\Rightarrow (iv):$ Let $h \in \I_{\X, m}$. Then $h$ vanishes on $\A$ and hence on $\S_1$ and by Proposition \ref{prp:VIS} we must have that
	$h \in \I_{\S_1}=\langle \b_{1}^\transpose x, \dots, \b_{s}^\transpose x \rangle$. $(iv)\Rightarrow (i):$ $\I_{\X, m} \subset \langle \b_{1}^\transpose x, \dots, \b_{s}^\transpose x \rangle_m$ can be written as $\I_{\X, m} \subset \I_{\V_s}$. By the general position assumption $\I_{\A, m} = \I_{\X, m}$ and so we have $\I_{\A, m} \subset \I_{\V_s}$. Taking zero sets on both sides we get $\A \supset \V_s$, and intersecting both sides of this relation with $\V_s$, we get $\A_s \supset \V_s$. Since $\A_s \subset \V_s$, this implies that $\V_s = \A_s$. 
\end{proof}

Similarly to Lemma \ref{lem:SinglePolynomialCheck} we have:

\begin{lemma} \label{lem:SinglePolynomialCheck-general}
	Let $\boldsymbol{V}_s=[\v_1,\dots,\v_{D-s}]$ be a basis for $\V_s$, and let $\sigma_{\boldsymbol{V}_s}:\V_s \rightarrow \Re^{D-s}$ be the linear map that takes a vector $\v = \alpha_1 \v_1 + \cdots + \alpha_{D-s} \v_{D-s}$ to its coordinate representation $(\alpha_1,\dots,\alpha_{D-s})^\transpose$. Suppose that
	$\X_1$ is in general position inside $\S_1$ with respect to degree $m$. Then
	$\V_s = \A_s$ if and only if the embedded data matrix $\nu_{m}(\sigma_{\boldsymbol{V}_s}(\X \cap \V_s))$ is full rank.
\end{lemma}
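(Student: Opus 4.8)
The plan is to follow the proof of Lemma~\ref{lem:SinglePolynomialCheck} almost verbatim, with the single hyperplane $\V_1$ replaced by the codimension-$s$ subspace $\V_s$, the single normal $\b_1$ replaced by the tuple $\b_1,\dots,\b_s$ (which, by the standing setup, are orthogonal to $\S_1$ and satisfy $\V_s=\Span(\b_1,\dots,\b_s)^\perp$, $\A_s=\A\cap\V_s$), and Lemma~\ref{lem:V=A-geometry} replaced by its multi-step analogue Lemma~\ref{lem:FiltrationTermination}. Concretely, I would split the claim into the two equivalences: (a) $\V_s=\A_s$ if and only if $\I_{\X\cap\V_s,m}=\langle\b_1^\transpose x,\dots,\b_s^\transpose x\rangle_m$; and (b) $\nu_m(\sigma_{\boldsymbol{V}_s}(\X\cap\V_s))$ is full rank if and only if $\I_{\X\cap\V_s,m}=\langle\b_1^\transpose x,\dots,\b_s^\transpose x\rangle_m$. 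Chaining (a) and (b) yields the lemma.

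For (a), the forward direction runs: $\V_s=\A_s$ implies, by Lemma~\ref{lem:FiltrationTermination}, $\S_1=\Span(\b_1,\dots,\b_s)^\perp=\V_s$; since $\b_1,\dots,\b_s$ then span $\S_1^\perp$, Proposition~\ref{prp:VIS} gives $\I_{\S_1}=\langle\b_1^\transpose x,\dots,\b_s^\transpose x\rangle$ and hence $\I_{\S_1,m}=\langle\b_1^\transpose x,\dots,\b_s^\transpose x\rangle_m$; finally $\I_{\X\cap\V_s,m}=\I_{\X\cap\S_1,m}=\I_{\X_1,m}=\I_{\S_1,m}$, the last equality by the general position hypothesis on $\X_1$ via Proposition~\ref{prp:GeneralPositionLowDegrees}. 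For the reverse direction, note first that $\langle\b_1^\transpose x,\dots,\b_s^\transpose x\rangle_m\subseteq\I_{\X\cap\V_s,m}$ always holds, since each $\b_i^\transpose x$ vanishes on $\V_s\supseteq\X\cap\V_s$; assuming equality and, for contradiction, $\A_s\subsetneq\V_s$, I would exhibit a nonzero homogeneous $p$ of degree $m$ that vanishes on $\A_s$ but not on $\V_s$. Such a $p$ exists because $\A_s=\bigcup_i(\S_i\cap\V_s)$ is a union of at most $n\le m$ subspaces, each a \emph{proper} subspace of the irreducible variety $\V_s$ (if $\S_i\cap\V_s=\V_s$ then $\S_1\subseteq\V_s\subseteq\S_i$, impossible by transversality; and $\S_1\subsetneq\V_s$ by Lemma~\ref{lem:FiltrationTermination} since $\A_s\ne\V_s$): take one linear form vanishing on each component but not on $\V_s$, multiply them, and pad the product up to degree $m$ by a power of a linear form not vanishing on $\V_s$. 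Then $p\in\I_{\X\cap\V_s,m}=\langle\b_1^\transpose x,\dots,\b_s^\transpose x\rangle_m$ forces $p$ to vanish on $\V_s$, a contradiction.

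For (b), I would make precise the identification of degree-$m$ forms on $\V_s$, read off in $\sigma_{\boldsymbol{V}_s}$-coordinates, with $\Re[x]_m/\I_{\V_s,m}$. The substitution map $\rho:\Re[x]_m\to\Re[\alpha_1,\dots,\alpha_{D-s}]_m$ given by $\rho(p)(\alpha)=p(\alpha_1\v_1+\cdots+\alpha_{D-s}\v_{D-s})$ is linear and surjective (extend $\v_1,\dots,\v_{D-s}$ to a basis of $\Re^D$ and compose the target polynomial with the first $D-s$ dual coordinate functions), with kernel $\{p:p|_{\V_s}=0\}=\I_{\V_s,m}$, which equals $\langle\b_1^\transpose x,\dots,\b_s^\transpose x\rangle_m$ by Proposition~\ref{prp:VIS} because $\b_1,\dots,\b_s$ span $\V_s^\perp$. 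Since $p(\y)=\rho(p)(\sigma_{\boldsymbol{V}_s}(\y))$ for $\y\in\V_s$, the map $\rho$ carries $\I_{\X\cap\V_s,m}$ onto $\I_{\sigma_{\boldsymbol{V}_s}(\X\cap\V_s),m}$; hence $\I_{\sigma_{\boldsymbol{V}_s}(\X\cap\V_s),m}=0$ iff $\I_{\X\cap\V_s,m}\subseteq\langle\b_1^\transpose x,\dots,\b_s^\transpose x\rangle_m$, i.e.\ (together with the reverse inclusion noted above) iff $\I_{\X\cap\V_s,m}=\langle\b_1^\transpose x,\dots,\b_s^\transpose x\rangle_m$. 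Finally, $\I_{\sigma_{\boldsymbol{V}_s}(\X\cap\V_s),m}=0$ is precisely the triviality of the right null space of $\nu_m(\sigma_{\boldsymbol{V}_s}(\X\cap\V_s))$, i.e.\ its full column rank.

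Most of this is bookkeeping inherited from the $s=1$ case; the only genuinely new ingredient — and the natural candidate for the ``hard part'' — is verifying the properties of $\rho$ in step (b): its surjectivity onto $\Re[\alpha_1,\dots,\alpha_{D-s}]_m$ (which needs the basis-extension argument) and the computation of its kernel as $\langle\b_1^\transpose x,\dots,\b_s^\transpose x\rangle_m$ (which invokes Proposition~\ref{prp:VIS}). Everything else follows the pattern already established for Lemma~\ref{lem:SinglePolynomialCheck}.
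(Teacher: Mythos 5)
Your proposal is correct and follows the approach the paper clearly intends: the paper gives no proof of this lemma at all, prefacing it only with ``Similarly to Lemma~\ref{lem:SinglePolynomialCheck} we have:'', and your part~(a) is precisely the $s$-step analogue of the stated proof of Lemma~\ref{lem:SinglePolynomialCheck} (using Lemma~\ref{lem:FiltrationTermination} in place of Lemma~\ref{lem:V=A-geometry}). Your part~(b) makes explicit the equivalence ``full column rank of $\nu_m(\sigma_{\boldsymbol{V}_s}(\X\cap\V_s))$ $\iff$ $\I_{\X\cap\V_s,m}=\langle\b_1^\transpose x,\dots,\b_s^\transpose x\rangle_m$'', which the paper treats as self-evident even in the proof of Lemma~\ref{lem:SinglePolynomialCheck}; this is a welcome tightening but not a different route.
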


By Lemma \ref{lem:FiltrationTermination}, if $\I_{\X, m} \subset \langle \b_{1}^\transpose x, \b_{2}^\transpose x \rangle$, the algorithm terminates the filtration with output the orthogonal basis $\left\{\b_1,\b_2\right\}$ for the orthogonal complement of the irreducible component $\S_1$ of $\A$. If on the other hand $\I_{\X, m} \not\subset \langle \b_{1}^\transpose x, \b_{2}^\transpose x \rangle$, then the algorithm picks a basis element $p_3$ of $\I_{\X, m} $ such that $p_3 \not\in \I_{\V_2}$ and $\nabla p_3 |_{\x} \not\in \Span(\b_1,\b_2)$, and defines a subspace $\V_3$ of codimension $1$ inside $\V_2$ using $\pi_{\V_2}\left(\nabla p_3 |_{\x}\right)$.\footnote{The proof of existence of such a $p_3$ is similar to the proof of Lemma \ref{lem:exists-q} and is omitted.} Setting $\b_3:= \pi_{\V_2}\left(\nabla p_3 |_{\x}\right)$, the algorithm uses Lemma \ref{lem:FiltrationTermination} to determine whether to terminate the filtration or take one more step and so on. 

The principles established in the previous sections, formally lead us to the algebraic descending filtration Algorithm \ref{alg:ADF} and its Theorem \ref{thm:ADF} of correctness.	

\begin{algorithm} \caption{Algebraic Descending Filtration (ADF)}\label{alg:ADF} 
	\begin{algorithmic}[1] 
		\Procedure{ADF}{$p,\x,\X,m$}		
		\State $\mathfrak{B} \gets \nabla p|_{\x}$; 
		\While{$\I_{\X, m} \not\subset \langle \b^\transpose x: \b \in \mathfrak{B} \rangle $}		
		\State find $p \in \I_{\X,\le m} - \langle \b^\transpose x: \b \in \mathfrak{B} \rangle$ s.t. $\nabla p|_{\x} \not\in \Span(\mathfrak{B})$;	
		\State $\mathfrak{B} \gets \mathfrak{B} \cup \left\{{\tiny } \pi_{\Span(\mathfrak{B})^\perp}\left(\nabla p|_{\x}\right)\right\}$;							 			
		\EndWhile									
		\State \Return $\mathfrak{B}$;
		\EndProcedure 				
	\end{algorithmic} 
\end{algorithm}	

\begin{theorem}[Correctness of Algorithm \ref{alg:ADF}] \label{thm:ADF}
	Let $\X=\left\{\x_1,\dots,\x_N\right\}$ be a finite set of points in general position (Definition \ref{dfn:GeneralPosition}) with respect to degree $m$ inside a transversal (Definition \ref{dfn:transversal}) arrangement $\A$ of at most $m$ linear subspaces of $\Re^D$. Let $p$ be a polynomial of minimal degree that vanishes on $\X$. Then there always exists a nonsingular $\x \in \X$ such that $\nabla p|_{\x} \neq \0$, and for such an $\x$, the output $\mathfrak{B}$ of Algorithm \ref{alg:ADF} is an orthogonal basis for the orthogonal complement in $\Re^D$ of the irreducible component of $\A$ that contains $\x$.
\end{theorem}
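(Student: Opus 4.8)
I would split the statement into its two assertions. The existence of a nonsingular reference point $\x$ with $\nabla p|_{\x}\neq\0$ is precisely Lemma~\ref{lem:existsx}: applied to the minimal-degree vanishing polynomial $p\in\I_{\X,k}$ it produces $\0\neq\x\in\X$ with $\nabla p|_{\x}\neq\0$ and, after relabeling the components, $\x\in\S_1-\bigcup_{i>1}\S_i$; since this says $\x$ lies in exactly one irreducible component of $\A$, it is a nonsingular point. Fix such an $\x$, let $\S_1$ be the component containing it, and set $c:=\codim\S_1$. The rest of the proof is an analysis of the \texttt{while}-loop of Algorithm~\ref{alg:ADF}, showing that it halts with an orthogonal basis of $\S_1^{\perp}$.

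\textbf{Loop invariant.} I would carry the invariant that every time control reaches the \texttt{while}-condition, $\mathfrak{B}=\{\b_1,\dots,\b_s\}$ is a set of nonzero, pairwise orthogonal vectors, each orthogonal to $\S_1$. Under this invariant $\V_s:=\Span(\mathfrak{B})^{\perp}$ is a linear subspace of dimension $D-s$ with $\S_1\subseteq\V_s$ (so $s\le c$), and $\A_s:=\A\cap\V_s$ is precisely the subspace arrangement of $\V_s$ to which Lemma~\ref{lem:FiltrationTermination} applies; its algebraic description is the one furnished by Lemma~\ref{lem:GeneratorsIntersection}. The base case is the initialization $\mathfrak{B}=\{\b_1\}$ with $\b_1=\nabla p|_{\x}$: this is nonzero by the first assertion and orthogonal to $\S_1$ by Proposition~\ref{prp:Grd}, since $p$ vanishes on $\A\supseteq\S_1$ and $\x$ is a nonsingular point of $\S_1$.

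\textbf{One pass of the loop.} Suppose the invariant holds with $\mathfrak{B}=\{\b_1,\dots,\b_s\}$ and the loop condition $\I_{\X,m}\not\subset\langle\b_1^{\transpose}x,\dots,\b_s^{\transpose}x\rangle$ holds. By Lemma~\ref{lem:FiltrationTermination} this is equivalent to $\V_s\neq\S_1$, hence $\S_1\subsetneq\V_s$ and $s<c$; in particular $\dim\Span(\mathfrak{B})=s<c=\dim\S_1^{\perp}$, so one may pick $\bzeta_1\in\S_1^{\perp}\setminus\Span(\mathfrak{B})$. Adapting the construction in Lemma~\ref{lem:exists-q} — choose $\bzeta_i\perp\S_i$ with $\bzeta_i^{\transpose}\x\neq0$ for each $i>1$ (possible since $\x\notin\S_i$) and form $p:=(\bzeta_1^{\transpose}x)\cdots(\bzeta_n^{\transpose}x)\in\I_{\X,\le m}$ — yields a polynomial with $\nabla p|_{\x}\cong\bzeta_1\notin\Span(\mathfrak{B})$ and $p\notin\langle\b_1^{\transpose}x,\dots,\b_s^{\transpose}x\rangle$ (because $p$ does not vanish on $\V_s$, by the argument in the proof of Lemma~\ref{lem:V=A-algebra}), so line~4 of the algorithm can be executed. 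For the vector it then produces, I would replay the proof of Lemma~\ref{lem:project-q} after completing $\b_1,\dots,\b_s$ to a basis of $\S_1^{\perp}$ by vectors spanning the orthogonal complement of $\S_1$ inside $\V_s$ (which exists since $\S_1\subseteq\V_s$, and enters through Proposition~\ref{prp:VIS}): projecting $\nabla p|_{\x}$ onto $\V_s=\Span(\mathfrak{B})^{\perp}$ kills the $\b_i$-components and cannot kill the remaining ones, so $\0\neq\b_{s+1}:=\pi_{\V_s}(\nabla p|_{\x})\perp\S_1$; and $\b_{s+1}\in\V_s$ makes it orthogonal to $\b_1,\dots,\b_s$. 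Hence the invariant is restored with $\mathfrak{B}\cup\{\b_{s+1}\}$.

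\textbf{Termination and conclusion.} Since the loop condition is equivalent to $s<c$, each executed pass strictly increases $s$ and the loop halts exactly when $s=c$; at that point $\mathfrak{B}=\{\b_1,\dots,\b_c\}$ consists of $c$ nonzero pairwise orthogonal vectors lying in the $c$-dimensional space $\S_1^{\perp}$, so $\mathfrak{B}$ is an orthogonal basis of $\S_1^{\perp}$, where $\S_1$ is the irreducible component of $\A$ containing $\x$ — which is the claim. (Equivalently, at termination $\V_c$ has dimension $D-c=\dim\S_1$ and contains $\S_1$, so $\V_c=\S_1$, matching condition $(ii)$ of Lemma~\ref{lem:FiltrationTermination}.) The main obstacle is the inductive step: Lemmas~\ref{lem:exists-q} and \ref{lem:project-q} are stated only for the transition out of $\V_1$, and both must be promoted to an arbitrary $\V_s$. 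The delicate points are the dimension count $s<c$ that guarantees room to choose $\bzeta_1$ outside $\Span(\mathfrak{B})$ (itself read off from Lemma~\ref{lem:FiltrationTermination} together with the invariant) and the choice of a basis of $\S_1^{\perp}$ adapted to the splitting $\S_1^{\perp}=\Span(\mathfrak{B})\oplus(\S_1^{\perp}\cap\V_s)$, so that orthogonal projection onto $\V_s$ acts transparently. Transversality of $\A$ is invoked only inside the cited lemmas.
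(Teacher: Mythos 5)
Your proof is correct and follows exactly the line of reasoning the paper leaves implicit: the paper states Theorem~\ref{thm:ADF} as a consequence of "the principles established in the previous sections," silently relies on Lemmas~\ref{lem:existsx}, \ref{lem:exists-q}, \ref{lem:project-q}, and \ref{lem:FiltrationTermination}, and dismisses the step-$s$ versions with the footnote that they are "similar\ldots and omitted." You have supplied precisely the loop-invariant induction and the step-$s$ promotion of Lemmas~\ref{lem:exists-q} and \ref{lem:project-q} that the paper suppresses, so this is the same proof, spelled out.
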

 
%%%%%%%%%%%%%%%%%%%%%%%%
\subsection{The FASC algorithm} \label{subsection:All}

In Sections \ref{subsection:FirstStepFiltration}-\ref{subsection:MultipleSteps} we established the theory of a single filtration, according to which one starts with a nonsingular point $\x_1:=\x \in \A \cap \X$ and obtains an orthogonal basis $\b_{11},\dots,\b_{1c_1}$ for the orthogonal complement of the irreducible component $\S_1$ of $\A$ that contains reference point $\x_1$. To obtain an orthogonal basis $\b_{21},\dots,\b_{2c_2}$ corresponding to a second irreducible component $\S_2$ of $\A$, our approach is the natural one: remove $\X_1$ from $\X$ and run a filtration on the set $\X^{(1)}:=\X - \X_1$. All we need for the theory of Sections \ref{subsection:FirstStepFiltration}-\ref{subsection:MultipleSteps} to be applicable to the set $\X^{(1)}$, is that  $\X^{(1)}$ be in general position inside the arrangement $\A^{(1)}:=\S_{2} \cup \cdots \cup \S_n$. But this has been proved in Lemma \ref{lem:GeneralPosition}.
With Lemma \ref{lem:GeneralPosition} establishing the correctness of recursive application of a single filtration, the correctness of the FASC Algorithm \ref{alg:AASC} follows at once, as in Theorem \ref{thm:AASC}. Note that in Algorithm \ref{alg:AASC}, $n$ is the number of subspaces, while $\mathfrak{D}$ and $\mathfrak{L}$ are ordered sets, such that, up to a permutation, the $i$-th element of $\mathfrak{D}$ is $d_i = \dim \S_i$, and the $i$-th element of $\mathfrak{L}$ is an orthogonal basis for $\S_i^\perp$.

\begin{algorithm} \caption{Filtrated Algebraic Subspace Clustering}\label{alg:AASC} \begin{algorithmic}[1] 
		\Procedure{FASC}{$\X \in \Re^{D \times N},m$}		
		\State 	$n \gets 0$; $\mathfrak{D} \gets \emptyset $; $\mathfrak{L} \gets \emptyset$; 					
		\While{$\X \neq \emptyset$}
		\State find polynomial $p$ of minimal degree that vanishes on $\X$;
		\State find $\x \in \X$ s.t. $\nabla p|_{\x} \neq 0$;
		\State $\mathfrak{B} \gets$ ADF$(p,\x,\X,m)$;
		\State $\mathfrak{L} \gets \mathfrak{L} \cup \left\{\mathfrak{B}\right\}$; 
		\State $\mathfrak{D} \gets \mathfrak{D} \cup \left\{D - \card(\mathfrak{B}) \right\}$;				
		\State $\X \gets \X - \Span(\mathfrak{B})^\perp$;	
		\State $n \gets n+1$; $m \gets m-1$;	
		\EndWhile						
		\State \Return $n,\mathfrak{D},\mathfrak{L}$;
		\EndProcedure 				
	\end{algorithmic} 
\end{algorithm}

\begin{theorem}[Correctness of Algorithm \ref{alg:AASC}] \label{thm:AASC}
	Let $\X=\left\{\x_1,\dots,\x_N\right\}$ be a set in general position 
	with respect to degree $m$ (Definition \ref{dfn:GeneralPosition}) inside
	a transversal (Definition \ref{dfn:transversal}) arrangement $\A$ of at most $m$ linear
	subspaces of $\Re^D$. For such an $\X$ and $m$, Algorithm \ref{alg:AASC} always terminates with output a set $\mathfrak{L}=\left\{\mathfrak{B}_1,\dots,\mathfrak{B}_n\right\}$, such that up to a permutation, $\mathfrak{B}_i$ is an orthogonal basis for the orthogonal complement of the $i^{th}$ irreducible component $\S_i$ of $\A$, i.e., $\S_i = \Span(\mathfrak{B}_i)^\perp, i=1,\dots,n$, and $\A = \bigcup_{i=1}^n \S_i$.
\end{theorem}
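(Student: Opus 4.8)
The plan is to prove Theorem~\ref{thm:AASC} by induction on the number $n$ of irreducible components of the transversal arrangement $\A$: the first pass of the outer \texttt{while} loop of Algorithm~\ref{alg:AASC} is governed by Theorem~\ref{thm:ADF}, and the remaining passes are reduced, via Lemma~\ref{lem:GeneralPosition}, to a fresh run of the algorithm on a strictly smaller arrangement. Throughout, I take the $\S_i$ to be pairwise distinct nonzero subspaces of $\Re^D$, which is implicit in speaking of irreducible components and is used below so that general position forces $\X\cap\S_i\neq\emptyset$ for every $i$; I also note that line~4 always succeeds, since a finite set of points always carries a nonzero homogeneous vanishing polynomial and hence one of minimal degree.

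For the base case $n=1$, $\A=\S_1$ and $\X$ lies in general position inside $\S_1$ with respect to some $m\ge1$, so Theorem~\ref{thm:ADF} applies directly: the single pass of the loop returns, via \texttt{ADF}, an orthogonal basis $\mathfrak B_1$ for $\S_1^{\perp}$, after which line~9 sets $\X\gets\X-\Span(\mathfrak B_1)^{\perp}=\X-\S_1=\emptyset$ and the loop halts with $\mathfrak L=\{\mathfrak B_1\}$, $\mathfrak D=\{\dim\S_1\}$ and counter $1$, as claimed.

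For the inductive step, assume the statement for transversal arrangements of $n-1$ components, and let $(\X,m)$ be as in the hypothesis with $\A$ of $n\le m$ components. Line~4 produces a minimal-degree vanishing polynomial $p$ of $\X$; by Theorem~\ref{thm:ADF} the point $\x$ found in line~5 exists, is necessarily a nonsingular point of $\A$ (cf.\ Lemma~\ref{lem:existsx}), and the call \texttt{ADF}$(p,\x,\X,m)$ in line~6 returns an orthogonal basis $\mathfrak B_1$ for the orthogonal complement of the component of $\A$ containing $\x$, which I relabel $\S_1$, so that $\Span(\mathfrak B_1)^{\perp}=\S_1$. Then line~9 replaces $\X$ by $\X^{(1)}:=\X-(\X\cap\S_1)$, line~10 replaces the degree bound by $m-1$, the counter becomes $1$, and $\mathfrak L=\{\mathfrak B_1\}$, $\mathfrak D=\{\dim\S_1\}$. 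By Lemma~\ref{lem:GeneralPosition}, $\X^{(1)}$ is in general position inside $\A^{(1)}:=\S_2\cup\cdots\cup\S_n$ with respect to degree $m-1$; this $\A^{(1)}$ is again transversal, since Definition~\ref{dfn:transversal} for $\A^{(1)}$ is just the restriction of the condition for $\A$ to subsets $\mathfrak I\subseteq\{2,\dots,n\}$, and it has $n-1\le m-1$ components. Because the loop body updates $\X$ and $m$ without reference to the accumulators $\mathfrak L,\mathfrak D$ or to the counter, the remainder of the execution coincides with a fresh run of Algorithm~\ref{alg:AASC} on $(\X^{(1)},m-1)$, its outputs being appended to what has already been recorded; by the inductive hypothesis that run terminates and yields orthogonal bases $\mathfrak B_2,\dots,\mathfrak B_n$ for $\S_2^{\perp},\dots,\S_n^{\perp}$. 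Hence the original run terminates with $\mathfrak L=\{\mathfrak B_1,\dots,\mathfrak B_n\}$, $\mathfrak D=\{\dim\S_1,\dots,\dim\S_n\}$ and counter $n$, and $\A=\bigcup_{i=1}^n\S_i$, which is exactly the assertion, up to the permutation implicit in the relabelings.

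I expect essentially all the substantive work to already reside in Theorem~\ref{thm:ADF} and Lemma~\ref{lem:GeneralPosition}, so the remaining obstacles are organizational. The first is to match the algorithm's update $\X\gets\X-\Span(\mathfrak B)^{\perp}$ precisely with the set $\X^{(1)}$ of Lemma~\ref{lem:GeneralPosition}; this rests on the identity $\Span(\mathfrak B)^{\perp}=\S_1$ delivered by Theorem~\ref{thm:ADF}. The second is to verify that the outer loop performs exactly $n$ passes: this requires $\X\cap\S_i\neq\emptyset$ for every $i$, which follows from general position because, by Proposition~\ref{prp:Regularity}, $\X\subseteq\bigcup_{j\neq i}\S_j$ would force $\Z(\I_{\X,m})\subseteq\bigcup_{j\neq i}\S_j\subsetneq\A$, contradicting $\A=\Z(\I_{\X,m})$; consequently $\X^{(i)}$ is nonempty precisely while $i<n$ and is emptied once the last component has been stripped off, so the loop --- and hence the algorithm --- terminates after $n$ passes. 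The only other point to keep straight is that the component relabeled $\S_1$ is chosen afresh at each level of the induction, which is harmless since the conclusion is stated only up to a permutation of the components.
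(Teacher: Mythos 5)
Your proof is correct and follows exactly the approach the paper intends: the paper offers only a one-sentence sketch ("With Lemma \ref{lem:GeneralPosition} establishing the correctness of recursive application of a single filtration, the correctness of Algorithm \ref{alg:AASC} follows at once"), and your induction on $n$ — Theorem \ref{thm:ADF} for the first pass, Lemma \ref{lem:GeneralPosition} plus transversality of $\A^{(1)}$ for the recursive step — is precisely the intended unpacking of that sentence. One small remark: your justification that line 4 always succeeds ("a finite set of points always carries a nonzero homogeneous vanishing polynomial") is true but not quite the right reason here, since such a polynomial might a priori have degree exceeding $m$; the cleaner argument, which your setup already supports, is that $\A$ being a proper arrangement of $n \le m$ subspaces admits a degree-$n$ product of linear forms vanishing on it, hence on $\X$, so the minimal degree is $\le n \le m$.
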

%%%%%%%%%%%%%%%%%%%%%%%%%%%%%

%%%%%%%%%%%%%%%%%%%%%%%%%%%%%%%%%%%%%%%%5
\section{Filtrated Spectral Algebraic Subspace Clustering} \label{section:FSASC}
In this section we show how FASC (Sections \ref{section:geometricAASC}-\ref{section:mfAASC}) can be adapted to a working subspace clustering algorithm that is robust to noise. As we will soon see, the success of such an algorithm depends on being able to 1) implement a single filtration in a robust fashion, and 2) combine multiple robust filtrations to obtain the clustering of the points. 

\subsection{Implementing robust filtrations} \label{subsection:FSASC-filtrations}
Recall that the filtration component ADF (Algorithm \ref{alg:ADF}) of the FASC Algorithm \ref{alg:AASC}, is based on computing a descending filtration of ambient 
spaces $\V_1 \supset \V_2 \supset \cdots$. Recall that $\V_1$ is obtained as the hyperplane of $\Re^D$ with normal vector $\nabla p|_{\x}$, where $\x$ is the reference point associated with the filtration, and $p$ a polynomial of minimal degree $k$ that vanishes on $\X$. In the absence of noise, the value of $k$ can be characterized as the smallest $\ell$ such that $\nu_{\ell}(\X)$ drops rank (see section \ref{subsection:Hyperplanes} for notation). In the presence of noise, and assuming that $\X$ has cardinality at least ${m+D-1 \choose m}$, there will be in general no vanishing polynomial of degree $\le m$, i.e., the embedded data matrix $\nu_{\ell}(\X)$ will have full column rank, for any $\ell \le m$. Hence, in the presence of noise we do not know a-priori what the minimal degree $k$ is. On the other hand, we do know that $m \ge n$, which implies that the underlying subspace arrangement $\A$ admits vanishing polynomials of degree $m$. Thus a reasonable choice for an approximate vanishing polynomial $p_1:=p$, is the polynomial whose coefficients are given by the right singular vector of $\nu_m(\X)$ that corresponds to the smallest singular value. Recall also that in the absence of noise we chose our reference point $\x \in \X$ such that $\nabla p_1|_{\x} \neq \0$. In the presence of noise this condition will be almost surely true every point $\x \in \X$; then one can select the point that gives the largest gradient, i.e., we can pick as reference point an $\x$ that maximizes the norm of the gradient $\left\|\nabla p_1|_{\x}\right\|_2$.

Moving on, ADF constructs the filtration of $\X$ by intersecting $\X$ with the intermediate ambient spaces $\V_1 \supset \V_2 \supset \cdots$. In the presence of noise in the dataset $\X$, such intersections will almost surely be empty. As it turns out, we can replace the operation of intersecting $\X$ with the intermediate spaces $\V_s,s=1,2,\dots$, by projecting $\X$ onto $\V_s$. \emph{In the absence of noise}, the norm of the points of $\X$ that lie in $\V_s$ will remain unchanged after projection, while points that lie outside $\V_s$ will witness a drop in their norm upon projection onto $\V_s$. Points whose norm is reduced can then be removed and the end result of this process is equivalent to intersecting $\X$ with $\V_s$.
%The above discussion leads us to a filtration algorithm that is equivalent to ADF in the absence of noise, whenever $\X$ and $\X_i$ are in general position inside $\A$ and $\S_i$ respectively, both with respect to degree $m$. 
\emph{In the presence of noise} one can choose a threshold $\delta>0$, such that if the distance of a point from subspace $\V_s$ is less than $\delta$, then the point is maintained after projection onto $\V_s$, otherwise it is removed. 
But how to choose $\delta$? One reasonable way to proceed, is to consider the polynomial $p$ that corresponds to the right singular vector of $\nu_m(\X)$ of smallest singular value,
and then consider the quantity 
\begin{align}
\beta(\X) := \frac{1}{N} \sum_{j=1}^N \frac{\left|\x_j^\transpose \nabla p|_{\x_j}\right|}{\left\|\x_j\right\|_2 \left\|\nabla p|_{\x_j}\right\|_2}.
\end{align} Notice that in the absence of noise $\dim \mathcal{N}(\nu_m(\X))>0$ and subsequently 
$\beta(\X)=0$. In the presence of noise however, $\beta(\X)$ represents the average distance of a point $\x$ in the dataset to the hyperplane that it produces by means of $\nabla p|_{\x}$ (in the absence of noise this distance is zero by Proposition \ref{prp:Grd}). Hence intuitively, $\delta$ should be of the same order of magnitude as $\beta(\X)$; a natural choice is to set $\delta:= \gamma \cdot \beta(\X)$, where $\gamma$ is a user-defined parameter taking values close to $1$. Having projected $\X$ onto $\V_1$ and removed points whose distance from $\V_1$ is larger than $\delta$, we obtain a second approximate polynomial $p_2$ from the right singular vector of smallest singular value of the embedded data matrix of the remaining projected points and so on.
 
It remains to devise a robust criterion for terminating the filtration. Recall that the criterion for terminating the filtration in ADF is $\I_{\X, m} \subset \langle \b_1^\transpose x,\dots,\b_s^\transpose x\rangle_m$, where $\V_s=\Span(\b_1,\dots,\b_s)^\perp$. Checking this criterion is equivalent to checking the inclusion $\I_{\X, m} \subset \langle \b_1^\transpose x,\dots,\b_s^\transpose x\rangle_{ m}$ of finite dimensional vector spaces. In principle, this requires computing a basis for the vector space $\I_{\X, m}$. Now recall from section \ref{subsection:SASC-A}, that it is precisely this computation that renders the classic polynomial differentiation algorithm unstable to noise; the main difficulty being the correct estimation of $\dim\left(\I_{\X, m}\right)$, and the dramatic dependence of the quality of clustering on this estimate. Consequently, for the purpose of obtaining a robust algorithm, it is imperative to avoid such a computation. But we know from Lemma \ref{lem:SinglePolynomialCheck-general} that, if $\X_i:=\X \cap \S_i$ is in general position inside $\S_i$ with respect to degree $m$ for every $i \in [n]$, then the criterion for terminating the filtration is equivalent to checking whether in the coordinate representation of $\V_s$ the points $\X \cap \V_s$ admit a vanishing polynomial of degree $ m$. But this is computationally equivalent to checking whether $\mathcal{N}\left(\nu_m\left(\sigma_{\boldsymbol{V}_s}(\X \cap \V_s)\right)\right) \neq 0$; see notation in Lemma \ref{lem:SinglePolynomialCheck-general}. This is a much easier problem than estimating $\dim\left(\I_{\X, m}\right)$, and we solve it implicitly as follows. Recall that in the absence of noise, the norm of the reference point remains unchanged as it passes through the filtration. Hence, it is natural to terminate the filtration at step $s$, if the distance from the projected reference point\footnote{Here by projected reference point we mean the image of the reference point under all projections up to step $s$.} to $\V_{s+1}$ is more than $\delta$, i.e., if the projected reference point is among the points that are being removed upon projection from $\V_s$ to $\V_{s+1}$. To guard against overestimating the number of steps in the filtration, we enhance the termination criterion by additionally deciding to terminate at step $s$ if the number of points that survived the projection from $\V_s$ to $\V_{s+1}$ is less than a pre-defined integer $L$, which is to be thought of as the minimum number of points in a cluster. 

%%%%%%%%%
\subsection{Combining multiple filtrations} \label{subsection:FSASC-MultipleFiltrations}
Having determined a robust algorithmic implementation for a single filtration, we face the following issue: In general, two points lying approximately in the same subspace $\S$ will produce different hyperplanes that approximately contain $\S$ with different levels of accuracy. In the noiseless case any point would be equally good. In the presence of noise though, the choice of the reference point $\x$ becomes significant. How should $\x$ be chosen? To deal with this problem in a robust fashion, it is once again natural to construct a single filtration for each point in $\X$ and define an affinity between points $j$ and $j'$ as
\begin{align} \label{eq:affinity}
\C_{jj',\text{FSASC}} = \begin{cases}
\| \pi_{s_j}^{(j)} \circ \cdots \circ \pi_1^{(j)} (\x_{j'}) \|  & \text{if $\x_{j'}$ remains}\\
0 &  \text{otherwise},
\end{cases}
\end{align} 
where $\pi_s^{(j)}$ is the projection from $\V_s$ to $\V_{s+1}$ associated to the filtration of point $\x_j$ and $s_j$ is the length of that filtration. This affinity captures the fact that if points $\x_j$ and $\x_{j'}$ are in the same subspace, then the norm of $\x_{j'}$ should not change from step $0$ to step $c$ of the filtration computed with reference point $\x_j$, where $c=D - \dim (\S)$ is the codimension of the irreducible component $\S$ associated to reference point $\x_j$. Otherwise, if $\x_j$ and $\x_{j'}$ are in different subspaces, the norm of $\x_{j'}$ is expected to be reduced by the time the filtration reaches step $c$. In the case of noiseless data, only the points in the correct subspace survive step $c$ and their norms are precisely equal to one. In the case of noisy data, the affinity defined above will only be approximate.

\subsection{The FSASC algorithm} \label{subsection:FSASC}
Having an affinity matrix as in eq. \eqref{eq:affinity}, standard spectral clustering techniques can be applied to obtain a clustering of $\X$ into $n$ groups. We emphasize that in contrast to the abstract case of Algorithm \ref{alg:AASC}, the number $n$ of clusters must be given as input to the algorithm. On the other hand, the algorithm does not require the subspace dimensions to be given: these are implicitly estimated by means of the filtrations. Finally, one may choose to implement the above scheme for $M$ distinct values of the parameter $\gamma$ and choose the affinity matrix that leads to the smallest $n^{th}$ eigengap. The above discussion leads to the \emph{Filtrated Spectral Algebraic Subspace Clustering (FSASC)} Algorithm \ref{alg:FSASC}, in which
\begin{itemize} 
\item $\textsc{Spectrum}\big(NL(\C+\C^{\transpose})\big)$ denotes the spectrum of the normalized Laplacian matrix of $\C + \C^{\transpose}$, \item $\textsc{SpecClust}\big(\C^*+(\C^*)^\transpose,n\big)$ denotes spectral clustering being applied to $\C^*+\C^{*\transpose}$ to obtain $n$ clusters,
\item $\textsc{Vanishing}\big(\nu_n(\X)\big)$ is the polynomial whose coefficients are the right singular vector of $\nu_n(\X)$ corresponding to the smallest singular value.
\item $\pi \gets \left[ \Re^d \rightarrow  \H \xrightarrow{\sim} \Re^{d-1} \right]$ is to be read as ``$\pi$ is assigned the composite linear transformation $\Re^d \rightarrow  \H \xrightarrow{\sim} \Re^{d-1}$, where the first arrow is the orthogonal projection of $\Re^{d}$ to hyperplane $\H$, and the second arrow is the linear isomorphism that maps a basis of $\H$ in $\Re^d$ to the standard coordinate basis of $\Re^{d-1}$".
\end{itemize}

\begin{algorithm} \caption{Filtrated Spectral Algebraic Subspace Clustering (FSASC)} \label{alg:FSASC} 
\begin{spacing}{0.8}
\begin{algorithmic} [1]
\Procedure{FSASC}{$\mathcal{X}, D, n,L,\{\gamma_m\}_{m=1}^M$}
\If {$N < \mathcal{M}_n(D)$}
\State \Return('Not enough points');
\Else
\State eigengap $\gets 0$; $\C^* \gets 0_{N \times N}$;
\State $\x_j \gets \x_j / ||\x_j||, \, \forall j \in [N]$;
\State $p \gets \Call{Vanishing}{\nu_n(\X)}$;
\State $\beta \gets  \frac{1}{N} \sum_{j=1}^N \big|\langle \x_j, \frac{\nabla p|_{\x_j}}{||\nabla p|_{\x_j}||}\rangle\big|$;
\For {$k = 1 : M$}
\State $\delta \gets  \beta \cdot \gamma_k, \, \C \gets 0_{N \times N}$;
\For {$j = 1 : N$}
\State $C_{j,:} \gets \Call{Filtration}{\X,\x_j,p,L,\delta,n}$;
\EndFor
\State $\{\lambda_s\}_{s=1}^N \gets \Call{Spectrum}{NL(\C+\C^\transpose)}$ ;
\If {(eigengap $< \lambda_{n+1} - \lambda_n$)}
\State eigengap $\gets \lambda_{n+1} - \lambda_n$; $\C^* \gets \C$;   
\EndIf     
\EndFor
\State $\left\{\Y_i\right\}_{i=1}^n \gets \Call{SpecClust}{\C^*+\C^{*\transpose},n}$; 
\State \Return $\left\{\Y_i\right\}_{i=1}^n$;
\EndIf
\EndProcedure 
%%%%

%%%%
\Statex
\Function{Filtration}{$\X,\x,p,L,\delta,n$}

\State $d \gets D, \, \J \gets [N],  q \gets p, \boldsymbol{c} \gets 0_{1 \times N}$;
\State  flag $\gets 1$;
\While {($d >1$) and ($\text{flag}=1$)}						
\State $\H \gets \langle \nabla q|_{\x} \rangle^\perp, \, \pi \gets \left[ \Re^d \rightarrow  \H \xrightarrow{\sim} \Re^{d-1} \right]$;
\If {$(||\x|| - || \pi(\x)||)/||\x|| > \delta$}
\If {$d = D$}
\State $\boldsymbol{c}(j') \gets || \pi(\x_j')||, \, \forall j' \in [N]$;						
\EndIf			
\State flag $\gets 0$;		
\Else
\State $\J \gets \left\{j' \in [N] : \frac{||\x_{j'}|| - || \pi(\x_{j'})||}{||\x_{j'}||} \le \delta \right\}$
\If {$\left| \J \right| < L$}
\State flag $\gets 0$;
\Else
\State $\boldsymbol{c}(j') \gets || \pi(\x_j')||, \, \forall j' \in \J$;
\State $\boldsymbol{c}(j') \gets 0, \, \forall j' \in [N]- \J$;
\If {$|\J| < \mathcal{M}_n(d)$}
\State flag $\gets 0$;
\Else
\State $d \gets d -1, \x \gets \pi(\x)$;
\State $\x_{j'} \gets \pi(\x_{j'}) \, \forall j' \in \J$;
\State $ \X \gets \left\{\x_{j'}: j' \in \J \right\}$;											
\State $q \gets \Call{Vanishing}{\nu_n(\X)}$;
\EndIf
\EndIf
\EndIf
\EndWhile

\State \Return($\boldsymbol{c}$);
\EndFunction

\end{algorithmic} 
\end{spacing}
\end{algorithm} 

%%%%%%%%%%%%%%%

%%%%%%%%%%%%
\subsection{A distance-based affinity (SASC-D)} \label{subsection:SASC-D}

Observe that\footnote{We will henceforth be assuming that all points $\x_1,\dots,\x_N$ are normalized to unit $\ell_2$-norm.} the FSASC affinity \eqref{eq:affinity} between points $\x_j$ and $\x_{j'}$, can be interpreted as the distance of point $\x_{j'}$ to the 
orthogonal complement of the final ambient space $\V_{s_j}$ of the filtration corresponding to reference point $\x_j$. If all irreducible components of $\A$ were hyperplanes, then the optimal length of each filtration would be $1$. Inspired by this observation, we may define a simple \emph{distance-based} affinity, alternative to the \emph{angle-based} affinity of eq. \eqref{eq:ABA}, by
\begin{align}
\C_{jj',\text{dist}}: = 1 - \frac{\left|\x_{j'}^\transpose \nabla p|_{\x_j}\right|}{\left\|\nabla p|_{\x_j}\right\|_2}. \label{eq:DBA}
\end{align} The affinity of eq. \eqref{eq:DBA} is theoretically justified only for hyperplanes, as $\C_{jj',\text{angle}}$ is; yet as we will soon see in the experiments, $\C_{jj',\text{dist}}$ is much more robust than $\C_{jj',\text{angle}}$ in the case of subspaces of different dimensions. We attribute this phenomenon to the fact that, in the absence of noise, it is always the case that $\C_{jj',\text{dist}}=1$ whenever $\x_j,\x_{j'}$ lie in the same irreducible component; as mentioned in section \ref{subsection:SASC-A}, this need not be the case for $\C_{jj',\text{angle}}$. We will be referring to the Spectral ASC method that uses affinity \eqref{eq:DBA} as 
\emph{SASC-D}.

%%%%%%%%%%%%%%%%%%%%%%%%%
\subsection{Discussion on the computational complexity} \label{subsection:complexity}

As mentioned in section \ref{section:ASC}, the main object that needs to be computed in  algebraic subspace clustering is a vanishing polynomial $p$ in $D$ variables of degree $n$, where $D$ is the ambient dimension of the data and $n$ is the number of subspaces. 
This amounts to computing a right null-vector of the $N \times \mathcal{M}_n(D)$ embedded data matrix $\nu_n(\X)$, where $\mathcal{M}_n(D):={n+D-1 \choose n}$, and $N \ge \mathcal{M}_n(D)$. In practice, the data are noisy and there are usually no vanishing polynomials of degree $n$; instead one needs to compute the right singular vector of the embedded data matrix that corresponds to the smallest singular value. 
Approximate iterative methods for performing this task do exist  \cite{Schwetlick:LAA03,Liang:ETNA14,Stathopoulos:SIAM15}, and in this work we use the MATLAB function \texttt{svds.m}, which is based on an \emph{inverse-shift iteration} technique; see, e.g., the introduction of \cite{Liang:ETNA14}. Even though \texttt{svds.m} is in principle more efficient than computing the full SVD of $\nu_n(\X)$ via the MATLAB function \texttt{svd.m}, the complexity of both functions is of the same order 
\begin{align}
N \mathcal{M}_n(D)^2  = N { n+D-1 \choose n}^2, \label{eq:SVDcomplexity}
\end{align} which is the well-known complexity of SVD \cite{golub1996matrix} adapted to the dimensions of $\nu_n(\X)$. This is because \texttt{svds.m} requires at each iteration the solution to a linear system of equations whose coefficient matrix has size of the same order as the size of $\nu_n(\X)$.

Evidently, the complexity of \eqref{eq:SVDcomplexity} is prohibitive for large $D$ even for moderate values of $n$. 
If we discount the spectral clustering step, this is precisely the complexity of SASC-A of 
section \ref{subsection:SASC-A} as well as of SASC-D of section \ref{subsection:SASC-D}. 
On the other hand, FSASC (Algorithm \ref{alg:FSASC}) is even more computationally demanding, as it requires the computation of a vanishing polynomial
at each step of every filtration, and there are as many filtrations as the total number of points.  Assuming for simplicity that there is no noise and that the dimensions of all subspaces are equal to $d<D$, then the complexity of a single filtration in FSASC is of the order of
\begin{align}
\sum_{i=0}^{D-d} N \left(\mathcal{M}_n(D-i) \right)^2 = N \sum_{i=0}^{D-d} { n+D-i-1 \choose n}^2.
\end{align} Since FSASC computes a filtration for each and every point, its total complexity (discounting the spectral clustering step and assuming that we are using a single value for the parameter $\gamma$) is
\begin{align}
N \, \sum_{i=0}^{D-d} N \left(\mathcal{M}_n(D-i) \right)^2 = N^2 \sum_{i=0}^{D-d} { n+D-i-1 \choose n}^2.
\end{align} Even though the filtrations are independent of each other, and hence fully parallelizable, the complexity of FSASC is still prohibitive for large scale applications even after parallelization. Nevertheless, when the subspace dimensions are small, then FSASC is applicable after one reduces the dimensionality of the data by means of a projection, as will be done in section \ref{subsection:MotionSegmentation}. At any case, we hope that the complexity issue of FSASC will be addressed in future research. 

%%%%%%%%%%%%%%%%%%%%%%%%%%%%%%%%%%%
\section{Experiments}
\label{subsection:Experiments}
In this section we evaluate experimentally the proposed methods FSASC (Algorithm \ref{alg:FSASC}) and SASC-D (section \ref{subsection:SASC-D}) and compare them to other state-of-the-art subspace clustering methods, using synthetic data (section \ref{subsection:SyntheticExperiments}), as well as real motion segmentation data (section \ref{subsection:MotionSegmentation}).

%%%%%%%%%%%%%%%%
\subsection{Experiments on synthetic data} \label{subsection:SyntheticExperiments}
We begin by randomly generating $n=3$ subspaces of various dimension configurations $(d_1,d_2,d_3)$ in $\Re^9$. The choice $D=9$ for the ambient dimension is motivated by applications in two-view geometry \cite{Hartley-Zisserman04,Vidal:IJCV06-multibody}. Once the subspaces are randomly generated, we use a zero-mean unit-variance Gaussian distribution with support on each subspace to randomly sample $N_i = 200$ points per subspace.
The points of each subspace are then corrupted by additive zero-mean Gaussian noise with standard deviation $\sigma\in\{0,0.01, 0.03, 0.05\}$ and support in the orthogonal complement of the subspace. All data points are subsequently normalized to have unit euclidean norm. 

%%%%%%%%%%%%%%%%%%
\begin{table}
\centering
\caption{Mean subspace clustering error in $\%$ over $100$ independent trials for synthetic data randomly generated in three random subspaces of $\Re^{9}$ of dimensions $(d_1,d_2,d_3)$. The total number of points is $N=600$ with $200$ points associated to each subspace. We consider noiseless data $(\sigma=0)$ as well as data corrupted by zero-mean additive white noise of standard deviation $\sigma$ and support in the orthogonal complement of each subspace.}
\label{table:Synthetic_n3_error}
\ra{0.7}
\begin{tabular}{@{}l@{\, \, \,}c@{\, \,}c@{\, \,}c@{\, \,}c@{\, \,}c@{\, \,}c@{\, \,}c@{\, \,}c}\toprule[1pt] method &  $(2,3,4)$ & $(4,5,6)$ & $(6,7,8)$ &  $(2,5,8)$  & $(3,3,3)$ &  $(6,6,6)$ & $(7,7,7)$ & $(8,8,8)$\\ 
\midrule[0.5pt]
& & & & $\sigma=0$\\
\cmidrule{5-5}
FSASC & $\boldsymbol{0}$ & $\boldsymbol{0}$ & $\boldsymbol{0}$ & $\boldsymbol{0}$ & $\boldsymbol{0}$ & $\boldsymbol{0}$ & $\boldsymbol{0}$  & $\boldsymbol{0}$ \\
SASC-D & $\boldsymbol{0}$ & $\boldsymbol{0}$ & $\boldsymbol{0}$ & $\boldsymbol{0}$ & $\boldsymbol{0}$ & $\boldsymbol{0}$ & $\boldsymbol{0}$  & $\boldsymbol{0}$ \\
SASC-A & $42$ & $39$ & $6$ & $14$ & $37$ & $24$ & $12$  & $\boldsymbol{0}$ \\
SSC & $\boldsymbol{0}$ & $1$ & $18$ & $49$ & $\boldsymbol{0}$ & $3$ & $14$  & $55$ \\
LRR & $\boldsymbol{0}$ & $3$ & $39$ & $5$ & $\boldsymbol{0}$ & $9$ & $42$  & $51$ \\
LRR-H & $\boldsymbol{0}$ & $3$ & $36$ & $6$ & $\boldsymbol{0}$ & $8$ & $38$  & $51$ \\
LRSC & $\boldsymbol{0}$ & $3$ & $39$ & $5$ & $\boldsymbol{0}$ & $9$ & $42$  & $51$ \\
LSR & $\boldsymbol{0}$ & $3$ & $39$ & $5$ & $\boldsymbol{0}$ & $9$ & $42$  & $51$ \\
LSR-H & $\boldsymbol{0}$ & $3$ & $32$ & $6$ & $\boldsymbol{0}$ & $8$ & $38$  & $51$ \\

\midrule[0.1pt]

& & & & $\sigma=0.01$\\
\cmidrule{5-5}
FSASC & $\boldsymbol{0}$ & $\boldsymbol{0}$ & $\boldsymbol{0}$ & $\boldsymbol{1}$ & $\boldsymbol{0}$ & $\boldsymbol{0}$ & $\boldsymbol{0}$  & $5$ \\
SASC-D & $\boldsymbol{0}$ & $\boldsymbol{0}$ & $1$ & $\boldsymbol{1}$ & $\boldsymbol{0}$ & $\boldsymbol{0}$ & $\boldsymbol{0}$  & $\boldsymbol{3}$ \\
SASC-A & $54$ & $45$ & $8$ & $24$ & $57$ & $36$ & $13$  & $\boldsymbol{3}$ \\
SSC & $2$ & $2$ & $18$ & $49$ & $\boldsymbol{0}$ & $3$ & $13$  & $55$ \\
LRR & $\boldsymbol{0}$ & $3$ & $38$ & $5$ & $\boldsymbol{0}$ & $9$ & $42$  & $51$ \\
LRR-H & $\boldsymbol{0}$ & $3$ & $36$ & $7$ & $\boldsymbol{0}$ & $8$ & $38$  & $51$ \\
LRSC & $\boldsymbol{0}$ & $3$ & $38$ & $5$ & $\boldsymbol{0}$ & $9$ & $42$  & $51$ \\
LSR & $\boldsymbol{0}$ & $3$ & $39$ & $5$ & $\boldsymbol{0}$ & $9$ & $42$  & $51$ \\
LSR-H & $\boldsymbol{0}$ & $3$ & $32$ & $6$ & $\boldsymbol{0}$ & $8$ & $38$  & $51$ \\

\midrule[0.1pt]

& & & & $\sigma=0.03$\\
\cmidrule{5-5}
FSASC & $\boldsymbol{0}$ & $\boldsymbol{0}$ & $\boldsymbol{1}$ & $\boldsymbol{2}$ & $\boldsymbol{0}$ & $\boldsymbol{0}$ & $\boldsymbol{1}$  & $10$  \\
SASC-D & $\boldsymbol{0}$ & $\boldsymbol{0}$ & $4$ & $3$ & $\boldsymbol{0}$ & $1$ & $2$  & $\boldsymbol{6}$ \\
SASC-A & $57$ & $46$ & $13$ & $31$ & $58$ & $37$ & $15$  & $7$ \\
SSC & $\boldsymbol{0}$ & $1$ & $20$ & $48$ & $\boldsymbol{0}$ & $3$ & $13$  & $55$ \\

\midrule[0.1pt]

& & & & $\sigma=0.05$\\
\cmidrule{5-5}
FSASC & $1$ & $\boldsymbol{0}$ & $\boldsymbol{2}$ & $\boldsymbol{3}$ & $1$ & $\boldsymbol{0}$ & $\boldsymbol{2}$ & $14$   \\
SASC-D & $1$ & $1$ & $7$ & $5$ & $1$ & $2$ & $5$  & $\boldsymbol{10}$ \\
SASC-A & $58$ & $46$ & $17$ & $36$ & $60$ & $39$ & $17$  & $11$ \\
SSC & $\boldsymbol{0}$ & $2$ & $20$ & $49$ & $\boldsymbol{0}$ & $3$ & $15$  & $55$ \\
LRR & $1$ & $3$ & $39$ & $6$ & $\boldsymbol{0}$ & $10$ & $42$  & $51$ \\
LRR-H & $1$ & $3$ & $36$ & $13$ & $\boldsymbol{0}$ & $8$ & $38$  & $52$ \\
LRSC & $1$ & $3$ & $39$ & $6$ & $\boldsymbol{0}$ & $10$ & $42$  & $51$ \\
LSR & $1$ & $3$ & $39$ & $6$ & $\boldsymbol{0}$ & $10$ & $42$  & $51$ \\
LSR-H & $1$ & $3$ & $32$ & $7$ & $\boldsymbol{0}$ & $8$ & $38$  & $51$ \\

\bottomrule[1pt]
\end{tabular}
\end{table}

%%%%%%%%%%%%%%%%%%%%%%
%%%%%%%%%%%%%%%%%%
\begin{table}
\centering
\caption{Mean intra-cluster connectivity over $100$ independent trials for synthetic data randomly generated in three random subspaces of $\Re^{9}$ of dimensions $(d_1,d_2,d_3)$. There are $200$ points associated to each subspace, which are corrupted by zero-mean additive white noise of standard deviation $\sigma$ and support in the orthogonal complement of each subspace.}
\label{table:Synthetic_n3_intra}
\ra{0.7}
\begin{tabular}{@{}l@{\, \, \,}c@{\, \,}c@{\, \,}c@{\, \,}c@{\, \,}c@{\, \,}c@{\, \,}c@{\, \,}c}\toprule[1pt] method &  $(2,3,4)$ & $(4,5,6)$ & $(6,7,8)$ &  $(2,5,8)$  & $(3,3,3)$ &  $(6,6,6)$ & $(7,7,7)$ & $(8,8,8)$\\ 

\midrule[0.5pt]

& & & & $\sigma=0$\\
\cmidrule{5-5}
FSASC & $\boldsymbol{1}$ & $\boldsymbol{1}$ & $\boldsymbol{1}$ & $\boldsymbol{1}$ & $\boldsymbol{1}$ & $\boldsymbol{1}$ & $\boldsymbol{1}$ & $\boldsymbol{1}$   \\
SASC-D & $\boldsymbol{1}$ & $\boldsymbol{1}$ & $\boldsymbol{1}$ & $\boldsymbol{1}$ & $\boldsymbol{1}$ & $\boldsymbol{1}$ & $\boldsymbol{1}$ & $\boldsymbol{1}$  \\
SASC-A & $0.37$ & $0.37$ & $0.37$ & $0.39$ & $0.34$ & $0.41$ & $0.37$  & $\boldsymbol{1}$ \\
SSC & $10^{-3}$ & $0.01$ & $10^{-4}$ & $10^{-3} $ & $0.01$ & $0.02$ & $10^{-3}$  & $10^{-7}$ \\
LRR & $0.59$ & $0.37$ & $0.43$ & $0.31$ & $0.64$ & $0.41$ & $0.45$  & $0.50$ \\
LRR-H & $0.28$ & $0.23$ & $0.23$ & $0.19$ & $0.31$ & $0.24$ & $0.24$  & $0.26$ \\
LRSC & $0.59$ & $0.37$ & $0.43$ & $0.31$ & $0.64$ & $0.41$ & $0.45$  & $0.50$ \\
LSR & $0.59$ & $0.37$ & $0.42$ & $0.31$ & $0.64$ & $0.41$ & $0.45$  & $0.50$ \\
LSR-H & $0.28$ & $0.24$ & $0.24$ & $0.21$ & $0.31$ & $0.25$ & $0.25$  & $0.27$ \\

\midrule[0.5pt]

& & & & $\sigma=0.01$\\
\cmidrule{5-5}
FSASC & $0.05$ & $0.35$ & $0.43$ & $0.10$ & $0.09$ & $0.43$ & $0.42$ & $0.43$   \\
SASC-D & $\boldsymbol{0.91}$ & $\boldsymbol{0.93}$ & $\boldsymbol{0.85}$ & $\boldsymbol{0.84}$ & $\boldsymbol{0.94}$ & $\boldsymbol{0.91}$ & $\boldsymbol{0.87}$  & $\boldsymbol{0.85}$ \\
SASC-A & $0.32$ & $0.30$ & $0.12$ & $0.14$ & $0.30$ & $0.29$ & $0.24$  & $0.07$ \\
SSC & $10^{-3}$ & $0.01$ & $10^{-4}$ & $10^{-3} $ & $0.01$ & $0.02$ & $10^{-3}$  & $10^{-7}$ \\
LRR & $0.42$ & $0.37$ & $0.43$ & $0.31$ & $0.51$ & $0.41$ & $0.45$  & $0.50$ \\
LRR-H & $0.13$ & $0.23$ & $0.23$ & $0.17$ & $0.22$ & $0.24$ & $0.24$  & $0.26$ \\
LRSC & $0.42$ & $0.37$ & $0.43$ & $0.31$ & $0.52$ & $0.41$ & $0.45$  & $0.50$ \\
LSR & $0.41$ & $0.37$ & $0.42$ & $0.31$ & $0.51$ & $0.41$ & $0.45$  & $0.50$ \\
LSR-H & $0.11$ & $0.24$ & $0.24$ & $0.18$ & $0.21$ & $0.25$ & $0.25$  & $0.27$ \\

\bottomrule[1pt]
\end{tabular}
\end{table}

%%%%%%%%%%%%%%%%%%%%%%
%%%%%%%%%%%%%%%%%%
\begin{table}
\centering
\caption{Mean inter-cluster connectivity in $\%$ over $100$ independent trials for synthetic data randomly generated in three random subspaces of $\Re^{9}$ of dimensions $(d_1,d_2,d_3)$. There are $200$ points associated to each subspace, which are corrupted by zero-mean additive white noise of standard deviation $\sigma$ and support in the orthogonal complement of each subspace.}
\label{table:Synthetic_n3_inter}
\ra{0.7}
\begin{tabular}{@{}l@{\, \, \,}c@{\, \,}c@{\, \,}c@{\, \,}c@{\, \,}c@{\, \,}c@{\, \,}c@{\, \,}c}\toprule[1pt] method &  $(2,3,4)$ & $(4,5,6)$ & $(6,7,8)$ &  $(2,5,8)$  & $(3,3,3)$ &  $(6,6,6)$ & $(7,7,7)$ & $(8,8,8)$\\ 

\midrule[0.5pt]
& & & & $\sigma=0$\\
\cmidrule{5-5}
FSASC & $\boldsymbol{0}$ & $\boldsymbol{0}$ & $\boldsymbol{1}$ & $\boldsymbol{1}$ & $\boldsymbol{0}$ & $\boldsymbol{0}$ & $\boldsymbol{0}$ & $\boldsymbol{2}$   \\
SASC-D & $60$ & $60$ & $60$ & $60$ & $60$ & $60$ & $60$  & $60$ \\
SASC-A & $55$ & $55$ & $38$ & $43$ & $55$ & $50$ & $42$  & $35$ \\
SSC & $\boldsymbol{0}$ & $2$ & $22$ & $2$ & $\boldsymbol{0}$ & $7$ & $23$  & $46$ \\
LRR & $1$ & $49$ & $60$ & $45$ & $\boldsymbol{0}$ & $55$ & $60$  & $63$ \\
LRR-H & $\boldsymbol{0}$ & $18$ & $43$ & $9$ & $\boldsymbol{0}$ & $32$ & $44$  & $55$ \\
LRSC & $2$ & $49$ & $60$ & $45$ & $2$ & $55$ & $60$  & $63$ \\
LSR & $2$ & $49$ & $60$ & $43$ & $2$ & $56$ & $60$  & $64$ \\
LSR-H & $\boldsymbol{0}$ & $11$ & $24$ & $6$ & $\boldsymbol{0}$ & $19$ & $25$  & $30$ \\

\midrule[0.5pt]

& & & & $\sigma=0.01$\\
\cmidrule{5-5}
FSASC & $2$ & $4$ & $\boldsymbol{22}$ & $18$ & $2$ & $\boldsymbol{6}$ & $\boldsymbol{15}$ & $35$   \\
SASC-D & $62$ & $61$ & $60$ & $61$ & $62$ & $60$ & $60$  & $60$ \\
SASC-A & $63$ & $58$ & $46$ & $51$ & $64$ & $55$ & $47$  & $39$ \\
SSC & $\boldsymbol{0.1}$ & $\boldsymbol{1}$ & $23$ & $\boldsymbol{3} $ & $\boldsymbol{0.1}$ & $7$ & $23$  & $46$ \\
LRR & $17$ & $49$ & $60$ & $45$ & $16$ & $55$ & $60$  & $63$ \\
LRR-H & $1$ & $18$ & $43$ & $9$ & $1$ & $32$ & $44$  & $55$ \\
LRSC & $17$ & $49$ & $60$ & $45$ & $16$ & $55$ & $60$  & $63$ \\
LSR & $17$ & $49$ & $60$ & $46$ & $16$ & $55$ & $60$  & $64$ \\
LSR-H & $\boldsymbol{0.1}$ & $11$ & $24$ & $6$ & $\boldsymbol{0.1}$ & $19$ & $25$  & $\boldsymbol{30}$ \\

\bottomrule[1pt]
\end{tabular}
\end{table}

%%%%%%%%%%%%%%%%%%
\begin{table}
\centering
\caption{Mean running time of each method in seconds over $100$ independent trials for synthetic data randomly generated in three random subspaces of $\Re^{9}$ of dimensions $(d_1,d_2,d_3)$. There are $200$ points associated to each subspace, which are corrupted by zero-mean additive white noise of standard deviation $\sigma=0.01$ and support in the orthogonal complement of each subspace. The reported running time is the time required to compute the affinity matrix, and it does not include the spectral clustering step. The experiment is run in MATLAB on a standard Macbook-Pro with a dual core 2.5GHz Processor and a total of $4$GB Cache memory.}
\label{table:Synthetic_n3_rt}
\ra{0.7}
\begin{tabular}{@{}l@{\, \, \,}r@{\, \,}r@{\, \,}r@{\, \,}r@{\, \,}r@{\, \,}r@{\, \,}r@{\, \,}r}\toprule[1pt] method &  $(2,3,4)$ & $(4,5,6)$ & $(6,7,8)$ &  $(2,5,8)$  & $(3,3,3)$ &  $(6,6,6)$ & $(7,7,7)$ & $(8,8,8)$\\ 
\midrule[0.5pt]
& & & & $\sigma=0.01$\\
\cmidrule{5-5}
FSASC & $13.57$ & $12.11$ & $8.34$ & $13.90$ & $13.69$ & $10.67$ & $8.55$ & $6.01$   \\
SASC-D & $0.03$ & $0.03$ & $0.03$ & $0.03$ & $0.03$ & $0.03$ & $0.03$  & $0.03$ \\
SASC-A & $0.03$ & $0.03$ & $0.03$ & $0.03$ & $0.03$ & $0.03$ & $0.03$  & $0.03$ \\
SSC & $5.01$ & $4.84$ & $5.06$ & $6.59 $ & $4.90$ & $4.71$ & $4.80$  & $5.03$ \\
LRR & $0.54$ & $0.36$ & $0.34$ & $0.45$ & $0.53$ & $0.34$ & $0.34$  & $0.34$ \\
LRR-H & $0.65$ & $0.48$ & $0.45$ & $0.61$ & $0.65$ & $0.46$ & $0.46$  & $0.45$ \\
LRSC & $\boldsymbol{0.01}$ & $\boldsymbol{0.01}$ & $\boldsymbol{0.01}$ & $\boldsymbol{0.01}$ & $\boldsymbol{0.01}$ & $\boldsymbol{0.01}$ & $\boldsymbol{0.01}$  & $\boldsymbol{0.01}$ \\
LSR & $0.05$ & $0.05$ & $0.05$ & $0.07$ & $0.05$ & $0.05$ & $0.05$  & $0.05$ \\
LSR-H & $0.25$ & $0.25$ & $0.24$ & $0.32$ & $0.24$ & $0.24$ & $0.24$  & $0.24$ \\

\bottomrule[1pt]
\end{tabular}
\end{table}

Using data as above, we compare the proposed methods FSASC (Algorithm \ref{alg:FSASC}) and SASC-D (section \ref{subsection:SASC-D}) to the state-of-the-art SASC-A (section \ref{subsection:SASC-A}) from algebraic subspace clustering methods, as well as to state-of-the-art \emph{self-expressiveness}-based methods, such as Sparse Subspace Clustering (SSC) \cite{Elhamifar:TPAMI13}, Low-Rank Representation (LRR) \cite{Liu:TPAMI13,Liu:ICML10}, Low-Rank Subspace Clustering (LRSC) \cite{Vidal:PRL14} and Least-Squares Regression subspace clustering (LSR) \cite{Lu:ECCV12}. 
For FSASC we use $L=10$ and $\gamma=0.1$. For SSC we use the Lasso version with $\alpha_z=20$, where $\alpha_z$ is defined above equation (14) in \cite{Elhamifar:TPAMI13}, and $\rho=0.7$, where $\rho$ is the thresholding parameter of the SSC affinity (see MATLAB function \texttt{thrC.m} provided by the authors of \cite{Elhamifar:TPAMI13}). For LRR we use the ADMM version provided by its first author with $\lambda=4$ in equation (7) of \cite{Liu:PAMI12}. For LRSC we use the ADMM method proposed by its authors with $\tau=420$ and $\alpha=4000$, where $\alpha$ and $\tau$ are defined at problem $(P)$ of page $2$ in \cite{Vidal:PRL14}. Finally, for LSR we use equation (16) in \cite{Lu:ECCV12} with $\lambda=0.0048$. For both LRR and LSR we also report results with the heuristic post-processing of the affinity matrix proposed by the first author of \cite{Liu:PAMI12} in their MATLAB function \texttt{lrr\_motion\_seg.m}; we denote these versions of LRR and LSR by LRR-H and LSR-H respectively.

Notice that all compared methods are spectral methods, i.e., they produce a pairwise affinity matrix $\C$ upon which spectral clustering is applied. To evaluate the quality of the produced affinity, besides reporting the standard subspace clustering error, which is the percentage of misclassified points, we also report the \emph{intra-cluster} and \emph{inter-cluster connectivities} of the affinity matrices $\C$. As an intra-cluster connectivity we use the minimum algebraic connectivity among the subgraphs corresponding to the ground truth clusters. The algebraic connectivity of a subgraph is the second smallest eigenvalue of its normalized Laplacian, and measures how well connected the graph is. In particular, values close to $1$ indicate that the subgraph is indeed well-connected (single connected component), while values close to $0$ indicate that the subgraph tends to split to at least two connected components. Clearly, from a clustering point of view, the latter situation is undesirable, since it may lead to over-segmentation.  Finally, as inter-cluster connectivity we use the percentage of the $\ell_1$-norm of the affinity matrix $\C$ that corresponds to erroneous connections, i.e., the quantity $\sum_{\x_j \in \S_i, \x_{j'} \in \S_{i'}, i \neq i'} |\C_{j,j'}|/||\C||_1$.
The smaller the inter-cluster connectivity is, the fewer erroneous connections the affinity contains. To summarize, a high-quality affinity matrix is characterized by high intra-cluster and low inter-cluster connectivity, which is then expected to lead to small spectral clustering error.

Tables \ref{table:Synthetic_n3_error}-\ref{table:Synthetic_n3_inter} show the clustering error, and the intra-cluster and inter-cluster connectivities associated with each method, averaged over $100$ independent experiments. Inspection of Table \ref{table:Synthetic_n3_error} reveals that, in the absence of noise ($\sigma=0$), FSASC gives exactly zero error across all dimension configurations. This is in agreement with the theoretical results of section \ref{section:mfAASC}, which guarantee that, in the absence of noise, the only points that survive the filtration associated with some reference point are precisely the points lying in the same subspace as the reference point. Indeed, notice that in Table \ref{table:Synthetic_n3_intra} and for $\sigma=0$ the connectivity attains its maximum value $1$, indicating that the subgraphs corresponding to the ground truth clusters are fully connected. Moreover in Table \ref{table:Synthetic_n3_inter} we see that for $\sigma=0$ the erroneous connections are either zero or negligible. This practically means that each point is connected to each and every other point from same subspace, while not connected to any other points, which is the ideal structure that an affinity matrix should have.

Remarkably, the proposed SASC-D, which is much simpler than FSASC, also gives zero error for zero noise. Table \ref{table:Synthetic_n3_intra} shows that SASC-D achieves perfect intra-cluster connectivity, while Table \ref{table:Synthetic_n3_inter} shows that 
the inter-cluster connectivity associated with SASC-D is very large. This is clearly an undesirable feature, which nevertheless seems not to be affecting the clustering error in this experiment, perhaps because the intra-cluster connectivity is very high. As we will see though later (section \ref{subsection:MotionSegmentation}), the situation is different for real data, for which SASC-D performs inferior to FSASC. 

Going back to Table \ref{table:Synthetic_n3_error} and $\sigma=0$, we see that the improvement in performance of the proposed FSASC and SASC-D over the existing SASC-A 
is dramatic: indeed, SASC-A succeeds only in the case of hyperplanes, i.e., when $d_1=d_2=d_3=8$. This is theoretically expected, since in the case of hyperplanes there is only one normal direction per subspace, and the gradient of the vanishing polynomial at a point in the hyperplane is guaranteed to recover this direction. However, when the subspaces have lower-dimensions, as is the case, e.g., for the dimension configuration $(4,5,6)$, then there are infinitely many orthogonal directions to each subspace. Hence a priori, the gradient of a vanishing polynomial may recover any such direction, and such directions could be dramatically different even for points in the same subspace (e.g., they could be orthogonal), thus leading to a clustering error of $39\%$.
 
As far as the rest of the self-expressiveness methods are concerned, Table \ref{table:Synthetic_n3_error} ($\sigma=0$) shows what we expect: the methods 
give a perfect clustering when the subspace dimensions are small, e.g., for dimension configurations $(2,3,4)$ and $(3,3,3)$, they start to degrade as the subspace dimensions increase ($(4,5,6)$, $(6,6,6)$), and eventually they fail when 
the subspace dimensions become large enough ($(6,7,8)$,$(7,7,7)$,$(8,8,8)$). To examine
the effect of the subspace dimension on the connectivity, let us consider SSC and the dimension configurations $(2,3,4)$ and $(2,5,8)$: Table \ref{table:Synthetic_n3_intra} ($\sigma=0$) shows
that for both of these configurations the intra-cluster connectivity has a small value of $10^{-3}$. This is expected, since SSC computes sparse affinities and it is known to produce weakly connected clusters. Now, Table \ref{table:Synthetic_n3_inter} ($\sigma=0$)
shows that the inter-cluster connectivity of SSC for $(2,3,4)$ is zero, i.e., there are no erroneous connections, and so, even though the intra-cluster connectivity is as small as $10^{-3}$, spectral clustering can still give a zero clustering error. On the other hand, for the case $(2,5,8)$ the inter-cluster connectivity is $2\%$, which, even though small, when coupled with the small intra-cluster connectivity of $10^{-3}$, leads to a spectral clustering error of $49\%$. Finally, notice that for the case of $(8,8,8)$ the intra-cluster connectivity is $10^{-7}$ and the inter-cluster connectivity is $46\%$, indicating that the quality of the produced affinity is very poor, thus explaining the corresponding clustering error of $55\%$.

%%%%%%%%%%%%%%%%%%%%%%%%%%%%%%
%%%%%%%%%%%%%%%%%%
\begin{table}[t!]
\centering
\caption{Mean subspace clustering error in $\%$ over $100$ independent trials for synthetic data randomly generated in four random subspaces of $\Re^{9}$ of dimensions $(8,8,5,3)$. There are $200$ points associated to each subspace, which are corrupted by zero-mean additive white noise of standard deviation $\sigma=0, 0.01, 0.03, 0.05$ and support in the orthogonal complement of each subspace.}
\label{table:Synthetic_n3_nv4}
\ra{0.7}
\begin{tabular}{@{}l@{\, \, \, \,}r@{\, \, \, \, }r@{\, \, \, \, }r@{\, \, \, \, }r@{\, \, \, \, }r@{\, \, \, \, }r@{\, \, \, \, }r@{\, \, \, \, }r}\toprule[1pt] method / $\sigma$  &  $0$ & $0.01$ & $0.03$ &  $0.05$  \\ 
\midrule[0.5pt]
FSASC & $\boldsymbol{0}$ & $\boldsymbol{2.19}$ & $\boldsymbol{5.08}$ & $\boldsymbol{7.65}$   \\
SASC-D & $22.88$ & $17.83$ & $15.93$ & $17.44$ \\
SASC-A & $22.88$ & $27.21$ & $31.43$ & $36.36$  \\
SSC & $64.39$ & $64.17$ & $64.36$ & $64.13 $ \\
LRR & $42.86$ & $42.88$ & $43.04$ & $42.91$  \\
LRR-H & $42.08$ & $42.06$ & $42.23$ & $42.21$  \\
LRSC & $42.85$ & $42.88$ & $43.05$ & $42.90$  \\
LSR & $42.84$ & $42.85$ & $43.00$ & $42.93$ \\
LSR-H & $38.72$ & $38.74$ & $38.96$ & $39.86$  \\

\bottomrule[1pt]
\end{tabular}
\end{table}

When the data are corrupted by noise ($\sigma=0.01,0.03,0.05$), the rest of the Tables 
\ref{table:Synthetic_n3_error}-\ref{table:Synthetic_n3_inter} show that FSASC is the best 
method, with the exception of the case of hyperplanes. In this latter case, i.e., when $d_1=d_2=d_3=8$, the best method is SASC-D with a clustering error of $6 \%$ when $\sigma=0.03$, as opposed to $10\%$ for FSASC. This is expected, since for the case of codimension-$1$ subspaces the length of each filtration should be precisely $1$, since in theory, the length of the filtration is equal to the codimension of the subspace associated to the reference point. Since FSASC automatically determines this length based on the data and the value of the parameter $\gamma$, it is expected that when the data are noisy, errors will be made in the estimation of the filtration length. On the other hand, SASC-D is equivalent to FSASC with an a priori configured filtration length equal to $1$, thus performing superior to FSASC. Certainly, giving as input to FSASC more than one values for $\gamma$, as shown in Algorithm \ref{alg:FSASC}, is expected to address this issue, but also increase the running time of FSASC (see Table \ref{table:Synthetic_n3_rt} for average running times of the methods in the current experiment).  

We conclude this section by demonstrating the interesting property of FSASC of being able to give the correct clustering by using vanishing polynomials of degree strictly less than the true number of subspaces. Towards that end, we consider a similar situation as above, except that now we have $n=4$ subspaces of dimensions $(8,8,5,3)$. Contrary to 
SASC-D and SASC-A, for which the theory requires degree-$4$ polynomials, 
FSASC is still applicable if one works with polynomials of degree $3$: the crucial observation is that for the dimension configuration $(8,8,5,3)$, the corresponding subspace arrangement always admits vanishing polynomials of degree $3$, and the same is true for every intermediate arrangement occurring in a filtration. For example, if one lets $\b_1$ be a normal vector to one of the $8$-dimensional subspaces, and $\b_2$ a normal vector to the other, and $\b_3$ a normal vector to the $8$-dimensional subspace spanned by both the $5$-dimensional and $3$-dimensional subspace, then the polynomial $p(x) = (\b_1^\transpose x) (\b_2^\transpose x) (\b_3^\transpose x)$ has degree $3$ and vanishes on the entire arrangement of the four subspaces. Interestingly, Table \ref{table:Synthetic_n3_nv4} shows that FSASC gives 
zero error in the absence of noise and $7.65\%$ error for the worst case $\sigma =0.05$, while all other methods fail. In particular, the other two algebraic methods, i.e., SASC-D and SASC-A, are not able to cluster the data using a single vanishing polynomial of degree $3$.

%%%%%%%
\subsection{Experiments on real motion sequences} \label{subsection:MotionSegmentation}

We evaluate different methods on the Hopkins155 motion segmentation data set \cite{Tron:CVPR07}, which contains 155 videos of $n=2,3$ moving objects, each one with $N=100$-$500$ feature point trajectories of dimension $D=56$-$80$. While SSC, LRR, LRSC and LSR can operate directly on the raw data, algebraic methods require $\mathcal{M}_n(D) \leq N$. Hence, for algebraic methods, we project the raw data onto the subspace spanned by their $D$ principal components, where $D$ is the largest integer $\le 8$ such that  $\mathcal{M}_n(D) \leq N$, and then normalize each point to have unit norm. We apply SSC to i) the raw data (SSC-raw) and ii) the raw points projected onto their first $8$ principal components and normalized to unit norm (SSC-proj). For FSASC we use $L=10$ and $\gamma=0.001,0.005,0.01,0.05,0.1, 0.5,1,5,10 $. LRR, LRSC and LSR use the same parameters as in section \ref{subsection:SyntheticExperiments}, while for SSC the parameters are $\alpha = 800$ and $\rho = 0.7$. 

The clustering errors and the intra/inter-cluster connectivities are reported in Table \ref{table:Hopkins155} and Fig. \ref{figure:Hopkins155-ordered-error}. Notice the clustering errors of about 5\% and 37\% for SASC-A for two and three motions respectively. Notice how changing the angle-based by the distance-based affinity, SASC-D already gives errors of around 5.5\% and 14\%. But most dramatically, notice how FSASC further reduces those errors to 0.8\% and 2.48\%. Moreover, even though the dimensions of the subspaces ($d_i \in \{1,2,3,4\}$ for motion segmentation) are low relative to the ambient space dimension ($D=56$-$80$) - a case that is specifically suited for SSC, LRR, LRSC, LSR - projecting the data to $D\leq 8$, which makes the subspace dimensions comparable to the ambient dimension, is sufficient for FSASC to get superior performance relative to the best performing algorithms on Hopkins 155. We believe that this is because, overall, FSASC produces a much higher intra-cluster connectivity, without increasing the inter-cluster connectivity too much.

%%%%%%%%%%%%%%%%%%%%%%%%%%%%%%
\setlength{\tabcolsep}{0.2em}
\begin{table}
	\centering
	\ra{0.7}
	\caption{Mean clustering error ($E$) in $\%$, intra-cluster connectivity ($C_1$), and inter-cluster connectivity ($C_2$) in $\%$ for the Hopkins155 data set.} \label{table:Hopkins155}
	\begin{tabular}
		{@{}l@{\, \, \, }@{\, \,  }r@{\, \, }r@{\, \, }r@{\, \, }r@{\, \, }r@{\, \, }r@{\, \, }r@{\, \, }r@{\, \, }r@{\, \,}r@{\, \, }r@{\, \, }r@{\, \, }r}\toprule[1pt]%{}
		\phantom{abc}&& \multicolumn{3}{c}{$2$ motions} & \phantom{ab}& \multicolumn{3}{c}{$3$ motions} & 
		\phantom{ab} & \multicolumn{3}{c}{all motions} & \\
		\cmidrule{3-5} \cmidrule{7-9} \cmidrule{11-13}
		method &&  $E$ &  $C_1$ & $C_2$ &&  $E$ &  $C_1$ & $C_2$ && $E$ &  $C_1$ & $C_2$  \\ \midrule
		FSASC && $\boldsymbol{0.80}$ &    $0.18$ & $4$    && $\boldsymbol{2.48}$ &  $0.10$ &    $10$     
		&&  $\boldsymbol{1.18}$ &  $0.16$ &   $5$         \\
		SASC-D   && $5.65$ &      $0.82$ & $26$ && $14.0$ &   $0.80$ & $46$  && $7.59$  &   $0.81$ &  $31$      \\
		SASC-A    && $4.99$ &       $0.35$ & $5$ && $36.8$ &  $0.09$ & $35$ && $12.2$ &   $0.29$ &  $12$      \\
		SSC-raw    && $1.53$ &     $0.05$ &  $2$   && $4.40$ &  $0.04$ &  $3$  &&  $2.18$ &   $0.05$ &  $2$       \\
	SSC-proj   &&  $5.87$ & $0.04$ & $3$ && $5.70$ &  $0.03$ & 3 &&   $5.83$ &   $0.03$ & $3$     \\
LRR         && $4.26$ &       $0.25$ & $19$   && $7.78$ &  $0.25$ & $28$   &&   $5.05$ &   $0.25$ &   $21$      \\
LRR-H  && $2.25$ &  $0.05$ & $2$ && $3.40$ &  $0.04$ & $3$    &&   $2.51$ &   $0.05$ & $2$ \\
		LRSC  && $3.38$ &  $0.25$ & $19$ && $7.42$ &  $0.24$ & $28$    &&   $4.29$ &   $0.25$ & $21$ \\
LSR         && $3.60$ & $0.24$ &  $18$  && $7.77$ &  $0.23$ &  $28$   &&  $ 4.54$ &    $0.23$ &   $21$     \\
LSR-H && $2.73$ &   $0.04$ &  $1$   && $2.60$ &  $0.03$ &  $2$   &&  $2.70$ &   $0.04$ &   $1$     \\
		\bottomrule[1pt]
	\end{tabular}
\end{table}

\begin{figure}[t!] 
	\centering
	\includegraphics[trim=20 0 50 50,clip,width=0.8\linewidth]{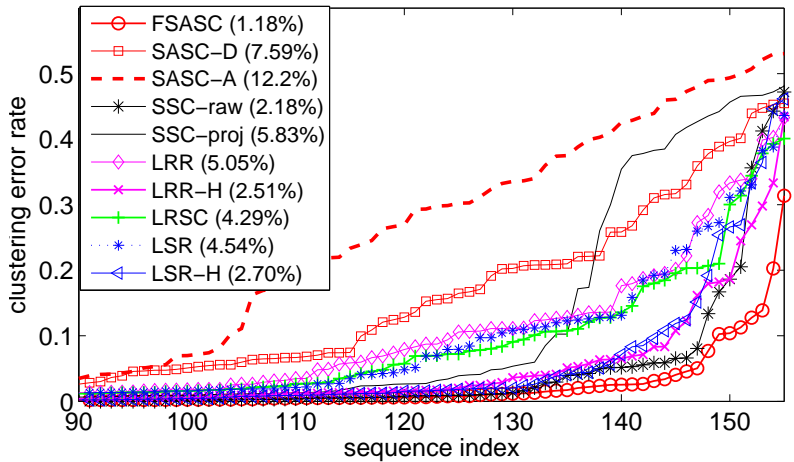}
	\caption{Clustering error ratios for both $2$ and $3$ motions in Hopkins155, ordered increasingly for each method. Errors start from the $90$-th smallest error of each method.}\label{figure:Hopkins155-ordered-error}
\end{figure}

\section{Conclusions and Future Research}
We presented a novel family of subspace clustering algorithms, termed 
\emph{Filtrated Algebraic Subspace Clustering} (FASC). The common theme
of these algorithms is the notion of a filtration of subspace arrangements. 
The first algorithm
of the family, termed \emph{Filtrated Algebraic Subspace Clustering}
(FASC) receives as input a finite point set in general position inside 
a subspace arrangement, together with an upper bound on the number of subspaces
in the arrangement. Then FASC provably returns the number of the subspaces, their dimensions, as 
well as a basis for the orthogonal complement of each subspace. The second algorithm
of the family, termed \emph{Filtrated Spectral Algebraic Subspace Clustering} (FSASC) is an adaptation of FASC to a working algorithm that is robust to noise.
In fact, by experiments on synthetic and real data we showed that FSASC is superior to state-of-the-art subspace clustering algorithms on several occasions. 

Due to the power of the machinery of filtrations, FSASC is unique among other
subspace clustering algorithms in that it can handle robustly subspaces of 
potentially very different dimensions, which can be arbitrarily close or far from the dimension of the ambient space. This is an important distinctive feature of FSASC from state-of-the-art Sparse and Low-Rank methods, which are in principle applicable only when the subspace dimensions are sufficiently small relative to the ambient dimension. However, this advantage of FSASC comes at the cost of a large computational complexity. Future research will address the problem of reducing this complexity with the aim of making FSASC applicable to large scale datasets.  Additional challenges to be undertaken include making FSASC robust to missing entries and outliers. 

%%%%%%%%%%%%%%%%%%%%%%%%%%%%%%%%%%%%%%%%%%%%%%%%%%%%%%%%%%%

\appendix

%\counterwithin{theorem}{section}
%\counterwithin{definition}{section}
%\counterwithin{lemma}{section}
%\counterwithin{proposition}{section}

%%%%%%%%%%%%%%%%

\section{Notions From Commutative Algebra} \label{appendix:CA}
A central concept in the theory of polynomial algebra is that of an \emph{ideal}:
\begin{definition}[Ideal]\label{dfn:ideal}
A subset $\I$ of the ring $\Re[x]:=\Re[x_1,\dots,x_D]$ of polynomials 
 is called an \emph{ideal} if for every $p,q \in \I$ and every $r \in \Re[x]$ we have that $p+q \in \I$ and $r p \in \I$. If $p_1,\hdots,p_n$ are elements of $\Re[x]$, then the \emph{ideal generated} by these elements is the set of all linear combinations of the $p_i$ with coefficients in $\Re[x]$.
\end{definition}

A polynomial $f \in \Re[x]$ is called homogeneous of degree $r$, if all the monomials that appear in $f$ have degree $r$. An ideal $\I$ is called homogeneous, if it is generated by homogeneous elements, i.e., $\I = \langle f_1,\dots,f_s\rangle$ where $f_i$ is a homogeneous polynomial of degree $r_i$. The reader can check that an ideal $\I$ is homogeneous if and only if $\I = \oplus_{k \ge 0} \I_k$, where $\I_k = \I \cap \Re[x]_k$. It is not hard to see that the intersection and the sum of two (homogeneous) ideals is a (homogeneous) ideal.
In performing algebraic operations with ideals it is also
useful to have a notion of product of ideals:
\begin{definition}[Product of ideals] Let $\I_1, \I_2$ be ideals of $\Re[x]$. The \emph{product} $\I_1 \I_2$ of $\I_1, \I_2$ is defined to be the set of all elements of the form $p_1 q_1 +\cdots + p_m q_m$ for any $m \in \mathbb{N}, p_i \in \I_1, q_i \in \I_2$. 
\end{definition}
\noindent The notion of a prime ideal is a natural generalization of the notion of a prime number. Prime ideals play a fundamental role in the study of the structure of general ideals, in analogy to the role that prime numbers have in the structure of integers.
\begin{definition}[Prime ideal]
An ideal $\mathfrak{p}$ of $\Re[x]$ is called \emph{prime}, if whenever $p q \in \mathfrak{p}$ for some $p,q \in \Re[x]$, then either $p \in \mathfrak{p}$ or $q \in \mathfrak{p}$. 
\end{definition}
We note that if $\mathfrak{p}$ is a homogeneous ideal, then in order to check whether $\mathfrak{p}$ is prime, it is enough
to consider $f,g$ homogeneous polynomials in the above definition.
\begin{proposition} \label{prp:ideals-intersection}
Let $\mathfrak{p}, {\I}_1, \dots, {\I}_n$ be ideals of $\Re[x]$ with $\mathfrak{p}$ being prime. If $\mathfrak{p} \supset {\I}_1 \cap \cdots \cap {\I}_n$, then $\mathfrak p \supset {\I}_i$ for some $i \in [n]$.
\end{proposition}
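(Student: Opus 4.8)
The statement to prove is: if $\mathfrak{p}, \I_1, \dots, \I_n$ are ideals of $\Re[x]$ with $\mathfrak{p}$ prime, and $\mathfrak{p} \supset \I_1 \cap \cdots \cap \I_n$, then $\mathfrak{p} \supset \I_i$ for some $i \in [n]$.

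The plan is to argue by contradiction. Suppose that $\mathfrak{p} \not\supset \I_i$ for every $i \in [n]$. Then for each $i$ we may choose a polynomial $f_i \in \I_i$ with $f_i \notin \mathfrak{p}$. Consider the product $f_1 f_2 \cdots f_n$. Since $f_i \in \I_i$ for each $i$ and each $\I_i$ is an ideal (hence absorbs multiplication by arbitrary ring elements), the product $f_1 \cdots f_n$ lies in every $\I_i$, and therefore $f_1 \cdots f_n \in \I_1 \cap \cdots \cap \I_n \subset \mathfrak{p}$. On the other hand, $\mathfrak{p}$ is prime: applying the defining property of a prime ideal inductively (a product of finitely many elements lying in $\mathfrak{p}$ forces at least one factor to lie in $\mathfrak{p}$), we conclude that $f_i \in \mathfrak{p}$ for some $i$, contradicting the choice of $f_i$. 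Hence $\mathfrak{p} \supset \I_i$ for some $i$.

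The only step requiring a small amount of care is the induction that extends primality from two factors to $n$ factors, but this is entirely routine: if $f_1 \cdots f_n = f_1 \cdot (f_2 \cdots f_n) \in \mathfrak{p}$, then either $f_1 \in \mathfrak{p}$ and we are done, or $f_2 \cdots f_n \in \mathfrak{p}$ and we recurse on the shorter product, with the base case $n = 1$ being trivial. I do not anticipate any genuine obstacle here; the result is standard and the argument is short. The one thing worth stating explicitly is why $f_1 \cdots f_n \in \I_i$ for each $i$ — namely the ideal absorption property $r p \in \I$ for $r \in \Re[x]$, $p \in \I$, applied with $p = f_i$ and $r = \prod_{j \neq i} f_j$ — so that the containment in the intersection is transparent.

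I would write this up in three or four sentences, essentially as above, without invoking any of the algebraic-geometric machinery from later in the paper; only Definition~\ref{dfn:ideal} and the definition of a prime ideal are needed.
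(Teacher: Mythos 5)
Your proof is correct and follows exactly the same contrapositive argument as the paper: pick $f_i \in \I_i \setminus \mathfrak{p}$, observe that $\prod f_i$ lies in the intersection and hence in $\mathfrak{p}$, and invoke primality to force a contradiction. The only difference is that you spell out the ideal-absorption step and the induction from two factors to $n$, which the paper leaves implicit.
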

\begin{proof} 
Suppose $\mathfrak{p} \not\supset {\I}_i$ for all $i$. Then for every $i$ there exists $x_i \in {I}_i - \mathfrak{p}$. But then $\prod_{i=1}^s x_i \in \cap_{i=1}^s {\I}_i \subset \mathfrak{p}$ and since $\mathfrak{p}$ is prime, some $x_j \in \mathfrak{p}$, contradiction.
\end{proof}
A final notion that we need is that of a radical ideal:
\begin{definition}
	An ideal $\I$ of $\Re[x]$ is called \emph{radical}, if whenever some $p \in \Re[x]$ satisfies $p^\ell \in \I$ for some $\ell$, then it must be the case that $p \in \I$. 
\end{definition} 
Radical ideals have a very nice structure:
\begin{theorem} \label{thm:Radical}
	Every radical ideal $\I$ of $\Re[x]$ can be written uniquely as the finite intersection of prime ideals. Conversely, the intersection of a finite number of prime ideals is always a radical ideal.
\end{theorem}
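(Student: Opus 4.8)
The plan is to treat the two halves of the statement separately, and to split the first (``existence and uniqueness of a finite prime decomposition'') into an existence argument by Noetherian induction and a short uniqueness argument built on Proposition~\ref{prp:ideals-intersection}. Throughout, the single external ingredient is that $\Re[x]$ is Noetherian (Hilbert's basis theorem), so that every nonempty family of ideals of $\Re[x]$ possesses a maximal element; I would quote this from the commutative-algebra preliminaries. Here $\sqrt{\mathfrak a}$ denotes the radical of an ideal $\mathfrak a$, i.e.\ the set of $p$ with $p^{\ell}\in\mathfrak a$ for some $\ell\ge 1$, which is again an ideal and is radical.

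For existence, suppose toward a contradiction that the family $\mathcal F$ of radical ideals of $\Re[x]$ that are \emph{not} a finite intersection of prime ideals is nonempty; by Noetherianity $\mathcal F$ has a maximal element $\I$. Then $\I$ is not prime (a prime is its own one-term decomposition), so we may pick $a,b\in\Re[x]$ with $ab\in\I$ but $a\notin\I$ and $b\notin\I$. The crux is the identity
\[
\I \;=\; \sqrt{\I+\langle a\rangle}\ \cap\ \sqrt{\I+\langle b\rangle}.
\]
The inclusion ``$\subseteq$'' is immediate since $\I$ is radical and sits inside both ideals. For ``$\supseteq$'', take $x$ with $x^{m}=i_1+r_1 a$ and $x^{n}=i_2+r_2 b$ for some $i_1,i_2\in\I$ and $r_1,r_2\in\Re[x]$; expanding $x^{m+n}=(i_1+r_1a)(i_2+r_2b)$ shows every term lies in $\I$ --- three of them because $i_1$ or $i_2$ does, and the last because $ab\in\I$ --- so $x^{m+n}\in\I$ and hence $x\in\I$. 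Now $a\in\sqrt{\I+\langle a\rangle}$ but $a\notin\I$, so $\sqrt{\I+\langle a\rangle}\supsetneq\I$, and likewise $\sqrt{\I+\langle b\rangle}\supsetneq\I$. Both are radical ideals strictly larger than $\I$, hence outside $\mathcal F$ by maximality, hence finite intersections of primes; then so is $\I$, a contradiction. Therefore $\mathcal F=\emptyset$.

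For uniqueness, first pass to an \emph{irredundant} decomposition $\I=\mathfrak p_1\cap\cdots\cap\mathfrak p_r$ by deleting any $\mathfrak p_i$ that contains the intersection of the remaining ones (this is the sense in which ``uniquely'' must be read; without irredundancy one can always adjoin superfluous primes lying above $\I$). Given another irredundant decomposition $\I=\mathfrak q_1\cap\cdots\cap\mathfrak q_s$, each $\mathfrak p_i\supseteq\I=\bigcap_j\mathfrak q_j$ forces $\mathfrak p_i\supseteq\mathfrak q_j$ for some $j$ by Proposition~\ref{prp:ideals-intersection}; symmetrically $\mathfrak q_j\supseteq\mathfrak p_k$ for some $k$, so $\mathfrak p_i\supseteq\mathfrak p_k$, and irredundancy of the $\mathfrak p$'s gives $i=k$, whence $\mathfrak p_i=\mathfrak q_j$. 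Thus $\{\mathfrak p_1,\dots,\mathfrak p_r\}\subseteq\{\mathfrak q_1,\dots,\mathfrak q_s\}$, and equality follows by symmetry. Finally the converse is direct: if $\I=\mathfrak p_1\cap\cdots\cap\mathfrak p_r$ with each $\mathfrak p_i$ prime and $f^{\ell}\in\I$, then $f^{\ell}\in\mathfrak p_i$ for every $i$, and writing $f^{\ell}=f\cdot f^{\ell-1}$ and inducting on $\ell$ gives $f\in\mathfrak p_i$; hence $f\in\I$, so $\I$ is radical.

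The main obstacle is the finiteness built into the existence half: there is no a priori reason that peeling off primes terminates, and the whole argument rests on the Noetherian property of $\Re[x]$. An alternative route of essentially the same difficulty would be to invoke the Lasker--Noether primary decomposition $\I=\mathfrak q_1\cap\cdots\cap\mathfrak q_r$ into primary ideals, take radicals using $\sqrt{\bigcap_i\mathfrak q_i}=\bigcap_i\sqrt{\mathfrak q_i}$, and use that the radical of a primary ideal is prime; this is shorter but leans on a heavier theorem, so I would prefer the self-contained Noetherian-induction argument above.
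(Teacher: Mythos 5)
Your argument is correct. Be aware, though, that the paper itself gives no proof of Theorem~\ref{thm:Radical}: it is stated as a classical fact in the commutative-algebra appendix and the reader is referred to \cite{AtiyahMacDonald-1994}, \cite{Eisenbud-2004}, and \cite{Matsumura-2006}. There is therefore no ``paper proof'' to compare against; what you have written is the standard Noetherian-induction argument (a special case of the proof that a Noetherian topological space has finitely many irreducible components, translated through the ideal--variety correspondence), and it is sound. Two small remarks. First, the Noetherian property of $\Re[x]$ is not stated anywhere in the paper's appendix, so you cannot ``quote it from the preliminaries'' --- you would need to import Hilbert's basis theorem from one of the cited references, as you essentially anticipate. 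Second, you are right that the word ``uniquely'' in the statement must be read as uniqueness of the \emph{irredundant} decomposition (equivalently, of the minimal primes over $\I$); your passage to an irredundant decomposition before running the containment argument via Proposition~\ref{prp:ideals-intersection} is exactly the required fix, and the rest of the uniqueness step and the converse are both correct as written.
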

For further information on commutative algebra we refer the reader to \cite{AtiyahMacDonald-1994} and
\cite{Eisenbud-2004} or to the more advanced treatment of \cite{Matsumura-2006}.

%%%%%%%%%%%%%%%%%%%%%%%%%%%%%%%%%%%%%%%

\section{Notions From Algebraic Geometry} \label{appendix:AG}
The central object of algebraic geometry is that of an \emph{algebraic variety}:
\begin{definition}[Algebraic variety]
A subset $\Y$ of $\Re^D$ is called an \emph{algebraic variety} or \emph{algebraic set} if it is the zero-locus of some ideal  $\mathfrak a$ of $\Re[x]$, i.e., $\Y = \left\{\y \in \Re^D : p(\y)=0, \, \forall p \in \mathfrak a \right\}$. A standard notation is to write $\Y = \Z (\mathfrak a)$ where the operator $\Z(\cdot)$ denotes \emph{zero set}.
\end{definition}
If $\Y = \Z (\mathfrak a)$ is an algebraic variety, then certainly every polynomial of $\mathfrak a$
vanishes on the entire $\Y$ (by definition). However, there may be more polynomials with that property,
 and they have a special name:
\begin{definition}[Vanishing ideal] The vanishing ideal of a subset $\Y$ of $\Re^D$, denoted $\I_{\Y}$, is the set of all polynomials of $\Re[x]$ that vanish on every point of $\Y$, i.e., 
$\I_\Y = \left\{p \in \Re[x]: p(\y)=0, \, \forall \y \in \Y \right\}$.
\end{definition}
It can be shown that the algebraic varieties induce a topology on $\Re^D$:
\begin{definition}[Zariski topology] \label{dfn:Zariski}
The \emph{Zariski Topology} on $\Re^D$ is the topology generated by defining the closed sets to be all the algebraic varieties. 
\end{definition}
Applying the definition of an \emph{irreducible topological space} in the context of the Zariski topology, we obtain:
\begin{definition}[Irreducible algebraic variety]
An algebraic variety $\Y$ is called irreducible if it can not be written as the union of two proper subsets of $\Y$ that are closed in the subspace topology of $\Y$.\footnote{We note that certain authors (e.g. \cite{Hartshorne-1977}) reserve the term \emph{algebraic variety} to refer to an \emph{irreducible closed set}.}
\end{definition}
The following Theorem is one of many interesting connections between geometry and algebra:
\begin{theorem}
An algebraic variety $\Y = \Z(\mathfrak a)$ is irreducible if and only if its vanishing ideal $\I_\Y$ is prime.
\end{theorem}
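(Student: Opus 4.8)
The plan is to prove the two implications separately, using only the formal properties of the operators $\Z(\cdot)$ and $\I_{(\cdot)}$: both are inclusion-reversing, $\I_{\Y_1 \cup \Y_2} = \I_{\Y_1} \cap \I_{\Y_2}$ for any subsets $\Y_1,\Y_2$ of $\Re^D$, and --- crucially --- $\Z(\I_\Y) = \Y$ whenever $\Y$ is closed. The last identity is where the hypothesis $\Y = \Z(\mathfrak a)$ enters: from $\mathfrak a \subseteq \I_\Y$ one gets $\Y = \Z(\mathfrak a) \supseteq \Z(\I_\Y) \supseteq \Y$. I would also adopt the standard convention that an irreducible variety is nonempty, so that the degenerate case $\Y = \emptyset$ (for which $\I_\Y = \Re[x]$ is not prime) is excluded on both sides.

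For ``$\I_\Y$ prime $\Rightarrow$ $\Y$ irreducible'' I would argue by contraposition. Assume $\Y = \Y_1 \cup \Y_2$ with each $\Y_i$ a proper closed subset of $\Y$; since $\Y$ is closed in $\Re^D$, each $\Y_i$ is itself a variety, so $\Z(\I_{\Y_i}) = \Y_i$. Taking vanishing ideals gives $\I_\Y = \I_{\Y_1} \cap \I_{\Y_2}$, and $\Y_i \subsetneq \Y$ forces $\I_{\Y_i} \supsetneq \I_\Y$ (equality would give $\Y_i = \Z(\I_{\Y_i}) = \Z(\I_\Y) = \Y$). Choosing $f_i \in \I_{\Y_i} \setminus \I_\Y$, the product $f_1 f_2$ lies in $\I_{\Y_1}\I_{\Y_2} \subseteq \I_{\Y_1} \cap \I_{\Y_2} = \I_\Y$ while neither factor does, contradicting primality.

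For the converse, assume $\Y$ irreducible and $fg \in \I_\Y$. Put $\Y_1 := \Y \cap \Z(f)$ and $\Y_2 := \Y \cap \Z(g)$, both closed in $\Y$. Any point $\y \in \Y$ satisfies $f(\y)g(\y) = 0$, hence $f(\y) = 0$ or $g(\y) = 0$, so $\Y \subseteq \Y_1 \cup \Y_2$ and therefore $\Y = \Y_1 \cup \Y_2$. Irreducibility then forces $\Y = \Y_1$ or $\Y = \Y_2$, i.e.\ $f \in \I_\Y$ or $g \in \I_\Y$; thus $\I_\Y$ is prime.

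This argument is purely formal, so I do not expect a real obstacle. The only subtleties are the two bookkeeping points already noted: justifying $\Z(\I_\Y) = \Y$ from closedness of $\Y$, and ruling out $\Y = \emptyset$ via the nonemptiness convention for irreducible varieties; both are routine but should be stated explicitly to keep the ``if and only if'' clean.
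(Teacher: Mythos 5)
The paper does not supply a proof of this statement: it appears in Appendix \ref{appendix:AG} as one of a series of standard algebraic-geometry facts quoted without argument, so there is no in-paper proof to compare against. Your proof is correct and is the standard topological argument. For primality implying irreducibility you argue by contraposition using $\I_{\Y_1\cup\Y_2}=\I_{\Y_1}\cap\I_{\Y_2}$ together with $\Z(\I_\Y)=\Y$ for closed $\Y$; the latter you rightly re-derive directly from $\mathfrak a\subseteq\I_\Y$ rather than from any Nullstellensatz, which matters because the paper explicitly cautions against invoking the Nullstellensatz over $\Re$. For the converse you decompose $\Y$ as $(\Y\cap\Z(f))\cup(\Y\cap\Z(g))$ and apply irreducibility. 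You also flag the two right bookkeeping points: each $\Y_i$ is closed in $\Re^D$ (not merely in $\Y$), so $\Z(\I_{\Y_i})=\Y_i$ is legitimate; and the nonemptiness convention for irreducible varieties is what keeps the biconditional from failing at $\Y=\emptyset$, where $\I_\Y=\Re[x]$ is not prime.
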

Perhaps not surprisingly, irreducible varieties are the fundamental building blocks of general varieties:
\begin{theorem}[Irreducible decomposition] \label{thm:varieties-decomposition}
Every algebraic variety $\Y$ of $\Re^D$ can be uniquely written as $\Y = \Y_1 \cup \cdots \cup \Y_n$, where $\Y_i$ are irreducible varieties and there are no inclusions $\Y_i \subset \Y_j$ for $i \neq j$. The varieties $\Y_i$ are referred to as the irreducible components of $\Y$.
\end{theorem}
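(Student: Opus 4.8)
The plan is to establish existence and uniqueness separately. Existence will rest on the single external fact that $\Re^D$ equipped with the Zariski topology (Definition \ref{dfn:Zariski}) is a \emph{Noetherian topological space}, i.e.\ every descending chain of closed subsets stabilizes. This follows from the Hilbert Basis Theorem (every ideal of $\Re[x]$ is finitely generated, so ascending chains of ideals stabilize): given closed sets $C_1 \supseteq C_2 \supseteq \cdots$, the vanishing ideals satisfy $\I_{C_1} \subseteq \I_{C_2} \subseteq \cdots$ and hence stabilize; and since a Zariski-closed set $\Y = \Z(\mathfrak a)$ satisfies $\Y = \Z(\I_\Y)$ (because $\mathfrak a \subseteq \I_\Y$ forces $\Z(\I_\Y) \subseteq \Z(\mathfrak a) = \Y \subseteq \Z(\I_\Y)$), the original chain of closed sets stabilizes as well.

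For \textbf{existence}, I would run the standard minimal-counterexample argument. Let $\Sigma$ be the collection of Zariski-closed subsets of $\Re^D$ that cannot be written as a finite union of irreducible varieties, and suppose for contradiction that $\Sigma \neq \emptyset$. By the descending chain condition, $\Sigma$ has a minimal element $\Y$. This $\Y$ is not irreducible, so by definition $\Y = \Y' \cup \Y''$ with $\Y',\Y''$ proper closed subsets; by minimality of $\Y$ neither $\Y'$ nor $\Y''$ lies in $\Sigma$, so each is a finite union of irreducible varieties, and therefore so is $\Y$, contradicting $\Y \in \Sigma$. Hence every variety $\Y$ can be written as a finite union of irreducibles; removing any component that is contained in another one and iterating (which terminates, as each step strictly decreases the number of components) yields a decomposition $\Y = \Y_1 \cup \cdots \cup \Y_n$ with no inclusions $\Y_i \subseteq \Y_j$ for $i \neq j$.

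For \textbf{uniqueness}, suppose $\Y = \Y_1 \cup \cdots \cup \Y_n = \Z_1 \cup \cdots \cup \Z_m$ are two such irredundant decompositions into irreducible varieties. Fix $i$. Then $\Y_i = \Y_i \cap \Y = \bigcup_{j=1}^m (\Y_i \cap \Z_j)$ is a finite union of subsets closed in $\Y_i$; since $\Y_i$ is irreducible it cannot equal the union of finitely many \emph{proper} closed subsets (a routine induction reduces this to the two-set case in the definition of irreducibility), so $\Y_i = \Y_i \cap \Z_j$, i.e.\ $\Y_i \subseteq \Z_j$, for some $j$. Applying the same argument to $\Z_j$ and the first decomposition, $\Z_j \subseteq \Y_k$ for some $k$, whence $\Y_i \subseteq \Y_k$; irredundancy of $\{\Y_1,\dots,\Y_n\}$ forces $k = i$, and therefore $\Y_i = \Z_j$. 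The resulting assignment $i \mapsto j$ is injective and, by symmetry, surjective, so $m = n$ and the two decompositions coincide up to a permutation.

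The main obstacle is really the one ingredient not proved in the excerpt: the descending chain condition on Zariski-closed subsets of $\Re^D$, equivalently the Noetherianity of $\Re[x]$, i.e.\ the Hilbert Basis Theorem. Everything else is elementary point-set topology together with the finite form of the definition of irreducibility, the only subsidiary point being the small lemma that an irreducible set is not a finite union of proper closed subsets, which is an immediate induction. I note in passing that the existence statement can also be extracted from Theorem \ref{thm:Radical} applied to the radical ideal $\I_\Y$ (writing $\I_\Y = \mathfrak p_1 \cap \cdots \cap \mathfrak p_n$ and taking zero sets), but over $\Re$ --- as opposed to an algebraically closed field --- identifying each $\Z(\mathfrak p_i)$ with an irreducible component needs extra care, so the topological argument above is the cleaner route here.
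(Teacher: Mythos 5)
The paper states Theorem \ref{thm:varieties-decomposition} as background in Appendix \ref{appendix:AG} without supplying a proof, deferring to the references (\cite{AtiyahMacDonald-1994}, \cite{Eisenbud-2004}, \cite{Hartshorne-1977}); there is therefore no internal proof to compare against. Your argument is the canonical one and it is correct: the descending chain condition on Zariski-closed subsets of $\Re^D$ (derived from the Hilbert Basis Theorem together with the identity $C = \Z(\I_C)$ for closed $C$) yields existence by Noetherian induction plus pruning of inclusions, and the interplay between irreducibility and irredundancy yields uniqueness up to permutation; the subsidiary lemma that an irreducible closed set is not a finite union of proper closed subsets is handled correctly by induction on the number of pieces. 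Your closing remark about the route through Theorem \ref{thm:Radical} is well taken and worth emphasizing: $\I_\Y$ is indeed radical (vanishing ideals always are, since $\Re$ has no nilpotents), so $\I_\Y = \mathfrak p_1 \cap \cdots \cap \mathfrak p_n$ and $\Y = \bigcup_i \Z(\mathfrak p_i)$, but over $\Re$ one cannot conclude that each $\Z(\mathfrak p_i)$ is irreducible---or even nonempty (e.g., $\mathfrak p = \langle x_1^2 + 1\rangle$)---because a real Nullstellensatz is needed to identify $\I_{\Z(\mathfrak p_i)}$ with $\mathfrak p_i$. The purely topological proof you give avoids that issue entirely and is the right choice in the real setting of this paper.
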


\begin{proposition} \label{prp: varieties-inclusion}
If $\Y_1=\Z(\mathfrak a_1), \Y_2=\Z(\mathfrak a_2)$ are algebraic varieties such that $\mathfrak a_1 \subset \mathfrak a_2$, then $\Y_1 \supset \Y_2$.
\end{proposition}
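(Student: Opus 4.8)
The plan is to argue directly from the definition of the zero-set operator $\Z(\cdot)$, since the containment of ideals translates immediately into a reverse containment of vanishing loci. No machinery beyond the definitions of algebraic variety and vanishing set is needed here.

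First I would fix an arbitrary point $\y \in \Y_2 = \Z(\mathfrak a_2)$. By definition of $\Z(\cdot)$, this means $p(\y) = 0$ for every $p \in \mathfrak a_2$. Next I would invoke the hypothesis $\mathfrak a_1 \subset \mathfrak a_2$: any $q \in \mathfrak a_1$ is in particular an element of $\mathfrak a_2$, and hence $q(\y) = 0$. Since this holds for every $q \in \mathfrak a_1$, the point $\y$ lies in $\Z(\mathfrak a_1) = \Y_1$. As $\y \in \Y_2$ was arbitrary, we conclude $\Y_2 \subset \Y_1$, which is exactly the assertion $\Y_1 \supset \Y_2$.

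There is no real obstacle in this argument; the only thing to be careful about is the (standard but easy to mis-state) order reversal: enlarging the ideal shrinks the zero set. Everything follows from unwinding the two definitions, so the proof is essentially a one-line set-theoretic inclusion check.

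\begin{proof}
Let $\y \in \Y_2 = \Z(\mathfrak a_2)$. Then $p(\y) = 0$ for every $p \in \mathfrak a_2$. Since $\mathfrak a_1 \subset \mathfrak a_2$, it follows that $p(\y) = 0$ for every $p \in \mathfrak a_1$, and therefore $\y \in \Z(\mathfrak a_1) = \Y_1$. Since $\y$ was an arbitrary point of $\Y_2$, we conclude $\Y_2 \subset \Y_1$, i.e., $\Y_1 \supset \Y_2$.
\end{proof}
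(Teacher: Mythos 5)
Your proof is correct and is precisely the standard definitional argument one would give; the paper actually states Proposition~\ref{prp: varieties-inclusion} without proof, treating it as an elementary fact. Your one-line unwinding of $\Z(\cdot)$ is exactly what would be expected if a proof were included.
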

\begin{theorem} \label{thm:InclusionVI}
It two subsets $\Y_1,\Y_2$ of $\Re^D$ satisfy the inclusion $\Y_1 \supset \Y_2$, then their vanishing ideals
will satisfy the reverse inclusion $\I_{\Y_1} \subset \I_{\Y_2}$.
\end{theorem}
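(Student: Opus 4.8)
The plan is to argue directly from the definition of the vanishing ideal, since the statement is essentially a pointwise observation. I would start by fixing an arbitrary polynomial $p \in \I_{\Y_1}$ and unwinding what membership means: by the definition of the vanishing ideal, $p(\y) = 0$ for every $\y \in \Y_1$. The goal is then to show that this same $p$ lies in $\I_{\Y_2}$, i.e., that $p$ vanishes on every point of $\Y_2$.

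The single substantive step is to invoke the hypothesis $\Y_1 \supset \Y_2$. Given any $\y \in \Y_2$, the inclusion gives $\y \in \Y_1$, and hence $p(\y) = 0$ by the previous paragraph. Since $\y \in \Y_2$ was arbitrary, $p$ vanishes identically on $\Y_2$, which is precisely the condition for $p \in \I_{\Y_2}$. As $p \in \I_{\Y_1}$ was arbitrary, this establishes $\I_{\Y_1} \subset \I_{\Y_2}$, completing the argument.

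I do not anticipate any real obstacle here: the proof is purely a matter of chasing the definitions, and no properties of polynomials beyond the fact that ``vanishing on a set'' is monotone with respect to reverse inclusion of sets are used. (Indeed, the same argument works verbatim for any class of functions in place of $\Re[x]$.) If one wished to be maximally terse, the entire proof reduces to the implication: for any $p$, $\big(\forall \y \in \Y_1,\ p(\y)=0\big) \;\text{and}\; \Y_2 \subseteq \Y_1 \;\Longrightarrow\; \forall \y \in \Y_2,\ p(\y)=0$.
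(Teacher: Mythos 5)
Your proof is correct: it is the standard definition-chasing argument, and it establishes the claim cleanly. The paper itself states Theorem \ref{thm:InclusionVI} without proof (as a routine fact from algebraic geometry), so there is no argument to compare against; your unwinding of the definition of $\I_{\Y}$ and use of the monotonicity of ``vanishes on'' is exactly the canonical proof one would write.
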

\begin{proposition} \label{prp:VarietiesIntersection}
	Let $\Y_1=\Z(\mathfrak{a}_1),\Y_2=\Z(\mathfrak{a}_2)$ be varieties of $\Re^D$. Then $\Y_1 \cap \Y_2 = \Z(\mathfrak{a}_1+\mathfrak{a}_2)$. 
\end{proposition}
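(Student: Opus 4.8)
The plan is to prove the set equality $\Y_1 \cap \Y_2 = \Z(\mathfrak{a}_1+\mathfrak{a}_2)$ by establishing the two inclusions separately, using only the definition of the zero-set operator $\Z(\cdot)$ together with Proposition \ref{prp: varieties-inclusion}.

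First I would prove $\Z(\mathfrak{a}_1+\mathfrak{a}_2) \subseteq \Y_1 \cap \Y_2$. Since $\mathfrak{a}_1$ and $\mathfrak{a}_2$ are both contained in the ideal $\mathfrak{a}_1+\mathfrak{a}_2$ (indeed, $p = p + 0$ and $q = 0 + q$ for $p \in \mathfrak{a}_1$, $q \in \mathfrak{a}_2$), Proposition \ref{prp: varieties-inclusion} gives $\Z(\mathfrak{a}_1+\mathfrak{a}_2) \subseteq \Z(\mathfrak{a}_1) = \Y_1$ and likewise $\Z(\mathfrak{a}_1+\mathfrak{a}_2) \subseteq \Z(\mathfrak{a}_2) = \Y_2$. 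Intersecting these two inclusions yields $\Z(\mathfrak{a}_1+\mathfrak{a}_2) \subseteq \Y_1 \cap \Y_2$.

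For the reverse inclusion $\Y_1 \cap \Y_2 \subseteq \Z(\mathfrak{a}_1+\mathfrak{a}_2)$, I would take an arbitrary point $\y \in \Y_1 \cap \Y_2$ and an arbitrary element $w \in \mathfrak{a}_1+\mathfrak{a}_2$. By definition of the sum of ideals, $w = p + q$ for some $p \in \mathfrak{a}_1$ and $q \in \mathfrak{a}_2$. Since $\y \in \Y_1 = \Z(\mathfrak{a}_1)$ we have $p(\y) = 0$, and since $\y \in \Y_2 = \Z(\mathfrak{a}_2)$ we have $q(\y) = 0$; hence $w(\y) = p(\y) + q(\y) = 0$. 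As $w$ was arbitrary, $\y$ annihilates every element of $\mathfrak{a}_1+\mathfrak{a}_2$, i.e., $\y \in \Z(\mathfrak{a}_1+\mathfrak{a}_2)$. Combining the two inclusions completes the proof.

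There is no real obstacle here: the statement is a routine consequence of the definitions, and the only subtlety worth stating explicitly is the elementary fact that an element of $\mathfrak{a}_1+\mathfrak{a}_2$ decomposes as a sum of one element from each summand, which is exactly what makes the pointwise vanishing argument go through.
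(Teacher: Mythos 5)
Your proof is correct, and it is essentially the same argument the paper uses: the paper does not prove Proposition \ref{prp:VarietiesIntersection} explicitly in the appendix (it is listed as a standard fact), but Lemma \ref{lem:GeneratorsIntersection}, which the paper explicitly notes is a special case of this proposition, is proved with exactly the same two-inclusion argument, namely decomposing $w$ as a sum of an element from each summand ideal and vanishing pointwise, together with the inclusion-reversing property of $\Z(\cdot)$. The only superficial difference is that you invoke Proposition \ref{prp: varieties-inclusion} for the easy inclusion while the paper's proof of the special case unwinds that step by hand.
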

The final theorem that we present characterizes the set of all points that arise as the zero set of the vanishing ideal
of an arbitrary subset $\Y$ of $\Re^D$.
\begin{proposition} \label{prp:closure}
Let $\Y$ be a subset of $\Re^D$ and $\I_\Y$ its vanishing ideal. Then $\Z(\I_\Y) = Y^{cl}$, where $Y^{cl}$ is
the topological closure of $Y$ in the Zariski topology.
\end{proposition}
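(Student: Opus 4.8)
The plan is to prove the two inclusions $Y^{cl}\subseteq\Z(\I_\Y)$ and $\Z(\I_\Y)\subseteq Y^{cl}$ separately, using only the definition of the Zariski topology together with the inclusion-reversing behavior of the operators $\Z(\cdot)$ and $\I_{(\cdot)}$.

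First I would establish $Y^{cl}\subseteq\Z(\I_\Y)$. By definition of the vanishing ideal, every polynomial in $\I_\Y$ vanishes at every point of $\Y$, so $\Y\subseteq\Z(\I_\Y)$. Since $\Z(\I_\Y)$ is the zero-locus of an ideal, it is an algebraic variety, hence Zariski-closed by Definition \ref{dfn:Zariski}. As $Y^{cl}$ is by definition the smallest Zariski-closed set containing $\Y$, it follows that $Y^{cl}\subseteq\Z(\I_\Y)$.

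For the reverse inclusion $\Z(\I_\Y)\subseteq Y^{cl}$, I would use that $Y^{cl}$, being Zariski-closed, is itself an algebraic variety, say $Y^{cl}=\Z(\mathfrak a)$ for some ideal $\mathfrak a$ of $\Re[x]$. Every element of $\mathfrak a$ vanishes on $Y^{cl}$, hence on the subset $\Y\subseteq Y^{cl}$, so $\mathfrak a\subseteq\I_\Y$ by definition of $\I_\Y$. Applying Proposition \ref{prp: varieties-inclusion} to the inclusion $\mathfrak a\subseteq\I_\Y$ yields $\Z(\mathfrak a)\supseteq\Z(\I_\Y)$, i.e., $Y^{cl}\supseteq\Z(\I_\Y)$. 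Combining the two inclusions gives $\Z(\I_\Y)=Y^{cl}$, as claimed.

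I do not anticipate a genuine obstacle here; the only point requiring care is the observation that $Y^{cl}$ is not merely an abstract closure but an actual algebraic variety (immediate from the definition of the Zariski topology), which is what lets us represent it as $\Z(\mathfrak a)$ and then invoke the inclusion-reversal of $\Z(\cdot)$. Everything else is a direct unwinding of the definitions of $\Z(\cdot)$, $\I_{(\cdot)}$, and topological closure.
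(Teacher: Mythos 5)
Your proof is correct. A small note first: the paper itself does not supply a proof of Proposition~\ref{prp:closure}; it is stated without argument in the background appendix on algebraic geometry, so there is no official proof to compare against. Your two-inclusion argument is the standard one and is sound. The inclusion $Y^{cl}\subseteq\Z(\I_\Y)$ follows because $\Y\subseteq\Z(\I_\Y)$ by definition of $\I_\Y$, the set $\Z(\I_\Y)$ is Zariski-closed by Definition~\ref{dfn:Zariski}, and the closure is the smallest closed set containing $\Y$. For the reverse inclusion, you correctly write $Y^{cl}=\Z(\mathfrak a)$ for some ideal $\mathfrak a$, observe that each element of $\mathfrak a$ vanishes on $Y^{cl}\supseteq\Y$ and hence lies in $\I_\Y$, and then apply the inclusion-reversal of $\Z(\cdot)$ from Proposition~\ref{prp: varieties-inclusion}. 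The one point you flagged as requiring care — that $Y^{cl}$ is itself an algebraic variety, not merely an abstract closure — is indeed the only place where something needs to be said, and it is immediate because arbitrary intersections of algebraic varieties are again algebraic varieties (take the sum of the defining ideals), so the intersection of all Zariski-closed sets containing $\Y$ is Zariski-closed. No gaps.
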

Finally, it should be noted that most of classic and modern algebraic geometry \cite{Hartshorne-1977} assume that 
the underlying algebraic field (in this paper $\Re$) is \emph{algebraically closed} \cite{Lang-2005}. An example of
an algebraically closed field is the complex numbers $\mathbb{C}$. Consequently, one should be careful when
using results such as \emph{Hilbert's Nullstellensatz} in real polynomial rings. 

%%%%%%%%%%%%%%%%

\section{Subspace Arrangements and their Vanishing Ideals} \label{appendix:SA}
We begin by defining the main mathematical object of interest in this paper.
\begin{definition} [Subspace arrangement]
A union $\A=\bigcup_{i=1}^n \S_i$ of linear subspaces $\S_1,\hdots,\S_n$ of $\Re^D$, with $D \ge 1, n\ge1$ is called a subspace arrangement.
\end{definition}
It is often technically convenient to work with subspace arrangements that are as general as possible. One way to capture this notion is by the following definition.
\begin{definition}[Transversal subspace arrangement \cite{Derksen:JPAA07}]\label{dfn:transversalAppendix}
	A subspace arrangement $\A = \bigcup_{i=1}^n \S_i \subset \Re^D$ is called transversal, if for any subset $\mathfrak{I}$ of $[n]$, the codimension of $\bigcap_{i \in \mathfrak{I}} \S_i$ is the minimum between $D$ and the sum of the codimensions of all $\S_i, \, i \in \mathfrak{I}$, i.e., 
	\begin{align}
	\codim\left(\bigcap_{i \in \mathfrak{I}} \S_i\right) = \min \left\{D, \sum_{i\in \mathfrak{I}}c_i \right\},
	\end{align} where $c_i = \codim \S_i$.
\end{definition}
Transversality is a geometric condition on the subspaces $\S_1,\dots,\S_n$, that requires all possible intersections among the subspaces to be as small as possible, as allowed by the dimensions of the subspaces. To see this, let $\mathfrak{I}$ be a subset of $[n]$, which without loss of generality can be taken to be $\mathfrak{I}=\left\{1,2,\dots,\ell\right\}=[\ell]$, where $\ell \le n$. For every $i \in \mathfrak{I}$ let $\B_i$ be a $D \times c_i$ matrix, whose columns form a basis for $\S_i^\perp$, where $c_i = \codim S_i := D - \dim \S_i$, and let $\B = [\B_1 \dots \B_{\ell}]$. Then the intersection 
$\bigcap_{i \in \mathfrak{I}} \S_i$ can be described algebraically as 
\begin{align}
\x \in \bigcap_{i \in \mathfrak{I}} \S_i \Leftrightarrow \B^\transpose \x = 0. \label{eq:TransversalityAlgebraic}
\end{align} From \eqref{eq:TransversalityAlgebraic} it is clear that the dimension of $\bigcap_{i \in \mathfrak{I}} \S_i$ is equal to the dimension of the right nullspace of $\B$, or equivalently 
\begin{align}
\codim \left(\bigcap_{i \in \mathfrak{I}}\S_i\right)  = \rank(\B). \label{eq:CodimRankB}
\end{align}Now, $\B$ is a $D \times \left(\sum_{i \in \mathfrak{I}} c_i\right)$ matrix and so its rank will satisfy
\begin{align}
\rank(\B) \le \min\left\{D,\sum_{i \in \mathfrak{I}} c_i\right\}, \label{eq:RankB}
\end{align} which in conjunction with \eqref{eq:CodimRankB} justifies the geometric interpretation of Definition \ref{dfn:transversal}. 
In fact, if $\A$ is not transversal, then there exists some subset $\mathfrak{I} \subset [n]$, for which $\B$ is rank-deficient, which shows that certain algebraic relations must be satisfied among the parametrizations $\B_1,\dots,\B_{n}$ of the subspaces $\S_1,\dots,\S_n$. This is essentially the argument behind the proof of the next Proposition, which shows that transversality is not a strong condition, rather it will be satisfied almost surely.
\begin{proposition}
	Let $\A$ be a subspace arrangement consisting of $n$ linear subspaces of $\Re^D$ of dimensions $d_1,\dots,d_n$. If $\A$ is chosen uniformly at random, then $\A$ will be transversal with probability $1$.
\end{proposition}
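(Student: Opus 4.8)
The plan is to reduce the claim to a genericity statement about the ranks of the matrices $\B=[\B_1 \ \cdots \ \B_\ell]$ appearing in \eqref{eq:CodimRankB}, exactly as in the discussion preceding the proposition. Fix once and for all a subset $\mathfrak{I}\subset[n]$, which we may take to be $[\ell]$. For each $i$, parametrize the subspace $\S_i$ by a matrix $\B_i\in\Re^{D\times c_i}$ whose columns form a basis of $\S_i^\perp$, so that the $\B_i$ range (up to the right $GL_{c_i}$-action, which does not affect rank) over a full-measure subset of $\Re^{D\times c_i}$ as $\S_i$ ranges uniformly over the Grassmannian $\mathrm{Gr}(d_i,D)$. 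Then by \eqref{eq:CodimRankB} the arrangement fails the transversality condition \emph{for this particular} $\mathfrak{I}$ precisely when $\rank(\B) < \min\{D,\sum_{i\in\mathfrak{I}}c_i\}$, i.e., when $\B$ is rank-deficient relative to its maximal possible rank. The first step is therefore to show that this rank-deficiency locus is a proper Zariski-closed subset of the space of all tuples $(\B_1,\dots,\B_\ell)$, hence has Lebesgue measure zero.

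The second step is to exhibit a single tuple $(\B_1,\dots,\B_\ell)$ achieving the maximal rank $r:=\min\{D,\sum_i c_i\}$; the existence of such a tuple shows the $r\times r$ minors of $\B$ do not all vanish identically, so their common zero set is a proper subvariety. Concretely, if $\sum_i c_i \le D$ one can take the columns of the $\B_i$ to be distinct standard basis vectors of $\Re^D$, giving $\rank(\B)=\sum_i c_i$; if $\sum_i c_i > D$ one takes any selection whose columns span $\Re^D$, giving $\rank(\B)=D$. In either case the maximal value $r$ is attained, so the locus $\{(\B_1,\dots,\B_\ell): \rank(\B)<r\}$ is cut out by the vanishing of all $r\times r$ minors of $\B$ — polynomial equations not all of which are identically zero — and is thus a proper algebraic subset, hence measure zero. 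Pulling this back along the (smooth, measure-preserving in the relevant sense) parametrization of each Grassmannian factor by the matrices $\B_i$, the set of $\ell$-tuples of subspaces $(\S_1,\dots,\S_\ell)$ for which $\codim\big(\bigcap_{i\in\mathfrak{I}}\S_i\big)\neq r$ has probability zero.

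The third and final step is a union bound: transversality of $\A$ requires the codimension condition to hold simultaneously for \emph{all} $\mathfrak{I}\subset[n]$, and there are only finitely many such subsets (namely $2^n-1$ nonempty ones, and the singletons are automatic). A finite union of measure-zero sets is measure zero, so the set of arrangements $\A$ that fail to be transversal has probability zero, and therefore a uniformly random $\A$ is transversal with probability $1$.

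The main obstacle I anticipate is purely one of bookkeeping rather than of substance: making precise the sense in which ``choosing $\A$ uniformly at random'' corresponds to choosing the parametrizing matrices $\B_i$ from an absolutely continuous distribution, and verifying that a measure-zero algebraic condition on the $\B_i$ pulls back to a measure-zero set of subspaces under the standard (local) parametrizations of the Grassmannian. This is standard — it follows from the fact that the Grassmannian is a smooth manifold covered by charts diffeomorphic to open subsets of affine space, on each of which the entries of a chosen representative $\B_i$ are smooth functions — but it is the only point requiring care; the algebraic core (maximal rank is generic) is immediate once a single maximal-rank witness is displayed.
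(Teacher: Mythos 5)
Your proposal is correct and follows exactly the route the paper outlines: the paper itself gives no formal proof of this proposition, but the paragraph preceding it (the discussion around \eqref{eq:CodimRankB} and \eqref{eq:RankB}) explicitly says that failure of transversality forces the matrix $\B = [\B_1 \cdots \B_\ell]$ to be rank-deficient for some $\mathfrak{I}$, and that "this is essentially the argument behind the proof." Your write-up simply fills in the three steps the paper leaves implicit — that full rank is achieved at least once, that the rank-deficiency locus is therefore a proper algebraic (hence measure-zero) subset, and that a finite union bound over subsets $\mathfrak{I}\subset[n]$ finishes the job — together with the standard Grassmannian-chart remark needed to make "uniformly at random" rigorous. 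No discrepancy with the paper's intended argument.
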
 
\begin{eg}
	An arrangement $\A = \S_1 \cup \S_2 \cup S_3 \subset \Re^D$ such that $\S_1 \subset \S_2$ is non-transversal, since 
	$\codim \S_1 \cap \S_2 = \codim \S_1  = c_1 < \min \left\{D,c_1+c_2\right\}$. Note that when choosing 
	$\S_1,\S_2,\S_3$ uniformly at random, the event $\S_1 \subset \S_2$ has probability zero. 
\end{eg} 
\begin{eg}
	An arrangement of three planes $\A = \H_1 \cup \H_2 \cup \H_3$ of $\Re^3$ that intersect on a line is non-transversal, because $\codim \H_1 \cap \H_2 \cap \H_3 = 2 < \min\left\{3,1+1+1\right\}$. When $\H_1,\H_2,\H_3$ are chosen uniformly at random, which is equivalent to choosing their normal vectors $\b_1,\b_2,\b_3$ uniformly at random, the three planes intersect on a line only if  $\b_1,\b_2,\b_3$ are linearly dependent, which is a probability zero event.
\end{eg}
Another notion of subspace arrangements in general position that is closely related to transversal arrangements, is that of \emph{linearly general} subspaces. 
\begin{definition}[Linearly general subspace arrangement \cite{Conca:CM03}] 
	A subspace arrangement $\A = \bigcup_{i=1}^n \S_i$ is called linearly general, if for every subset $\mathfrak{I} \subset [n]$ we have
	\begin{align}
	\dim \left(\sum_{i \in \mathfrak{I}} \S_i\right) = \min\left\{D,\sum_{i \in \mathfrak{I}} d_i \right\},
	\end{align} where $d_i = \dim \S_i$.
\end{definition} 
As the reader may suspect, the notion of transversal and linearly general are dual to each other in the following sense.
\begin{proposition}
	A subspace arrangement $\bigcup_{i=1}^n \S_i$ is transversal if and only if the subspace arrangement $\bigcup_{i=1}^n \S_i^\perp$ is linearly general.
\end{proposition}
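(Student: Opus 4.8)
The plan is to deduce the statement from the elementary duality $\left(\bigcap_j \W_j\right)^\perp=\sum_j \W_j^\perp$ for subspaces of $\Re^D$, which converts intersections into sums and codimensions into dimensions. First I would record this identity: for two subspaces, $(\W_1\cap\W_2)^\perp=\W_1^\perp+\W_2^\perp$ is standard (the inclusion $\supseteq$ is immediate, and equality follows by comparing dimensions, using $(\W_1+\W_2)^\perp=\W_1^\perp\cap\W_2^\perp$ together with the Grassmann formula), and the general case follows by induction on the number of subspaces.

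Now fix a subset $\mathfrak{I}\subseteq[n]$ and apply the identity to the subspaces $\S_i$, $i\in\mathfrak{I}$, to obtain $\left(\bigcap_{i\in\mathfrak{I}}\S_i\right)^\perp=\sum_{i\in\mathfrak{I}}\S_i^\perp$. Taking dimensions on both sides gives
\begin{align}
\codim\Bigl(\bigcap_{i\in\mathfrak{I}}\S_i\Bigr)=\dim\Bigl(\sum_{i\in\mathfrak{I}}\S_i^\perp\Bigr).
\end{align}
Since $c_i=\codim\S_i=\dim\S_i^\perp$, the right-hand side of the transversality condition for $\bigcup_{i=1}^n\S_i$, namely $\min\bigl\{D,\sum_{i\in\mathfrak{I}}c_i\bigr\}$, is literally the right-hand side of the linear-generality condition for $\bigcup_{i=1}^n\S_i^\perp$, with the role of $d_i$ in that definition played by $\dim\S_i^\perp$. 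Hence, for each fixed $\mathfrak{I}$, the equality
\begin{align}
\codim\Bigl(\bigcap_{i\in\mathfrak{I}}\S_i\Bigr)=\min\Bigl\{D,\sum_{i\in\mathfrak{I}}c_i\Bigr\}
\end{align}
holds if and only if
\begin{align}
\dim\Bigl(\sum_{i\in\mathfrak{I}}\S_i^\perp\Bigr)=\min\Bigl\{D,\sum_{i\in\mathfrak{I}}\dim\S_i^\perp\Bigr\}
\end{align}
holds. Quantifying over all $\mathfrak{I}\subseteq[n]$, the first condition (for all $\mathfrak{I}$) is exactly the transversality of $\bigcup_{i=1}^n\S_i$ and the second (for all $\mathfrak{I}$) is exactly the linear generality of $\bigcup_{i=1}^n\S_i^\perp$, which proves the equivalence.

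There is no genuinely hard step here: the entire content is in the complementation identity $\left(\bigcap_j\W_j\right)^\perp=\sum_j\W_j^\perp$ and in the bookkeeping observation that the dimension parameters of the complementary arrangement $\bigcup_i\S_i^\perp$ coincide with the codimension parameters of the original arrangement $\bigcup_i\S_i$, which is precisely what makes the two $\min\{D,\cdot\}$ expressions agree term by term.
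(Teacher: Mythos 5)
Your proof is correct and takes essentially the same route as the paper's: both reduce the claim to the identity $\codim\left(\bigcap_{i\in\mathfrak{I}}\S_i\right)=\dim\left(\sum_{i\in\mathfrak{I}}\S_i^\perp\right)$ for every $\mathfrak{I}\subseteq[n]$, followed by the observation that $\codim\S_i=\dim\S_i^\perp$, which makes the $\min\{D,\cdot\}$ bounds in the two definitions coincide term by term. The only cosmetic difference is that you obtain the key identity from the abstract duality $\left(\bigcap_j\W_j\right)^\perp=\sum_j\W_j^\perp$, whereas the paper reads it off as the rank of the concatenated matrix $\B=[\B_1\,\cdots\,\B_\ell]$ whose blocks $\B_i$ have columns forming a basis of $\S_i^\perp$.
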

\begin{proof}
	This follows by noting that with reference to the matrix $\B$ constructed below Definition \ref{dfn:transversal}, we have
	\begin{align}
	\codim \left(\bigcap_{i \in \mathfrak{I}} \S_i\right) = \rank(\B) = \dim \left(\sum_{i \in \mathfrak{I}} \S_i^\perp\right),
	\end{align} and that $\codim \S_i = \dim \S_i^\perp$.
\end{proof}	

In order to understand some important properties of subspace arrangements, it is necessary to examine the 
algebraic-geometric properties of a single subspace $\S$ of $\Re^D$ of dimension $d$. 
Let $\b_1,\dots,\b_c$ be a basis for the orthogonal complement of $\S$, where $c = D - d$ and define the 
polynomials $p_i(x) = \b_i^\transpose x, \, i=1,\dots,c$. Notice that $p_i(x)$ is homogeneous of degree $1$
and is thus also referred to as \emph{linear form}. If a point $\x$ belongs to $\S$, then $p_i(\x)=0, \, \forall i$.
Conversely, if a point $\x \in \Re^D$ satisfies $p_i(\x)=0, \, \forall i$, then $\x \in \S$. This shows that 
$\S=\Z(p_1,\dots,p_c)$, i.e., $\S$ is an algebraic variety. Notice that the set of linear forms that vanish on $\S$
is a vector space and the polynomials $p_i, \, i=1,\dots,c$ form a basis.

The Proposition that follows asserts that the vanishing ideal of $\S$, i.e., the set of all polynomials that vanish at every
point of $\S$, is in fact generated by the polynomials $p_i(x), \, i=1,\dots,c$.
\begin{proposition}[Vanishing Ideal of a Subspace]  \label{prp:VIS}
Let $\S=\Span (\b_1,\dots,\b_c)^{\perp}$ be a subspace of $\Re^D$ defined as the orthogonal complement of the space spanned by $\left\{\b_1,\hdots,\b_c \right\}$ over $\Re$. Then 
$\I_{\S}$ is generated over $\Re[x]$ by the linear forms $\b_1^{\transpose} x, \hdots, \b_c^{\transpose}x$. 
\end{proposition}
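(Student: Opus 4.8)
The plan is to prove the nontrivial inclusion $\I_{\S}\subseteq\langle\b_1^\transpose x,\dots,\b_c^\transpose x\rangle$ — the reverse inclusion is immediate, since each $\b_i^\transpose x$ vanishes identically on $\S=\Span(\b_1,\dots,\b_c)^\perp$ — and to do so by reducing, via an invertible linear change of variables, to the case where $\S$ is a coordinate subspace, for which the statement is a transparent inspection of monomials.

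First I would extend $\b_1,\dots,\b_c$ to a basis $\b_1,\dots,\b_D$ of $\Re^D$ and let $\boldsymbol{A}$ be the invertible matrix whose $i$-th row is $\b_i^\transpose$. The substitution $x_i\mapsto\b_i^\transpose x$ determines an $\Re$-algebra automorphism $\phi$ of $\Re[x]$ (it is invertible precisely because the linear forms $\b_i^\transpose x$ are linearly independent), and at the level of functions $(\phi p)(\v)=p(\boldsymbol{A}\v)$ for all $\v\in\Re^D$. Setting $\S_0:=\{\boldsymbol{w}\in\Re^D:\, w_1=\dots=w_c=0\}$, one checks directly from the definition of $\S$ that $\boldsymbol{A}(\S)=\S_0$; hence $\phi p\in\I_{\S}$ if and only if $p$ vanishes on $\S_0$, i.e.\ $p\in\I_{\S_0}$. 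Since $\phi$ is bijective this yields $\I_{\S}=\phi(\I_{\S_0})$, and since $\phi$ is a surjective ring homomorphism it carries $\langle x_1,\dots,x_c\rangle$ onto $\langle\b_1^\transpose x,\dots,\b_c^\transpose x\rangle$. Therefore the proposition is equivalent to the coordinate statement $\I_{\S_0}=\langle x_1,\dots,x_c\rangle$.

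To establish $\I_{\S_0}=\langle x_1,\dots,x_c\rangle$, I would take $q\in\I_{\S_0}$ and write $q=q_1+q_0$, where $q_1$ is the sum of all monomials of $q$ that are divisible by at least one of $x_1,\dots,x_c$ (so $q_1\in\langle x_1,\dots,x_c\rangle$) and $q_0\in\Re[x_{c+1},\dots,x_D]$ is the sum of the remaining monomials. Evaluating at a point of $\S_0$ annihilates $q_1$, so $q_0$, regarded as a polynomial in $x_{c+1},\dots,x_D$, vanishes at every point of $\Re^{D-c}$ (in the extreme case $c=D$ this just says the constant $q_0$ is zero, since $\boldsymbol{0}\in\S_0$). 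Since $\Re$ is an infinite field, a polynomial vanishing at every point of $\Re^{D-c}$ is the zero polynomial; thus $q_0=0$ and $q=q_1\in\langle x_1,\dots,x_c\rangle$. The opposite inclusion is clear, which completes the coordinate statement and hence the proposition.

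The argument is essentially bookkeeping, and I do not anticipate a genuine obstacle. The two places that require care are (i) orienting the change of variables correctly, so that $\S$ is sent to the coordinate subspace $\S_0$ and the ideal $\langle x_i\rangle$ is pulled back to $\langle\b_i^\transpose x\rangle$, and (ii) the appeal to $\Re$ being infinite to conclude $q_0=0$ — this is precisely what lets us avoid Hilbert's Nullstellensatz, which is unavailable over a non-algebraically-closed field. Step (i) is the one most prone to transpose/inverse slips, so I would record the identity $(\phi p)(\v)=p(\boldsymbol{A}\v)$ explicitly and verify $\boldsymbol{A}(\S)=\S_0$ by hand.
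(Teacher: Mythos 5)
Your argument is correct and is essentially the same as the paper's: both reduce to the coordinate subspace $\S_0=\Span(\e_{c+1},\dots,\e_D)$ by an invertible linear change of variables, split a polynomial vanishing on $\S_0$ into a part in $\langle x_1,\dots,x_c\rangle$ plus a remainder in $\Re[x_{c+1},\dots,x_D]$, and kill the remainder via the infinite-field argument. The only cosmetic difference is that you make the ring automorphism $\phi$ and the identity $\I_\S=\phi(\I_{\S_0})$ fully explicit, where the paper phrases the same reduction as a change of basis on $\Re^D$ followed by a "without loss of generality."
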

\begin{proof}
Let $\left\{\b_1,\hdots,\b_c \right\}$ be a basis for the orthogonal complement of $\S$ and augment it to a basis
$\left\{\b_1,\hdots,\b_c,\h_1,\dots,\h_{D-c} \right\}$ of $\Re^D$, where $\h_1,\dots,\h_{D-c}$ is a basis for $\S$. Now define a change of basis transformation
$\phi: \Re^D \rightarrow \Re^D$, which maps the basis $\left\{\b_1,\hdots,\b_c,\h_1,\dots,\h_{D-c} \right\}$
to the canonical basis $\left\{\e_1,\dots,\e_D\right\}$ of $\Re^D$, where $\e_i$ is the $i$-th column of the $D \times D$ identity matrix. Notice that $\b_i$ is mapped to $\e_i$ and as a consequence $\S$ is mapped to the orthogonal complement of
the vectors $\e_1,\dots,\e_c$. Since $\phi$ is a vector space isomorphism, we do not loose generality if we assume from
the beginning that $\S = \Span( \e_1,\dots, \e_c)^\perp=\Span( \e_{c+1},\dots, \e_D)$ and the vector space of linear forms that vanish on $\S$ is $x_1,\dots,x_c$. Notice that $\x \in \S$ if and only if the first $c$ coordinates of $\x$ are zero.

Now let $g \in \I_\S$.
We can write $g(x) = \bar{g}(x_{c+1},\hdots,x_D) + \sum_{i=1}^c x_i g_i(x)$. By hypothesis we have 
$g(0,\dots,0,a_{c+1}\hdots,a_D)=0$ for any real numbers $a_{c+1},\dots,a_D$, which implies that $\bar{g}(a_{c+1},\hdots,a_D)=0, \forall a_{c+1}, \dots, a_D \in \Re$. This in turn implies that $\bar{g}$ is the zero polynomial \footnote{We can prove by induction on $d$ that if $\mathbb{F}$ is an infinite field and $g(x_1,\hdots,x_d)=0, \forall x_1,\cdots,x_d \in \mathbb{F}$, then $g=0$.}. 
{\tiny }
Hence $g(x) =  \sum_{i=1}^c x_i g_i(x)$, which shows that $g$ is inside the ideal generated by the linear forms that vanish on $\S$.
\end{proof}

In algebraic-geometric notation, the above proposition can be concisely stated as $\I_{\Z(\b_1^\transpose x,\dots, \b_c^\transpose x)} = \langle\b_1^\transpose x,\dots, \b_c^\transpose x\rangle$. Interestingly, the vanishing ideal of a subspace
is a prime ideal:
\begin{proposition} \label{prp:ISprime}
Let $\S$ be a subspace of $\Re^D$. Then $\S$ is irreducible in the Zariski topology of $\Re^D$ or equivalently, $\I_\S$ is a prime ideal of $\Re[x]$.
\end{proposition}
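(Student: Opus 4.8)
The plan is to prove that $\I_\S$ is a prime ideal of $\Re[x]$; the stated equivalence with irreducibility of $\S$ in the Zariski topology is then immediate from the Theorem relating irreducibility of an algebraic variety to primeness of its vanishing ideal (Appendix \ref{appendix:AG}), together with the fact that $\S = \Z(\I_\S)$.

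First I would reduce $\S$ to a normal form, exactly as in the proof of Proposition \ref{prp:VIS}. Pick a basis $\b_1,\dots,\b_c$ of the orthogonal complement of $\S$ and augment it by a basis $\h_1,\dots,\h_{D-c}$ of $\S$, and let $\phi:\Re^D\to\Re^D$ be the linear isomorphism carrying $\b_1,\dots,\b_c,\h_1,\dots,\h_{D-c}$ to the canonical basis $\e_1,\dots,\e_D$. A linear change of indeterminates induces a ring automorphism of $\Re[x]$, and ring automorphisms carry prime ideals to prime ideals; moreover this automorphism carries $\I_\S$ onto $\I_{\phi(\S)}$. Hence we lose no generality in assuming $\S=\Span(\e_{c+1},\dots,\e_D)$, in which case Proposition \ref{prp:VIS} gives $\I_\S=\langle x_1,\dots,x_c\rangle$.

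It then remains to show that $\langle x_1,\dots,x_c\rangle$ is prime. The cleanest route is to note that the map ``set $x_1=\cdots=x_c=0$'' defines a surjective ring homomorphism $\Re[x]\to\Re[x_{c+1},\dots,x_D]$ whose kernel is exactly $\langle x_1,\dots,x_c\rangle$ (the inclusion $\langle x_1,\dots,x_c\rangle \subseteq \ker$ is clear, and the reverse inclusion is the division argument already carried out in the proof of Proposition \ref{prp:VIS}), so that $\Re[x]/\langle x_1,\dots,x_c\rangle\cong\Re[x_{c+1},\dots,x_D]$. Since a polynomial ring over a field is an integral domain, the quotient is an integral domain, hence the ideal is prime. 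Equivalently, one can argue directly: if $fg\in\langle x_1,\dots,x_c\rangle$, write $f=\bar f+\sum_{i=1}^c x_i f_i$ and $g=\bar g+\sum_{i=1}^c x_i g_i$ with $\bar f,\bar g\in\Re[x_{c+1},\dots,x_D]$; then the ``$x_1=\cdots=x_c=0$ part'' of $fg$ is $\bar f\,\bar g$, which must be the zero polynomial in the domain $\Re[x_{c+1},\dots,x_D]$, so $\bar f=0$ or $\bar g=0$, i.e. $f$ or $g$ lies in $\langle x_1,\dots,x_c\rangle$.

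I do not anticipate any genuine obstacle: the statement is essentially bookkeeping. The only points deserving a line of care are (i) that a linear change of coordinates is a ring automorphism intertwining the operation of taking vanishing ideals, which legitimizes the reduction to a coordinate subspace, and (ii) that ``setting the first $c$ indeterminates to zero'' has kernel precisely $\langle x_1,\dots,x_c\rangle$, which is the same elementary induction on the number of variables used in the proof of Proposition \ref{prp:VIS}.
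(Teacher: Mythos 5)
Your proof is correct and matches the paper's: after the same normal-form reduction $\I_\S=\langle x_1,\dots,x_c\rangle$, your lead argument (the quotient isomorphism $\Re[x]/\langle x_1,\dots,x_c\rangle\cong\Re[x_{c+1},\dots,x_D]$) is precisely the paper's ``alternative proof,'' and your ``equivalently'' paragraph is the paper's primary direct argument comparing the $\bar f\,\bar g$ components. The only extra care you supply, and it is welcome, is making the change-of-coordinates step explicit as a ring automorphism intertwining vanishing ideals.
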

\begin{proof}
As in the proof of Proposition \ref{prp:VIS} we can assume that $(x_1,\dots,x_c)$ is a basis for the linear forms of $\Re[x]$ that vanish on $\S$. Then $\I_\S = \langle x_1,\dots,x_c\rangle$ and our task is to show that $\I_\S$ is prime. So let $f,g$ be homogeneous polynomials such that $fg \in  \I_\S$ and suppose that $f \not\in \I_\S$. We will show that $g \in \I_\S$. We can write $f = f_1 + f_2$, where $f_1,f_2$ are polynomials such that $f_1 \in \I_\S$ and $f_2 \in \Re[x_{c+1},\dots,x_D]$. Similarly 
$g= g_1 + g_2$, with $g_1 \in \I_\S$ and $g_2 \in \Re[x_{c+1},\dots,x_D]$. Since by hypothesis $f \not\in \I_\S$, it must be the case that $f_2 \neq 0$. To show that $g \in \I_\S$, it is enough to show that $g_2=0$. 

Towards that end, notice that $fg = (f g_1 + f_1 g_2) + f_2 g_2$, where $f g_1 + f_1 g_2 \in \I_\S$. Since by hypothesis $fg \in \I_\S$, we also have that $f_2 g_2 \in \I_\S$. This means that 
there exist polynomials $h_1,\dots,h_c \in \Re[x_1,\dots,x_D]$, such that 
$f_2 g_2 = x_1 h_1 + \cdots +x_c h_c$. However, none of the variables $x_1,\dots,x_c$ appear on the left hand side of this equation, and so this equation is true only when
both sides are equal to zero. Since by hypothesis $f_2 \neq 0$, this implies that $g_2 = 0$, and so $g \in \I_\S$.

\emph{Alternative Proof}: A more direct proof exists if we assume familiarity of the reader with quotient rings. In particular, it is known that an ideal $\I$ of a commutative ring $R$ is prime if and only if the quotient ring $R/I$ has no zero-divisors \cite{AtiyahMacDonald-1994}. By noticing that $\Re[x_1,\dots,x_D] / \langle x_1,\dots,x_c\rangle \cong \Re[x_{c+1},\dots,x_D]$ we immediately see that $\langle x_1,\dots,x_c\rangle $ is prime. \end{proof}

Returning to the subspace arrangements, we see that a subspace arrangement $\A = \S_1 \cup \cdots \cup \S_n$ is the union of irreducible algebraic varieties $\S_1,\dots,\S_n$. This immediately suggests that the subspace arrangment itself is an algebraic variety. This was established
in \cite{Ma:SIAM08} via an alternative argument. Additionally, in view of Theorem \ref{thm:varieties-decomposition},  
the irreducible components of $\A$ are precisely its constituent subspaces $\S_1,\dots,\S_n$, which also proves
that a subspace arrangement can be uniquely written as the union of subspaces among which there are no inclusions.
We summarize these observations in the following theorem:
\begin{theorem} \label{thm:SA-Variety}
Let $\S_1,\dots,\S_n$ be subspaces of $\Re^D$ such that no inclusions exist between any two subspaces. Then the arrangement $\A = \S_1 \cup \cdots \cup \S_n$ is an algebraic variety and its irreducible components are $\S{\tiny }_1,\dots,\S_n$.
\end{theorem}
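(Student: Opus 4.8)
The plan is to obtain the statement as a short corollary of the results already developed in this appendix, chiefly Propositions~\ref{prp:VIS} and~\ref{prp:ISprime} together with the uniqueness of the irreducible decomposition (Theorem~\ref{thm:varieties-decomposition}). First I would observe that each $\S_i$ is an algebraic variety: by Proposition~\ref{prp:VIS} its vanishing ideal $\I_{\S_i}$ is generated by the $c_i = D - \dim\S_i$ linear forms $\b_{i,1}^\transpose x,\dots,\b_{i,c_i}^\transpose x$ defining $\S_i$, so in particular $\S_i = \Z(\I_{\S_i})$.

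Next I would argue that $\A = \bigcup_{i=1}^n \S_i$ is itself an algebraic variety. The cleanest route is topological: by Definition~\ref{dfn:Zariski} the closed sets of the Zariski topology on $\Re^D$ are precisely the algebraic varieties, and any topology is closed under finite unions of closed sets; hence $\A$, being a finite union of the closed sets $\S_i$, is closed, i.e.\ an algebraic variety. (Equivalently, one verifies directly that $\Z(\mathfrak a)\cup\Z(\mathfrak b) = \Z(\mathfrak a\mathfrak b)$ for ideals $\mathfrak a,\mathfrak b$ of $\Re[x]$ and iterates, which gives $\A = \Z\bigl(\prod_{i=1}^n \I_{\S_i}\bigr)$.)

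Having established that $\A$ is a variety, I would then invoke Proposition~\ref{prp:ISprime}, which guarantees that each $\S_i$ is irreducible. Thus $\A = \S_1 \cup \cdots \cup \S_n$ writes $\A$ as a finite union of irreducible varieties, and by the hypothesis of the theorem there are no inclusions $\S_i \subset \S_j$ for $i \neq j$. This is exactly the form of decomposition whose existence \emph{and uniqueness} are asserted by Theorem~\ref{thm:varieties-decomposition}; therefore the set $\{\S_1,\dots,\S_n\}$ must coincide with the set of irreducible components of $\A$. As a byproduct, this also yields that a subspace arrangement admits a unique representation as a union of subspaces among which there are no inclusions, as remarked in the paragraph preceding Theorem~\ref{thm:SA-Variety}.

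I do not anticipate a genuine obstacle: the proof is essentially a matter of assembling the appendix results in the right order. The only step that is not a direct citation is the closure of the class of algebraic varieties under finite unions, and even this is immediate from the topological reading of Definition~\ref{dfn:Zariski} (or from the one-line identity $\Z(\mathfrak a)\cup\Z(\mathfrak b)=\Z(\mathfrak a\mathfrak b)$). The one point worth double-checking is that the exhibited decomposition satisfies \emph{both} hypotheses of the uniqueness clause of Theorem~\ref{thm:varieties-decomposition} --- irreducibility of each component (Proposition~\ref{prp:ISprime}) and absence of mutual inclusions (the hypothesis on the $\S_i$) --- since either one alone would not suffice to identify the $\S_i$ with the irreducible components.
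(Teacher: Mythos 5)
Your proposal is correct and follows essentially the same route as the paper: the paragraph immediately preceding Theorem~\ref{thm:SA-Variety} makes exactly this argument, observing that each $\S_i$ is irreducible (Proposition~\ref{prp:ISprime}), that a finite union of varieties is a variety, and then invoking the uniqueness of the irreducible decomposition (Theorem~\ref{thm:varieties-decomposition}) to identify the $\S_i$ as the irreducible components. The only cosmetic difference is that the paper cites \cite{Ma:SIAM08} for the fact that $\A$ is a variety, whereas you give the short direct argument via Zariski-closedness under finite unions (equivalently $\Z(\mathfrak a)\cup\Z(\mathfrak b)=\Z(\mathfrak a\mathfrak b)$), which is perfectly fine.
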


The vanishing ideal of a subspace arrangement $\A=\bigcup_{i=1}^n \S_i$ is readily seen to relate to the vanishing ideals of its irreducible components via the formula
\begin{align}
\I_{\A} = \I_{\S_1} \cap \cdots \cap \I_{\S_n} \label{e:VIA}.
\end{align} Since $\I_{\S_i}$ is a prime ideal, Theorem \ref{thm:Radical} implies that $\I_\A$ is radical and that $\A$ uniquely determines the ideals $\I_{\S_1}, \dots, \I_{\S_n}$, assuming that there are no inclusions between the subspaces. Hence, retrieving the irreducible components of a subspace arrangement is equivalent to computing the prime factors of its vanishing ideal $\I_\A$.

Since the ideal of a single subspace $\S_1$ is generated by linear forms, i.e., it is generated in degree $1$, one may be tempted to conjecture that the ideal $\I_{\A}$ of a union of $n$ subspaces is generated in degree less or equal than $n$. In fact, this is true:
\begin{proposition} \label{prp:Regularity}
	Let $\A$ be an arrangement of $n$ linear subspaces of $\Re^D$. Then its vanishing ideal $\I_{\A}$ is generated in degree $\le n$.
\end{proposition}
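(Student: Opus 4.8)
The plan is to compare $\I_{\A}$ with the \emph{product} ideal $J:=\I_{\S_1}\I_{\S_2}\cdots\I_{\S_n}$. Since every linear form vanishing on $\S_i$ vanishes on $\A$, one has $J\subseteq \I_{\A}$, and by Proposition~\ref{prp:VIS} each $\I_{\S_i}$ is generated by linear forms, so $J$ is generated by the degree-$n$ products $\ell_1\cdots\ell_n$ with $\ell_i\in\I_{\S_i}$ a linear form; in particular $J$ is generated in degree $n$, hence $J_k=\Re[x]_{k-n}\,J_n$ for all $k\ge n$. Thus the whole statement reduces to the key identity
\[
(\I_{\A})_k \;=\; J_k \qquad\text{for all } k\ge n ,
\]
for then $(\I_{\A})_k=\Re[x]_{k-n}(\I_{\A})_n$ for $k\ge n$, i.e.\ $\I_{\A}$ is generated in degrees $\le n$. (The reverse inclusion $J_k\subseteq(\I_{\A})_k$ is automatic.)

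I would prove the identity $(\I_{\A})_k=J_k$ for $k\ge n$ by induction on $n$. For $n=1$ there is nothing to prove. For the inductive step write $\B:=\S_1\cup\cdots\cup\S_{n-1}$, so that $\I_{\A}=\I_{\B}\cap\I_{\S_n}$ and $J=Q\cdot\I_{\S_n}$ with $Q:=\I_{\S_1}\cdots\I_{\S_{n-1}}$. The inductive hypothesis, combined with the fact that $Q\subseteq\I_{\B}$ with $Q$ generated in degree $n-1$, gives $(\I_{\B})_k=Q_k$ for every $k\ge n-1$; in particular for $k\ge n$. Hence for $k\ge n$,
\[
(\I_{\A})_k \;=\;(\I_{\B})_k\cap(\I_{\S_n})_k \;=\; Q_k\cap(\I_{\S_n})_k ,
\]
and it remains to show $Q_k\cap(\I_{\S_n})_k=(Q\,\I_{\S_n})_k$ for $k\ge n=(n-1)+1$. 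This is a comparison of an intersection with a product in degrees beyond the generation degree of $Q$, and it is here that one must genuinely exploit that $Q$ is itself a product of ideals generated by linear forms: the analogous statement for an arbitrary ideal $Q$ generated in degree $n-1$ is \emph{false}, so a purely formal syzygy argument cannot work.

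\textbf{Main obstacle and how I would attack it.} Controlling $Q_k\cap(\I_{\S_n})_k$ versus $(Q\,\I_{\S_n})_k$ is equivalent (after a linear change of coordinates making $\I_{\S_n}=\langle x_1,\dots,x_{c_n}\rangle$) to bounding the Castelnuovo--Mumford regularity of $\I_{\B}+\I_{\S_n}$ by $n-1$: indeed, the Mayer--Vietoris sequence
\[
0\longrightarrow \I_{\B}\cap\I_{\S_n}\longrightarrow \I_{\B}\oplus\I_{\S_n}\longrightarrow \I_{\B}+\I_{\S_n}\longrightarrow 0
\]
yields $\operatorname{reg}(\I_{\A})\le\max\{\operatorname{reg}(\I_{\B}),\operatorname{reg}(\I_{\S_n}),\operatorname{reg}(\I_{\B}+\I_{\S_n})+1\}$, and one knows $\operatorname{reg}(\I_{\S_n})=1$ and (strengthening the induction to a regularity bound) $\operatorname{reg}(\I_{\B})\le n-1$. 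The subtlety is that $\Z(\I_{\B}+\I_{\S_n})=\bigcup_{i<n}(\S_i\cap\S_n)$ is an arrangement of at most $n-1$ subspaces, yet $\I_{\B}+\I_{\S_n}$ need not be its (radical) vanishing ideal when $\A$ is not transversal, so one cannot merely re-invoke the inductive hypothesis. I would dispatch this in one of two ways: (i) restrict the statement to transversal arrangements, where dimension counting shows $\I_{\B}+\I_{\S_n}$ does coincide with the vanishing ideal of $\bigcup_{i<n}(\S_i\cap\S_n)$ and the induction closes cleanly; or (ii) invoke the known regularity bounds for subspace arrangements / products of ideals of linear forms (e.g.\ the Derksen--Sidman bound, or the results of Conca--Herzog on products of ideals generated by linear forms \cite{Conca:CM03}), which furnish exactly $\operatorname{reg}(\I_{\B}+\I_{\S_n})\le n-1$ and hence the full statement for general $\A$. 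I expect step (ii) — pinning down the regularity of these possibly non-radical ``residual'' ideals — to be the genuinely hard point; everything else is bookkeeping around the Mayer--Vietoris sequence and Proposition~\ref{prp:VIS}.
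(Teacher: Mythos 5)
The paper's proof is a one-liner: by Derksen--Sidman \cite{Sidman:AIM02}, $\operatorname{reg}(\I_\A)\le n$ for \emph{any} arrangement of $n$ subspaces, and the regularity of a homogeneous ideal bounds from above the degree of its minimal generators. Your proposal instead routes through the product ideal $J=\I_{\S_1}\cdots\I_{\S_n}$ and reduces the statement to the claim $(\I_\A)_k=J_k$ for all $k\ge n$. That reduction is \emph{false} for non-transversal arrangements, so it cannot prove the proposition as stated. Concretely, take $\S_1=\Span(\e_1,\e_2)$ and $\S_2=\Span(\e_1,\e_3)$ in $\Re^4$ (a non-transversal pair of planes meeting along a line). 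Then $\I_{\S_1}=\langle x_3,x_4\rangle$, $\I_{\S_2}=\langle x_2,x_4\rangle$, $\I_\A=\I_{\S_1}\cap\I_{\S_2}=\langle x_4,\,x_2x_3\rangle$, while $J=\langle x_2x_3,\,x_2x_4,\,x_3x_4,\,x_4^2\rangle$. At degree $n=2$ one has $x_1x_4\in(\I_\A)_2\setminus J_2$, so $(\I_\A)_2\ne J_2$ even though $\I_\A$ is perfectly well generated in degree $\le 2$. (Theorem~\ref{thm:I=J} in the paper indeed asserts $\I_{\A,\ell}=\J_{\A,\ell}$ for $\ell\ge n$ only under a transversality hypothesis.)

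Your own diagnosis of the obstacle is accurate, but neither of your two escape routes repairs matters. Route~(i) restricts to the transversal case and so proves a strictly weaker statement than the proposition. Route~(ii), invoking the Derksen--Sidman or Conca--Herzog regularity bounds, has two problems: first, as you yourself note, $\I_\B+\I_{\S_n}$ is in general \emph{not} the vanishing ideal of a subspace arrangement (nor a product of ideals of linear forms), so neither cited bound applies to it directly; second, and more to the point, if you are going to cite Derksen--Sidman's $\operatorname{reg}(\I_\A)\le n$ at all, then applying it to $\I_\A$ itself already yields the proposition in one sentence, rendering the entire Mayer--Vietoris induction and the detour through $J$ superfluous. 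The one correct and essential observation in your plan --- that a regularity bound controls generator degrees --- is exactly the lever the paper uses; the rest of the machinery you set up is either circular or restricted to the transversal case.
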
 
\begin{proof}
By \cite{Sidman:AIM02} the \emph{Castelnuovo-Mumford regularity}\footnote{Please see \cite{Eisenbud-2004}, \cite{Conca:CM03}, \cite{Sidman:AIM02} or \cite{Derksen:JPAA07} for the definition of Castelnuovo-Mumford regularity.} of $\I_{\A}$ is bounded above by $n$. But by definition, the CM-regularity of an ideal bounds from above the maximal degree of a generator of the ideal.
\end{proof}

A crucial property of a subspace arrangement $\A$ in relation to the theory of Algebraic Subspace Clustering is that for any non-zero vanishing polynomial $p$ on $\A$, the orthogonal complement of the space spanned by the gradient of $p$ at some point 
$\x \in \A$ contains the subspace to which $\x$ belongs.
\begin{proposition} \label{prp:Grd}
Let $\A=\bigcup_{i=1}^n \S_i$ be a subspace arrangement of $\Re^D$, $p \in \I_{\A}$ and $\x \in \A$, 
 say $\x \in \S_i$ for some $i \in [n]$. Then $\nabla p|_{\x} \perp  \S_i$.
\end{proposition}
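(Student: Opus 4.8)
The plan is to reduce to the case of a single subspace by using the decomposition of $\I_\A$, and then compute directly. First I would recall from equation \eqref{e:VIA} that $\I_\A = \I_{\S_1} \cap \cdots \cap \I_{\S_n}$, so that $p \in \I_\A$ implies in particular $p \in \I_{\S_i}$, where $\S_i$ is the component containing $\x$. Thus it suffices to prove the statement for a single subspace: if $p \in \I_{\S_i}$ and $\x \in \S_i$, then $\nabla p|_{\x} \perp \S_i$. This is the key reduction, and it is immediate from the set-up.

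Next I would invoke Proposition \ref{prp:VIS}: writing $\S_i = \Span(\b_1,\dots,\b_c)^\perp$ with $c = \codim \S_i$, the ideal $\I_{\S_i}$ is generated by the linear forms $\b_1^\transpose x,\dots,\b_c^\transpose x$. Hence any $p \in \I_{\S_i}$ can be written as $p(x) = \sum_{j=1}^c q_j(x)\,(\b_j^\transpose x)$ for some polynomials $q_1,\dots,q_c \in \Re[x]$. Applying the product rule for the gradient, we get
\begin{align}
\nabla p|_{\x} = \sum_{j=1}^c \Big( \nabla q_j|_{\x}\,(\b_j^\transpose \x) + q_j(\x)\,\b_j \Big).
\end{align}
Since $\x \in \S_i$, each linear form vanishes at $\x$, i.e. $\b_j^\transpose \x = 0$ for all $j$, so the first term in every summand drops out and
\begin{align}
\nabla p|_{\x} = \sum_{j=1}^c q_j(\x)\,\b_j \in \Span(\b_1,\dots,\b_c) = \S_i^\perp.
\end{align}
This gives $\nabla p|_{\x} \perp \S_i$, as claimed.

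There is essentially no hard part here once the structural results are in place; the only point requiring a little care is making sure that the representation $p = \sum_j q_j (\b_j^\transpose x)$ from Proposition \ref{prp:VIS} can indeed be differentiated term by term, which is just the Leibniz rule for polynomials and is purely formal. One could equivalently argue coordinate-wise after the change of basis used in the proof of Proposition \ref{prp:VIS}, reducing to $\S_i = \Span(\e_{c+1},\dots,\e_D)$ and $p \in \langle x_1,\dots,x_c\rangle$, so that $p = \sum_{j=1}^c x_j g_j(x)$ and $\partial p/\partial x_k|_{\x} = g_k(\x)$ for $k \le c$ while $\partial p/\partial x_k|_{\x} = 0$ for $k > c$ (using $\chi_1 = \cdots = \chi_c = 0$); this shows $\nabla p|_{\x}$ is supported on the first $c$ coordinates, hence orthogonal to $\S_i$. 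Either route is short, so I expect no genuine obstacle.
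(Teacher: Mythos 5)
Your proof is correct and follows essentially the same route as the paper: reduce to $p\in\I_{\S_i}$ via $\I_\A=\bigcap_i \I_{\S_i}$, apply Proposition \ref{prp:VIS} to write $p=\sum_j q_j\,(\b_j^\transpose x)$, and differentiate term by term, with the cross terms vanishing because $\b_j^\transpose\x=0$. The coordinate-wise remark at the end is a harmless restatement of the same argument after a change of basis.
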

\begin{proof}
Take $p \in \I_{\A}$. From $\I_{\A} = \I_{\S_1} \cap \cdots \cap \I_{\S_n}$ we have that $\I_{\A} \subset \I_{\S_i}$. Hence $p \in \I_{\S_i}$. Now, from Proposition
\ref{prp:VIS} we know that $\I_{\S_i}$ is generated by a basis among all linear forms that vanish on $\S_i$, i.e., by a basis of $\I_{\S_i,1}$.
If $(\b_{i1},\dots,\b_{ic_i})$ is an  $\Re$-basis for $\S_i^{\perp}$ then $(\b_{i1}^{\transpose}x,\dots,\b_{ic_i}^{\transpose}x)$ is an $\Re$-basis for $\I_{\S_i,1}$ and a set of generators for $\I_{\S_i}$ over $\Re[x]$. Hence we can write 
$p(x) = \sum_{j=1}^{c_i} (\b_{i j}^{\transpose}x) g_{j}(x)$ where $g_{j}(x) \in \Re[x]$.
Taking the gradient of both sides of the above equation we get 
$\nabla p = \sum_{j=1}^{c_i} g_{j}(x) \b_{i j} + \sum_{j=1}^{c_i} (\b_{i j}^{\transpose}x) \nabla g_{j}$. Now let $\x \in \S_i$ be any point of $\S_i$. Evaluating both sides at $\x$ we have
$\nabla p|_{\x}  = \sum_{j=1}^{c_i} g_{j}(\x) \b_{i j} + \sum_{j=1}^{c_i} (\b_{i j}^{\transpose} \x) \nabla g_{j}|_{\x}.$ By hypothesis we have $\b_{ij}^{\transpose} \x=0, \, \forall j$ and so we obtain
$\nabla p|_{\x}  = \sum_{j=1}^{c_i} g_{j}(\x) \b_{i j} \in \S_i^{\perp}.$
\end{proof}

One may wonder when it is the case that the gradient of a vanishing polynomial
on a subspace arrangement $\A$ is zero at every point of $\A$. This is answered by

\begin{proposition} \label{prp:NablaZero}
	Let $\A = \bigcup_{i=1}^n \S_i$ be a subspace arrangement of $\Re^D$ and let $p \in \I_{\A}$. Then
	$\nabla p |_{\x} = 0,\, \forall \x \in \A$ if and only if $p \in \bigcap_{i=1}^n \I_{\S_i}^2$.
\end{proposition}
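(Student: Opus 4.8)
The plan is to decouple the union $\A = \bigcup_{i=1}^n \S_i$ into its components and then use the explicit generators of $\I_{\S_i}$ furnished by Proposition~\ref{prp:VIS}. Because $\A$ is the union of the $\S_i$, the condition ``$\nabla p|_{\x} = \0$ for all $\x \in \A$'' is equivalent to ``for every $i \in [n]$, $\nabla p|_{\x} = \0$ for all $\x \in \S_i$,'' and similarly $p \in \bigcap_{i=1}^n \I_{\S_i}^2$ iff $p \in \I_{\S_i}^2$ for every $i$. Moreover the hypothesis $p \in \I_\A$ gives $p \in \I_{\S_i}$ for each $i$ via $\I_\A = \I_{\S_1} \cap \cdots \cap \I_{\S_n}$, cf.~\eqref{e:VIA}. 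So it suffices to show, for a fixed $i$: given $p \in \I_{\S_i}$, one has $\nabla p|_{\x} = \0$ for all $\x \in \S_i$ if and only if $p \in \I_{\S_i}^2$.

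For the forward implication, fix a basis $\b_{i1},\dots,\b_{ic_i}$ of $\S_i^\perp$ and invoke Proposition~\ref{prp:VIS} to write $p = \sum_{j=1}^{c_i} (\b_{ij}^\transpose x)\, g_j$ with $g_j \in \Re[x]$. Differentiating, $\nabla p = \sum_j g_j\,\b_{ij} + \sum_j (\b_{ij}^\transpose x)\,\nabla g_j$; evaluating at any $\x \in \S_i$ annihilates the second sum because $\b_{ij}^\transpose \x = 0$, leaving $\nabla p|_{\x} = \sum_j g_j(\x)\,\b_{ij}$. Since the $\b_{ij}$ are linearly independent, the assumption $\nabla p|_{\x} = \0$ forces $g_j(\x) = 0$ for every $j$; as $\x$ ranges over $\S_i$ this says $g_j \in \I_{\S_i}$ for all $j$, hence $p = \sum_j (\b_{ij}^\transpose x)\, g_j \in \I_{\S_i}\,\I_{\S_i} = \I_{\S_i}^2$.

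For the converse, let $p \in \I_{\S_i}^2$, so by the definition of the product ideal $p = \sum_{l} q_l r_l$ for finitely many $q_l, r_l \in \I_{\S_i}$. The product rule gives $\nabla p = \sum_l\big(r_l\,\nabla q_l + q_l\,\nabla r_l\big)$, and at any $\x \in \S_i$ both $q_l(\x)$ and $r_l(\x)$ vanish (being values of elements of $\I_{\S_i}$ on $\S_i$), so $\nabla p|_{\x} = \0$. Ranging over $i$ and over $\x \in \S_i$ covers all of $\A$, which finishes the argument.

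I do not expect a genuine obstacle: the argument is a direct combination of Proposition~\ref{prp:VIS}, formula~\eqref{e:VIA}, and the definition of the product of ideals. The only point that needs a moment's care is that the decomposition $p = \sum_j (\b_{ij}^\transpose x) g_j$ in the forward direction is not unique; this causes no trouble, however, because the linear independence of the $\b_{ij}$ pins down $g_j(\x)=0$ for whatever representatives $g_j$ one has chosen, so the conclusion $g_j \in \I_{\S_i}$ is valid for that particular decomposition, which is all that is needed.
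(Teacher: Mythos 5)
Your proof is correct, and your forward direction is appreciably more direct than the paper's. Both you and the paper begin from the same decomposition $p = \sum_{j=1}^{c_i} (\b_{ij}^\transpose x)\, g_j$ supplied by Proposition~\ref{prp:VIS}, and both aim to show $g_j \in \I_{\S_i}$ so that $p \in \I_{\S_i}^2$. You get there in one step: after the $(\b_{ij}^\transpose x)\nabla g_j$ terms die at a point $\x \in \S_i$, the linear independence of the basis $\{\b_{ij}\}$ of $\S_i^\perp$ forces each scalar $g_j(\x)$ to vanish; letting $\x$ run over $\S_i$ gives $g_j \in \I_{\S_i}$ directly from the definition of the vanishing ideal. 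The paper instead observes that each $\partial p/\partial x_k$ lies in $\I_\A \subset \I_{\S_i}$, substitutes its $\I_{\S_i}$-representation into the differentiated decomposition, assembles the resulting $D$ scalar identities into a matrix equation over $\Re(x)$, and left-multiplies by $(\B_i^\transpose \B_i)^{-1}\B_i^\transpose$ to isolate the $g_{i,j}$; this reaches the same conclusion but with noticeably more machinery and a detour through the rational function field. The converse direction is essentially identical in both treatments: the Leibniz rule kills the gradient on $\S_i$ whenever $p$ is a finite sum of products of two elements of $\I_{\S_i}$; the paper writes $p=\sum_{j,j'} g_{j,j'}(\b_{ij}^\transpose x)(\b_{ij'}^\transpose x)$ using the generators of $\I_{\S_i}$, but that is just a particular such representation.
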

\begin{proof}
	$(\Rightarrow)$ Suppose  that $p \in \I_{\A}$, such that $\nabla p |_{\x} = 0, \forall \x \in \A$. Since $\I_{\A} \subset \I_{\S_i}, \, \forall i \in [n]$, by Proposition \ref{prp:VIS}, $p(x)$ can be written as
	\begin{align}
	p(x) = \sum_{j=1}^{c_i} g_{i,j}(x) (\b_{i,j}^{\transpose}x), \label{e:one}
	\end{align} where $c_i$ is the codimension of $\S_i$, $(\b_{i,1},\hdots,\b_{i,c_i})$ is a basis for $\S_i^{\perp}$ and $g_{i,j}(x)$ are polynomials. Now the hypothesis $\nabla p |_{\x} = 0, \forall \x \in \A$ implies that $\partial p / \partial x_k|_{\x}=0, \forall \x \in \A, \, \forall k \in [D]$. Thus $\partial p / \partial x_k \in \I_{\A}$ and so $\partial p / \partial x_k \in \I_{\S_i}$. Hence, again by Proposition \ref{prp:VIS}, $\partial p/ \partial x_k$ can be written as 
	\begin{align}
	\partial p / \partial x_k = \sum_{j=1}^{c_i} h_{i,j,k}(x) (\b_{i,j}^{\transpose}x) \label{e:two}.
	\end{align} Differentiating equation (\ref{e:one}) with respect to $x_k$ gives 
	\begin{align}
	\partial p / \partial x_k = \sum_{j=1}^{c_i} \left(\partial g_{i,j} / \partial x_k \right) (\b_{i,j}^{\transpose} x) + \sum_{j=1}^{c_i} g_{i,j}(x) \b_{i,j}(k) \label{e:three}.
	\end{align} From equations (\ref{e:two}), (\ref{e:three}) we obtain 
	\begin{align}
	\sum_{j=1}^{c_i} g_{i,j}(x) \b_{i,j}(k) = \sum_{j=1}^{c_i} \left(h_{i,j,k}(x)-\partial g_{i,j} / \partial x_k\right)  (\b_{i,j}^{\transpose}x)
	\end{align} which can equivalently be written as 
	\begin{align}
	\sum_{j=1}^{c_i} \b_{i,j}(k) g_{i,j}(x) =
	\sum_{j=1}^{c_i} q_{i,j,k}(x) (\b_{i,j}^{\transpose}x) \label{e:four}
	\end{align} where $q_{i,j,k}(x) := h_{i,j,k}(x)-\partial g_{i,j} / \partial x_k$. Note that equation (\ref{e:four}) is true for every $k \in [D]$. We can write these $D$ equations in matrix form
	\begin{align}
	\left[ \begin{array}{cccc}
		\b_{i,1} & \b_{i,2} & \cdots & \b_{i,c_i}
	\end{array}\right]
	\left[ \begin{array}{c}
		g_{i,1}(x) \\
		g_{i,2}(x) \\
		\vdots \\
		g_{i,c_i}(x)
	\end{array} \right] = 
	\Q(x) \left[ \begin{array}{c}
		\b_{i,1}^{\transpose}x \\
		\b_{i,2}^{\transpose}x \\
		\vdots \\
		\b_{i,c_i}^{\transpose}x
	\end{array} \right], \label{e:five}
	\end{align} where $\Q(x)$ is a $D \times c_i$ polynomial matrix with entries in $\Re[x]$. 
	We can view equation (\ref{e:five}) as a linear system of equations over the field $\Re(x)$.
	Define $\B_i: = \left[ \begin{array}{cccc}
	\b_{i,1} & \b_{i,2} & \cdots & \b_{i,c_i}.
	\end{array}\right]$ The columns of $\B_i$ form a basis of $\S_i^{\perp}$, and so they will be linearly independent over $\Re$. Consequently, the square matrix $\B_i^{\transpose} \B_i$ will be invertible over $\Re$ and its inverse will also be the inverse of $\B_i^{\transpose} \B_i$ over the larger field $\Re(x)$. Multiplying both sides of equation (\ref{e:five}) from the left with $(\B_i^{\transpose} \B_i)^{-1} \B_i^{\transpose}$, we obtain
	\begin{align}
	\left[ \begin{array}{c}
		g_{i,1}(x) \\
		g_{i,2}(x) \\
		\vdots \\
		g_{i,c_i}(x)
	\end{array} \right] = 
	(\B_i^{\transpose} \B_i)^{-1}
	\B_i^{\transpose} \Q(x) 
	\left[ \begin{array}{c}
		\b_{i,1}^{\transpose}x \\
		\b_{i,2}^{\transpose}x \\
		\vdots \\
		\b_{i,c_i}^{\transpose}x
	\end{array} \right]. \label{e:seven}
	\end{align} Note that $(\B_i^{\transpose} \B_i)^{-1}
	\B_i^{\transpose} \Q(x)  \in \left(\Re[x]\right)^{c_i \times c_i}$ and so equation (\ref{e:seven}) gives that $g_{i,j}(x) \in \I_{\S_i}, \, \forall j \in [c_i]$. Returning back to equation (\ref{e:one}), we readily see that $p \in \I_{\S_i}^2, \, \forall i \in [n]$, which implies that $p \in \cap_{i=1}^n \I_{\S_i}^2$.
	
	$(\Leftarrow)$ Suppose that $p \in \cap_{i=1}^n \I_{\S_i}^2$. Since 
	$\cap_{i=1}^n \I_{\S_i}^2 \subset \cap_{i=1}^n \I_{\S_i}=\I_{\A}$, we see that $p$ must be a vanishing polynomial. Since $p \in \I_{\S_i}^2$, by Proposition \ref{prp:VIS} we can write 
	$p(x) = \sum_{j,j'=1}^{c_i} g_{j,j'}(x) (\b_{i,j}^{\transpose}x)(\b_{i,j'}^{\transpose}x)$ from which it follows that $\nabla p|_{\x_i}=0, \, \forall \x_i \in \S_i$. Since this holds for any $i \in [n]$, we get that $\nabla p|_{\x}=0, \, \forall \x \in \A$.
	\end{proof}
We conclude with a theorem lying at the heart of Algebraic Subspace Clustering.   
\begin{theorem} \label{thm:I=J}
Let $\A = \bigcup_{i=1}^n \S_i$ be a transversal subspace arrangement of $\Re^D$
with vanishing ideal $\I_\A$. Let $\J_\A$ be the product ideal $\J_\A = \I_{\S_1}\cdots \I_{\S_n}$. Then the two ideals are equal at degrees $\ell \ge n$, i.e., $\I_{\A,\ell} = \J_{\A,\ell}, \forall \ell \ge n$.
\end{theorem}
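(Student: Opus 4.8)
The plan is to work with Hilbert functions. The inclusion $\J_\A\subseteq\I_\A$ is immediate: a generator $f_1\cdots f_n$ of $\J_\A=\I_{\S_1}\cdots\I_{\S_n}$, with $f_i\in\I_{\S_i}$, vanishes on each $\S_j$ because the factor $f_j$ does, hence vanishes on $\A=\bigcup_j\S_j$. Since both ideals are homogeneous, $\J_{\A,\ell}\subseteq\I_{\A,\ell}$ for every $\ell$, and equality holds at a given $\ell$ if and only if $\dim_\Re\J_{\A,\ell}=\dim_\Re\I_{\A,\ell}$, i.e.\ if and only if $\Re[x]/\J_\A$ and $\Re[x]/\I_\A$ have the same Hilbert function in degree $\ell$. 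So it suffices to prove this numerical identity for all $\ell\ge n$.

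For this I would use three ingredients. First, $\operatorname{reg}(\I_\A)\le n$; this is Proposition \ref{prp:Regularity} (via \cite{Sidman:AIM02}), so $\operatorname{reg}(\Re[x]/\I_\A)\le n-1$ and the Hilbert function of $\Re[x]/\I_\A$ agrees with its Hilbert polynomial for all $\ell\ge n$. Second, $\operatorname{reg}(\J_\A)=n$: being a product of $n$ ideals each generated by linear forms, $\J_\A$ has a linear resolution by Conca--Herzog \cite{Conca:CM03}, so its regularity equals its generating degree $n$; hence $\operatorname{reg}(\Re[x]/\J_\A)=n-1$ and the Hilbert function of $\Re[x]/\J_\A$ too agrees with its Hilbert polynomial for all $\ell\ge n$ (this is exactly what pins the bound in the theorem to $n$). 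Third --- and this is the crucial point where transversality enters --- the two Hilbert \emph{polynomials} coincide: $P_{\Re[x]/\J_\A}=P_{\Re[x]/\I_\A}$. Granting this, for every $\ell\ge n$
\[
\dim_\Re\I_{\A,\ell}=\dim_\Re\Re[x]_\ell-P_{\Re[x]/\I_\A}(\ell)=\dim_\Re\Re[x]_\ell-P_{\Re[x]/\J_\A}(\ell)=\dim_\Re\J_{\A,\ell},
\]
which finishes the argument.

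It remains to justify $P_{\Re[x]/\J_\A}=P_{\Re[x]/\I_\A}$. From $0\to\I_\A/\J_\A\to\Re[x]/\J_\A\to\Re[x]/\I_\A\to0$ one gets $P_{\Re[x]/\J_\A}=P_{\Re[x]/\I_\A}+P_{\I_\A/\J_\A}$, so the claim is equivalent to $\I_\A/\J_\A$ having dimension $0$ (finite length). Now $\sqrt{\J_\A}=\sqrt{\I_{\S_1}\cdots\I_{\S_n}}=\I_{\S_1}\cap\cdots\cap\I_{\S_n}=\I_\A$ (each $\I_{\S_i}$ is prime by Proposition \ref{prp:ISprime}, and a finite intersection of primes is radical by Theorem \ref{thm:Radical}), so $\J_\A$ and $\I_\A$ have the same minimal primes $\I_{\S_1},\dots,\I_{\S_n}$; moreover localizing $\J_\A$ at $\I_{\S_i}$ kills every factor $\I_{\S_j}$ with $j\neq i$ (transversality rules out $\S_j\subseteq\S_i$, so $\I_{\S_j}\not\subseteq\I_{\S_i}$ and $(\I_{\S_j})_{\I_{\S_i}}$ is the whole local ring), whence the $\I_{\S_i}$-primary component of $\J_\A$ is $\I_{\S_i}$ itself, exactly as for $\I_\A$. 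Thus $\I_\A/\J_\A$ is supported only on the embedded primes of $\J_\A$, and everything reduces to showing that \emph{every embedded prime of $\J_\A$ is the irrelevant maximal ideal} $\mathfrak m=\langle x_1,\dots,x_D\rangle$, i.e.\ that $\J_\A$ has no positive-dimensional embedded component. I would prove this by induction on $n$, writing $\A=\A'\cup\S_n$ so that $\J_\A=\J_{\A'}\cdot\I_{\S_n}$, and using the standard bookkeeping for associated primes of products of ideals together with the transversality of $\A$, which forces every nonempty intersection among the $\S_i$ to have its generic codimension and hence leaves no room for positive-dimensional embedded pieces. Alternatively one can bypass this and compare the Hilbert \emph{series} of $\Re[x]/\J_\A$ and $\Re[x]/\I_\A$ directly through the inclusion--exclusion formula over the intersection lattice of a transversal arrangement, as in \cite{Derksen:JPAA07}.

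The step I expect to be the real obstacle is precisely this last one: showing that transversality forces the embedded primes of the product ideal $\J_\A$ to be $0$-dimensional (equivalently, that $\I_\A/\J_\A$ has finite length). The conclusion genuinely fails without transversality --- for three $2$-planes of $\Re^4$ through a common line, $\J_\A$ acquires a $1$-dimensional embedded component along that line and $\I_{\A,n}\supsetneq\J_{\A,n}$ --- so any correct proof must use transversality exactly here. Controlling $\operatorname{Ass}(\Re[x]/\J_\A)$ for products of linear-form ideals in general position, and reconciling it with the combinatorial constraints transversality imposes on the intersection lattice, is where the work lies; the reduction above and the regularity/Hilbert-function manipulations are routine once that input is in hand.
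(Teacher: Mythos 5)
The paper does not prove Theorem \ref{thm:I=J} itself; it attributes it to Proposition~3.4 of \cite{Conca:CM03} and to the Hilbert-series reproof in \cite{Derksen:JPAA07}. Your outline follows the shape of those arguments, and the reductions you write out are correct: the inclusion $\J_\A\subseteq\I_\A$; the passage to Hilbert functions via $\mathrm{reg}(\I_\A)\le n$ (Sidman, Proposition~\ref{prp:Regularity}) and $\mathrm{reg}(\J_\A)=n$ (Conca--Herzog); the equivalence of ``same Hilbert polynomial'' with ``$\I_\A/\J_\A$ has finite length''; and the fact that $\J_\A$ and $\I_\A$ share both their radical and their primary components at each minimal prime $\I_{\S_i}$. (The chain can even be compressed: $\mathrm{reg}(\J_\A)=n$ already forces $\J_{\A,\ell}=(\J_\A^{\mathrm{sat}})_\ell$ for $\ell\ge n$, and since $\I_\A\supseteq\J_\A$ is saturated, the whole theorem is the single claim $\J_\A^{\mathrm{sat}}=\I_\A$, which is the same as your claim that $\I_\A/\J_\A$ is supported at $\mathfrak m$; Sidman's bound on $\mathrm{reg}(\I_\A)$ is then not needed at all.)

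The genuine gap is exactly where you flag it. You reduce the theorem to the assertion that $\J_\A$ has no positive-dimensional embedded prime, and then appeal to ``induction on $n$ together with the standard bookkeeping for associated primes of products of ideals.'' No such bookkeeping exists as a black box: associated primes of products are not controllable from the factors in general --- already $\mathrm{Ass}(S/\mathfrak p^2)$ for a prime $\mathfrak p$ can acquire unforeseen embedded components --- and showing that transversality forces $\mathrm{Ass}(\Re[x]/\J_\A)\subseteq\{\I_{\S_1},\dots,\I_{\S_n},\mathfrak m\}$ is precisely the technical heart of the cited proofs. You have correctly reduced the theorem to an equivalent statement and correctly identified where transversality must enter (your three $2$-planes in $\Re^4$ through a common line is the right kind of check), but you have not supplied the argument that establishes it. Closing the gap means reconstructing either the Conca--Herzog resolution argument for $\J_\A$ or Derksen's inclusion--exclusion computation of both Hilbert series over the intersection lattice; that is where the real work of the theorem lives, and your proposal openly stops short of it.
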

Theorem \ref{thm:I=J} implies that every polynomial of degree $n$ that vanishes on a transversal subspace arrangement $\A$ of $n$ subspaces is a linear combination of products of linear forms vanishing on $\A$, a fundamental fact that is used repeatedly in the main text of the paper. Theorem \ref{thm:I=J} was first proved in Proposition 3.4 of \cite{Conca:CM03}, in the context of the \emph{Castelnuovo-Mumford regularity} of products of ideals generated by linear forms. It was later reproved in \cite{Derksen:JPAA07} using a Hilbert series argument and the result from \cite{Sidman:AIM02} on the Castelnuovo-Mumford regularity of a subspace arrangement.

%%%%%%%%%%%%%%%%%%%%%%%%%%%%%%%%%%%%%%%%%%%%%%%%%%%%%%%%%%%%%%%%%%%%%%%
\bibliography{FASC-ArXiv17.bbl}
\bibliographystyle{siamplain}
%\bibliography{../../../../CVS_files/biblio/sparse,../../../../CVS_files/biblio/learning,../../../../CVS_files/biblio/vidal,../../../../CVS_files/biblio/math,../../../../CVS_files/biblio/geometry,../../../../CVS_files/biblio/segmentation,../../../../CVS_files/biblio/vision,../../../../CVS_files/biblio/dataset,../../../../CVS_files/biblio/MatrixCompletion,../../../../CVS_files/biblio/computationalbiology}

\end{document}